\documentclass{article}

\usepackage[preprint]{neurips_2025}
\usepackage[utf8]{inputenc} % allow utf-8 input
\usepackage[T1]{fontenc}    % use 8-bit T1 fonts
\usepackage{hyperref}       % hyperlinks
\usepackage{url}            % simple URL typesetting
\usepackage{booktabs}       % professional-quality tables
\usepackage{caption}
\usepackage{amsfonts}       % blackboard math symbols
\usepackage{nicefrac}       % compact symbols for 1/2, etc.
\usepackage{microtype}      % microtypography
\usepackage{xcolor}         % colors
\usepackage{pifont}         % \ding symbols
\usepackage{mathtools}       % coloneqq, etc.

\definecolor{NeuripsDarkBlue}{RGB}{0, 51, 153}
\hypersetup{
  colorlinks=true,
  citecolor=NeuripsDarkBlue,
  linkcolor=NeuripsDarkBlue,
  urlcolor=NeuripsDarkBlue,
  hypertexnames=false
}
\setcitestyle{round}
\usepackage{algorithm}

\usepackage{algorithmic}
\usepackage{amsthm}
\usepackage{thmtools, thm-restate}

\newcommand{\mc}{\mathcal}

\newcommand{\A}{\mathcal{A}}
\newcommand{\Jset}{\mathcal{J}}
\newcommand{\Sset}{\mathcal{S}}

\usepackage{amsmath}
\DeclareMathOperator*{\argmax}{arg\,max}

\usepackage{appendix}
\usepackage{titletoc}
\titlecontents{section}
  [0em]
  {\small\bfseries}
  {\contentslabel{2.3em}}
  {}
  {\hfill\contentspage}
\titlecontents{subsection}
  [2.3em]
  {\small}
  {\contentslabel{2.8em}}
  {}
  {\hfill\contentspage}

\usepackage{amssymb}
\usepackage{tikz}
\usetikzlibrary{arrows.meta,positioning,calc,fit,shapes.misc,shapes.geometric,backgrounds,calc,bayesnet}
\usepackage[capitalize,noabbrev]{cleveref}
\usepackage{enumitem}
\usepackage{multirow}
\makeatletter
\renewcommand{\paragraph}{%
  \@startsection{paragraph}{4}{\z@}%
                {0.1ex}% space before
                {-0.5em}%
                {\normalsize\bfseries}%
}
\makeatother
\newtheoremstyle{tightthm}%
  {0pt}% Space above
  {0pt}% Space below
  {\itshape}% Body font
  {}% Indent amount
  {\bfseries}% Head font
  {.}% Punctuation after head
  {0.5em}% Space after head
  {}% Head spec

\newtheoremstyle{tightdef}%
  {0pt}% Space above
  {0pt}% Space below
  {\normalfont}% Body font (upright for definitions)
  {}% Indent amount
  {\bfseries}% Head font
  {.}% Punctuation after head
  {0.5em}% Space after head
  {}% Head spec

\newcommand{\BlackBox}{\rule{1.5ex}{1.5ex}}  % end of proof
\ifdefined\proof
    \renewenvironment{proof}{\par\noindent{\bf Proof\ }}{\hfill\BlackBox\par\medskip}
\else
    \newenvironment{proof}{\par\noindent{\bf Proof\ }}{\hfill\BlackBox\par\medskip}
\fi
\theoremstyle{tightthm}

\newtheorem{theorem}{Theorem}

\newtheorem{proposition}[theorem]{Proposition}

\newtheorem{corollary}[theorem]{Corollary}

\theoremstyle{tightdef}
\newtheorem{definition}[theorem]{Definition}
\definecolor{lightgray}{RGB}{240, 240, 240}
\definecolor{bordergray}{RGB}{180, 180, 180}

\definecolor{routerblue}{RGB}{218, 232, 252}
\definecolor{routerborder}{RGB}{108, 142, 191}

\definecolor{scorergreen}{RGB}{213, 232, 212}
\definecolor{scorerborder}{RGB}{130, 179, 102}

\definecolor{expertorange}{RGB}{255, 235, 204}
\definecolor{expertborder}{RGB}{215, 155, 0}

\definecolor{advicepurple}{RGB}{225, 213, 231}
\definecolor{adviceborder}{RGB}{150, 115, 166}

\definecolor{lossred}{RGB}{248, 206, 204}
\definecolor{lossborder}{RGB}{184, 84, 80}
\title{Beyond Augmented-Action Surrogates for Multi-Expert Learning-to-Defer}

\author{
  Yannis Montreuil \\
  School of Computing\\
  National University of Singapore\\
  Singapore, 117418 \\
  \texttt{yannis.montreuil@u.nus.edu} \\
   \And
    Axel Carlier \\
    Fédération ENAC ISAE-SUPAERO ONERA \\
    Université de Toulouse\\
    Toulouse, 31555, France \\
   \AND
    Lai Xing Ng \\
    Agency for Science, Technology and Research \\
    Institute for Infocomm Research \\
    Singapore, 138634 \\
    \And
    Wei Tsang Ooi \\
    School of Computing\\
    National University of Singapore\\
    Singapore, 117418 \\
    }

\begin{document}

\maketitle

%  1. Formalize the problem. Define the advice space, the joint policy (router + query), the true protocol loss, the population risk, and the Bayes-optimal joint policy.
%  2. Show the problem is strictly richer than L2D. Best-case: advice can only help (risk ≤ L2D risk). Worst-case: ignoring advice is always an option, so you never do worse. This justifies that the extension
%   is non-trivial and well-posed.
%  3. Surrogate design is necessary. The true loss has indicators → non-differentiable → need a surrogate. This is inherited from L2D but now the action space is larger.
%  4. The natural separated approach fails. Independent routing and query heads seem natural (mirrors the sequential protocol), but Fisher inconsistency kills it — even with 2 experts and binary advice.
%  5. The augmented formulation works. Treat each (expert, advice) pair as a single atomic action → reduces to cost-sensitive classification.
%
%  This way the narrative has a clean three-act structure:
%  - Act I (steps 1–2): The problem — what it is, why it matters
%  - Act II (step 3–4): The obstacle — why the obvious approach fails
%  - Act III (step 5): The resolution — the augmented surrogate and its guarantees

\begin{abstract}
A learning-to-defer (L2D) system decides, for each input, whether to predict
on its own or to hand it to one of several available experts. The very well
established recipe trains classifier and router jointly by treating the $K$
classes and $J$ experts as competing actions in one shared $(K{+}J)$-action
geometry. Subsequent work has proposed a series of incremental fixes within
this geometry; we show that each still suffers, to varying severity, from an
optimization-level pathology (target distortion, gradient amplification,
winner-take-all starvation, set-mass collapse, or class--expert coupling)
even under statistical consistency. We step
outside the augmented-action family entirely and propose a \emph{decoupled
surrogate}: a softmax classifier head and an independent sigmoid head per
expert, mirroring the two natural objects of the problem. We show that
per-sample updates are then coordinatewise and the class--expert Hessian
block is identically zero, and prove an excess-risk bound with calibration
constant $\max\{2\sqrt{2},\sqrt{2J/\lambda}\}$---to our knowledge the first
multi-expert L2D guarantee whose constant does not grow with the expert pool
when the per-expert weight is held fixed. On controlled synthetic studies and
on CIFAR-10, CIFAR-10H, and Covertype, it is the only method in our
comparison that remains stable as the expert pool grows, preserves rare
specialists, and improves over a standalone classifier on every real-data
benchmark.
\end{abstract}

\section{Introduction}

Learning-to-Defer (L2D) augments a classifier with the option of handing an
input to an expert
\citep{madras2018predict, mozannar2021consistent}. In the multi-expert setting,
this means deciding, for each example, whether to predict directly or defer to
one among several available experts. The Bayes rule has a transparent form: it
weighs two conditional quantities against each other --- the class posterior
$\eta_k(x)=\Pr(Y\!=\!k\mid X\!=\!x)$ and the expert utilities
$\alpha_j(x)=\Pr(M_j\!=\!Y\mid X\!=\!x)$ --- predicting when the best class
posterior exceeds the best expert utility and deferring otherwise. The
challenge is purely on the surrogate side: learning these two quantities, and
the comparison between them, faithfully from finite data.

Most existing multi-expert surrogates follow one common route. They introduce
an augmented action space containing the $K$ class actions and the $J$
defer-to-expert actions, learn a single score vector
$a(x)\in\mathbb{R}^{K+J}$ over that space, and derive both class and expert
supervision from those same shared scores
\citep{mozannar2021consistent, Verma2022LearningTD, Cao_Mozannar_Feng_Wei_An_2023, mao2025mastering, liu2026more}. This route has delivered statistically consistent surrogates, but
it is also a strong modeling choice. The Bayes rule compares a categorical
posterior with expert utilities, whereas the augmented-action reduction places
classes and experts inside one shared action geometry. Prior work already
shows that this geometry can produce unbounded and misaligned defer-probability
estimates \citep{Verma2022LearningTD, Cao_Mozannar_Feng_Wei_An_2023}, and that
adding experts can cause systematic underfitting as the surrogate reward
structure distorts the gradient budget \citep{liu2026more}. Existing fixes
restore the correct probability range or cap the reward to one expert, but they
address only one symptom at a time without examining the
\emph{optimization} path that leads there.

This is the distinction that motivates the paper. A surrogate can be judged
along two axes: what it learns at the population level, and how it distributes
supervision during training. Consistency tells us about the first axis: the
conditional minimizer recovers the Bayes decision. Bounded probability
estimation tells us that the learned scores live on the correct range. Neither
says anything about how overlap among experts reshapes the gradient budget of a
sample, whether correct but non-winning experts are actively suppressed, or
whether the class and expert updates remain entangled. In multi-expert L2D,
these local effects are consequences of the surrogate design itself, and they
become more pronounced as the number of experts increases.

Our analysis makes this concrete. For additive augmented cross-entropy,
the conditional target is distorted by the total expert utility
$\sum_j\alpha_j(x)$: the surrogate learns a flattened version of the
Bayes quantities rather than the quantities themselves
(Proposition~\ref{prop:additive-optimum}). At the sample level, the
gradient budget is amplified by $1+|\Jset(y,m)|$, where $\Jset(y,m)$ is
the realized set of correct experts
(Proposition~\ref{prop:additive-gradient}). The
distortion is therefore statistical and optimization-level at once. The loss
targets the wrong object, and it allocates more update mass to regions where
experts already agree. PiCCE~\citep{liu2026more} removes the amplification by
selecting a single correct expert per sample, but then pushes every other
correct expert \emph{down}, creating a starvation effect that can suppress rare
specialists (Proposition~\ref{prop:picce-starvation}).
\citet{mao2025mastering} avoid amplification
but learn only the total acceptable-set mass, with no ranking inside that set
(Proposition~\ref{prop:mao25-setmass}). A-SM
\citep{Cao_Mozannar_Feng_Wei_An_2023} corrects the target and produces bounded
estimates, but retains a class--expert gradient coupling that can be $O(J)$ (Proposition~\ref{prop:asm-coupling}). OvA
\citep{Verma2022LearningTD} fully decouples expert gradients, but treats the
class posterior as $K$ independent binary tasks rather than one categorical
distribution (Proposition~\ref{prop:ova-bernoulli}). The issue is therefore not an
isolated defect in any single loss: the augmented-action family
repeatedly trades one mismatch for another.

We propose a \emph{decoupled surrogate} that departs from the
augmented-action family entirely. The class head uses a softmax, producing
$p(x)\in\Delta^K$. Each expert head uses an independent sigmoid, producing
$u_j(x)\in(0,1)$. This factorization aligns
both axes at once and is, to our knowledge, the first multi-expert L2D
surrogate to simultaneously:
\emph{(i)}~identify $(\eta,\alpha)$ at the population optimum with a
  categorical class posterior $p\in\Delta^K$;
\emph{(ii)}~have fully decoupled, uniformly bounded optimization geometry
  (no amplification, no starvation, no class--expert coupling);
\emph{(iii)}~admit a
  consistency bound with calibration constant
  $\max\{2\sqrt{2},\sqrt{2J/\lambda}\}$, which is invariant in~$J$
  whenever the per-expert weight $\lambda/J$ is held fixed (equivalently,
  $\lambda$ scales linearly with~$J$), and equals $2\sqrt{2}$ whenever
  $\lambda/J\ge 1/4$.

\paragraph{Contributions.}
\begin{itemize}[itemsep=0.1pt,topsep=2pt,leftmargin=*]
  \item[--] We identify a common failure mode of augmented-action
  multi-expert surrogates. For five representative losses, we prove exact
  target and gradient/curvature statements showing how shared action geometry
  causes target distortion, gradient amplification, winner-take-all
  starvation, acceptable-set ranking loss, class--expert coupling, or
  improper class geometry (Section~\ref{sec:mismatch}).

  \item[--] We propose a decoupled surrogate that estimates the two Bayes
  objects on their native scales: a categorical class posterior and
  independent expert utilities. We prove that its population minimizer is
  exactly $(\eta,\alpha)$ and that its gradient and Hessian are
  block-decoupled and uniformly bounded (Section~\ref{sec:approach}).

  \item[--] We prove a quantitative consistency bound: decoupled surrogate
  excess risk controls L2D excess risk with calibration constant
  $\max\{2\sqrt{2},\sqrt{2J/\lambda}\}$. Hence the constant is independent
  of~$J$ when the per-expert weight $\lambda/J$ is fixed, unlike the
  available quantitative augmented-action bounds
  (Theorem~\ref{thm:dec-consistency}).

  \item[--] We validate the decoupled surrogate on synthetic benchmarks
  targeting each predicted pathology and on three real-data benchmarks,
  showing that the predicted pathologies appear in practice and that the
  decoupled surrogate remains stable as the expert pool grows
  (Section~\ref{sec:experiments}).
\end{itemize}

\section{Related Work}\label{sec:related}

Learning-to-defer extends selective prediction and classification with a
reject option \citep{Chow_1970, Bartlett_Wegkamp_2008, Geifman_El-Yaniv_2017, cortes, cao2022generalizing, cortes2024cardinalityaware} by allowing a learner
to route inputs to external experts rather than abstaining altogether. The
modern L2D formulation was introduced for single-expert systems by
\citet{madras2018predict} and \citet{mozannar2021consistent}, and has since
grown into a broader literature on surrogate design and statistical guarantees
\citep{Mao_Mohri_Zhong_2023, charusaie2022sample, Mozannar2023WhoSP, mao2024principledapproacheslearningdefer, mao2025mastering, montreuil2024twostagelearningtodefermultitasklearning, montreuil2026why, montreuil2026expertconditioned}.
Our consistency arguments are grounded in the $\mathcal{H}$-consistency bound program, whose guarantees now span learning to reject with a fixed predictor~\citep{mohri2024learningreject}, adversarial surrogates~\citep{awasthi2021calibrationconsistencyadversarialsurrogate, Grounded}, ranking~\citep{mao2023pairwisemisranking, mao2023rankingabstention}, structured prediction~\citep{mao2023structuredprediction}, and generalized-metric optimization and robust generative modeling~\citep{mohri2026generalized, mohri2026principled, cortes2026theoretical}.
Appendix~\ref{app:related-extended} gives an extended related-work discussion.

\paragraph{Surrogate design and underfitting.}
Early predictor-rejector frameworks separate the classification and rejection
functions, but \citet{ni2019calibration} show that such surrogates are
inconsistent in the multiclass case. \citet{mozannar2021consistent} introduce
the first consistent multiclass surrogate via an augmented action space that
jointly scores the $K$ class actions and $J$ defer-to-expert actions.
Subsequent work refines this shared-vector design.
\citet{Verma2022LearningTD} compare coupled softmax and one-vs-all (OvA)
parameterizations and show that the softmax defer-probability estimate is
unbounded, whereas OvA remains bounded.
\citet{Cao_Mozannar_Feng_Wei_An_2023} show that symmetric reductions can
produce invalid defer probabilities and propose an asymmetric softmax
parameterization with a multi-expert extension.
\citet{mao2025mastering} design a
single-stage surrogate family with formal consistency guarantees for
multi-action deferral. A complementary line studies optimization pathologies:
\citet{Narasimhan} first identify classifier underfitting in the
single-expert setting and propose a post-hoc correction. Still in the
single-expert setting, \citet{liu2024mitigating} propose a training-time
mitigation. More recently, \citet{liu2026more} show that in the
multi-expert setting, even redundant experts can induce underfitting through
the additive expert reward, and propose a winner-take-all correction
(PiCCE).

\paragraph{Positioning.}
Our work differs from the above in scope: rather than proposing an
incremental fix within the augmented-action family, we show that the family
as a whole exhibits a pattern in which each surrogate trades a fix on one
axis (statistical target or optimization geometry) for a failure on the
other. The decoupled surrogate departs from the shared-action-space design entirely and avoids
this trade-off by construction.

\section{Preliminaries}\label{sec:preliminaries}

\paragraph{Multi-expert learning to defer.} Following the modern
learning-to-defer literature \citep{madras2018predict, mozannar2021consistent, mao2024principledapproacheslearningdefer, montreuil2026why}, let
$(\Omega,\mathcal{A},\mathbb{P})$ be a probability space supporting a
triple $(X,Y,\mathbf{M})$ with distribution $\mathcal{D}$ on
$\mathcal{X}\times[K]\times[K]^J$, where
$\mathcal{X}\subseteq\mathbb{R}^d$ is the input space,
$[K]\coloneqq\{1,\dots,K\}$ is the label space, and
$\mathbf{M}=(M_1,\dots,M_J)$ records the predictions of $J$ experts. We write
$(x,y,m)$ for a realization of $(X,Y,\mathbf{M})$.

The learner consists of a classifier $h:\mathcal{X}\to[K]$ and a router
$r:\mathcal{X}\to\{0\}\cup[J]$. The decision $r(x)=0$ means that the learner
predicts with $h(x)$, whereas $r(x)=j$ means that the instance is routed to
expert~$j$. In the classification-specialized setting considered throughout the
paper \citep{mozannar2021consistent, Verma2022LearningTD, Cao_Mozannar_Feng_Wei_An_2023, mao2024principledapproacheslearningdefer, liu2026more}, the only question is therefore which of the $K$ class
predictions and the $J$ expert actions yields the smallest conditional error.

\begin{definition}[Multi-Expert Defer Loss]\label{def:defer-loss}
The multi-expert defer loss incurred on a realization $(x,y,m)$ is
\begin{equation}\label{eq:defer-loss}
\ell_\perp(h,r;\,x,y,m)
\;=\;
\mathbf{1}\{h(x)\neq y\}\mathbf{1}\{r(x)=0\}
\;+\;
\sum_{j\in[J]}\mathbf{1}\{m_j\neq y\}\mathbf{1}\{r(x)=j\}.
\end{equation}
Its population risk is
$\mathcal{E}_\perp(h,r)\coloneqq\mathbb{E}[\ell_\perp(h,r;\,X,Y,\mathbf{M})]$.
\end{definition}

At each input, the learner either predicts on its own and incurs the usual
misclassification loss, or it transfers control to one of the experts and pays
that expert's error on the same example. The population objective is to choose
the better option pointwise in expectation.

\paragraph{Bayes quantities.}
The Bayes rule is governed by two families of conditional probabilities. The
first is the class posterior
$\eta_k(x)\coloneqq \Pr(Y=k\mid X=x)$ for $k\in[K]$. The second is the vector
of expert utilities $\alpha_j(x)\coloneqq \Pr(M_j=Y\mid X=x)$ for $j\in[J]$.
These quantities are the natural objects of the defer problem. Predicting class
$k$ incurs conditional risk $1-\eta_k(x)$, while deferring to expert~$j$
incurs conditional risk $1-\alpha_j(x)$. Thus the optimal action depends only
on the largest class posterior and the largest expert utility, not on an
augmented distribution over $K+J$ actions.

\begin{restatable}[Bayes Rule for Multi-Expert L2D]{lemma}{bayesrule}\label{lem:bayes-rule}
For every $x\in\mathcal{X}$, a Bayes-optimal decision is obtained by
\begin{equation}\label{eq:bayes-rule}
r^\star(x)
\;\in\;
\begin{cases}
\{0\}, &
\text{if }\max_{k\in[K]}\eta_k(x)\ge\max_{j\in[J]}\alpha_j(x),\\[4pt]
\argmax_{j\in[J]}\alpha_j(x), & \text{otherwise,}
\end{cases}
\end{equation}
with $h^\star(x)\in\argmax_{k\in[K]}\eta_k(x)$.
\end{restatable}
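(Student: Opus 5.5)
The plan is to reduce the population risk to a pointwise conditional risk and minimize it pointwise. By the tower property, $\mathcal{E}_\perp(h,r)=\mathbb{E}_X\bigl[\mathcal{C}(h(X),r(X);X)\bigr]$, where the conditional risk of a deterministic pair of actions $(k,j')\in[K]\times(\{0\}\cup[J])$ at $x$ is
\begin{equation*}
\mathcal{C}(k,j';x)\;=\;\mathbf{1}\{j'=0\}\bigl(1-\eta_k(x)\bigr)\;+\;\sum_{j\in[J]}\mathbf{1}\{j'=j\}\bigl(1-\alpha_j(x)\bigr).
\end{equation*}
This follows by taking $\mathbb{E}[\cdot\mid X=x]$ inside \eqref{eq:defer-loss}. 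The indicators $\mathbf{1}\{h(x)=\cdot\}$ and $\mathbf{1}\{r(x)=\cdot\}$ are $x$-measurable, so they factor out. Moreover, $\mathbb{E}[\mathbf{1}\{k\neq Y\}\mid X=x]=1-\eta_k(x)$ and $\mathbb{E}[\mathbf{1}\{M_j\neq Y\}\mid X=x]=1-\alpha_j(x)$ by the definitions of $\eta$ and $\alpha$.

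Next, I will minimize $\mathcal{C}$ over the finite action set at each fixed $x$. When $r(x)=0$, the risk is $1-\eta_{h(x)}(x)$, which is minimized by any $h(x)\in\argmax_k\eta_k(x)$, with value $1-\max_k\eta_k(x)$. When $r(x)=j\ge1$, the risk is $1-\alpha_j(x)$, independent of $h$, and is minimized by $j\in\argmax_j\alpha_j(x)$, with value $1-\max_j\alpha_j(x)$. Comparing the two values shows that predicting is optimal when $\max_k\eta_k\ge\max_j\alpha_j$ and deferring to an argmax expert is optimal otherwise. Ties can be broken toward $0$ without loss, which gives \eqref{eq:bayes-rule}. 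Choosing $h^\star(x)\in\argmax_k\eta_k(x)$ is optimal in both branches, because $h$ does not affect the risk when deferring.

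Finally, I will pass from pointwise to global optimality. For any measurable $(h,r)$, $\mathcal{C}(h(x),r(x);x)\ge\mathcal{C}(h^\star(x),r^\star(x);x)$ holds for every $x$, so integrating gives $\mathcal{E}_\perp(h,r)\ge\mathcal{E}_\perp(h^\star,r^\star)$. The main technical point is to ensure that $(h^\star,r^\star)$ is measurable. I would handle this by using a fixed deterministic tie-breaking rule, such as the smallest index in each argmax. Since $\eta_k$ and $\alpha_j$ are measurable as conditional probabilities, the resulting selectors are measurable functions of finitely many measurable functions. Everything else is routine.
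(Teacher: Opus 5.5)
Your proposal is correct and follows essentially the same route as the paper: you condition on $X=x$, compute the conditional risk $1-\eta_k(x)$ of each class action and $1-\alpha_j(x)$ of each defer action, and compare the best of each, sending ties to $r=0$. The only difference is that you also spell out the passage from pointwise to population optimality and the measurability of the selectors via smallest-index tie-breaking, both of which the paper leaves implicit.
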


Lemma~\ref{lem:bayes-rule} reveals the basic statistical structure of the
problem. The class posterior $\eta(x)$ is a categorical distribution and
therefore lies in the simplex $\Delta^K$. By contrast, the expert utilities
$\alpha_1(x),\dots,\alpha_J(x)$ are $J$ conditional probabilities in $[0,1]$
with no analogous normalization constraint. The Bayes-sufficient object is
therefore the pair $(\eta(x),\alpha(x))\in\Delta^K\times[0,1]^J$.

\paragraph{Augmented-action viewpoint.}
Because the loss in Definition~\ref{def:defer-loss} is discontinuous,
gradient-based learning must proceed through a surrogate \citep{Statistical, bartlett2006convexity, Steinwart2007HowTC}. The dominant one-stage reduction, introduced
by \citet{mozannar2021consistent}, casts the problem as prediction over one enlarged action space
$\A \coloneqq [K]\cup\{\perp_1,\dots,\perp_J\}$ and learns one score for each
action.

\begin{definition}[Augmented-Action Surrogate Family]\label{def:aug-family}
An \emph{augmented-action surrogate} is specified by a score map
$a:\mathcal{X}\to\mathbb{R}^{K+J}$ together with a samplewise loss of the form
\[
\Phi^{\mathrm{aug}}(a;\,x,y,m)
\;=\;
\Phi^{\mathrm{aug}}_{\mathrm{cls}}(a;\,x,y)
\;+\;
\Phi^{\mathrm{aug}}_{\mathrm{exp}}(a;\,x,y,m),
\]
and a prediction rule that selects one augmented action from the same shared
score vector $a(x)$, either by $\argmax_i a_i(x)$ or by the argmax of a
maxima-preserving transform of $a(x)$.
\end{definition}

This definition covers the baselines studied in the paper: additive
cross-entropy \citep{mozannar2021consistent}, Verma's OvA surrogate
\citep{Verma2022LearningTD}, A-SM \citep{Cao_Mozannar_Feng_Wei_An_2023},
PiCCE \citep{liu2026more}, and Mao25
\citep{mao2025mastering}. They differ in how
they build the class and expert terms, but they all learn defer decisions
through one shared action geometry over classes and experts.

\section{The Augmented-Action Mismatch}\label{sec:mismatch}

We now analyze the augmented-action surrogates of
Definition~\ref{def:aug-family} along two axes:
\emph{(i)}~the \emph{conditional target} at the population optimum, and
\emph{(ii)}~the \emph{local geometry} --- how the surrogate distributes
gradient mass during training. Consistency and bounded estimation address
axis~(i); we show that axis~(ii) matters equally, and that the
augmented-action family repeatedly trades a failure on one axis for a
failure on the other.
Throughout, $\Jset(y,m)\coloneqq \{j\in[J]:m_j=y\}$ denotes the set of
correct experts and
$q_i(x)\coloneqq\exp(a_i)/\sum_{r=1}^{K+J}\exp(a_r)$.
Detailed proofs, curvature calculations, and worked examples for the baseline
analyses are collected in
Appendix~\ref{app:proofs-mismatch}--\ref{app:constant-comparison}.

%% ============================================================
\subsection{Additive Cross-Entropy: Target Distortion and Gradient
Amplification}\label{subsec:amplification}

We begin with the simplest augmented-action surrogate, additive cross-entropy
\citep{mozannar2021consistent}, which rewards the correct class and every
correct expert through one shared softmax:
\begin{equation}\label{eq:additive-ce}
\Phi^{\mathrm{CE}}(a;\,x,y,m)
\;=\;
-\log q_y(x)
\;-\;\sum_{j\in \Jset(y,m)}\log q_{K+j}(x).
\end{equation}
Prior work has already shown that additive CE can produce unbounded defer-probability
estimates \citep{Verma2022LearningTD} and can underfit when experts are redundant \citep{liu2026more}. This makes it a useful lens for the
two mismatches studied in this paper: on axis~(i), it distorts the \emph{target}; on axis~(ii), it induces a multiplicity pathology that worsens
as the number of experts grows. The resulting failure is therefore not merely one of probability validity. Equally importantly, the shared augmented geometry alters the per-sample gradients and curvature in a manner that systematically favors regions of high expert overlap.

At the population level, additive CE does not target $(\eta,\alpha)$
directly. Its conditional optimum normalizes both the class and expert terms
by the same factor $1+U(x)$, where $U(x)=\sum_j \alpha_j(x)$, so the target
itself changes with the total expert overlap. If experts~$2,\dots,J$ are
redundant copies of expert~1, the Bayes decision is unchanged, yet the
normalization grows linearly in~$J$ and pushes every softmax probability toward
zero. This same normalization underlies the unbounded reconstruction issue
identified by \citet{Cao_Mozannar_Feng_Wei_An_2023}. We defer the exact
conditional minimizer statement to Appendix~\ref{app:proof-additive-optimum}.

\paragraph{Gradient and curvature amplification (axis~(ii)).}
The target distortion is only part of the issue. The samplewise geometry is at
least as important, because it determines how the surrogate allocates update
mass during training.

\begin{restatable}[Samplewise amplification in additive CE]{proposition}{additivegradient}
\label{prop:additive-gradient}
For one sample $(x,y,m)$ with $\Jset=\Jset(y,m)$, the gradient and Hessian of
$\Phi^{\mathrm{CE}}$ satisfy
\begin{equation}\label{eq:additive-gradient}
\frac{\partial\Phi^{\mathrm{CE}}}{\partial a_i}
\;=\;
\bigl(1+|\Jset|\bigr)\,q_i(x)
\;-\;\mathbf{1}\{i=y\}
\;-\;\mathbf{1}\{i=K\!+\!j,\;j\in\Jset\}.
\end{equation}
\begin{equation}\label{eq:ce-hessian}
\nabla_a^2\Phi^{\mathrm{CE}}
\;=\;
(1+|\Jset|)\bigl(\mathrm{Diag}(q)-qq^\top\bigr),
\end{equation}
with largest eigenvalue
$\lambda_{\max}(\nabla_a^2\Phi^{\mathrm{CE}})\le(1+|\Jset|)/2$.
\end{restatable}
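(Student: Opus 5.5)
The plan is to write $\Phi^{\mathrm{CE}}$ as a sum of $1+|\Jset|$ cross-entropy terms over one shared softmax, namely
\[
\Phi^{\mathrm{CE}}(a;x,y,m)=\sum_{t\in T}\bigl(-\log q_t(x)\bigr),\qquad T\coloneqq\{y\}\cup\{K+j:j\in\Jset\},
\]
and to differentiate each term with the standard log-softmax identity. Since $-\log q_t = -a_t+\log\sum_{r}\exp(a_r)$, its derivative is
\[
\frac{\partial(-\log q_t)}{\partial a_i}=q_i-\mathbf{1}\{i=t\}.
\]
The indices in $T$ are distinct, because $y\le K$ and the expert indices are all larger than $K$. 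Summing over $t\in T$ therefore gives $|T|\,q_i=(1+|\Jset|)q_i$ minus the indicator that $i\in T$. This indicator splits exactly into $\mathbf{1}\{i=y\}+\mathbf{1}\{i=K+j,\,j\in\Jset\}$, which is \eqref{eq:additive-gradient}.

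For the Hessian, I will differentiate the gradient once more. The indicator terms are constant in $a$, so only the softmax part contributes. Using $\partial q_i/\partial a_l=q_i(\mathbf{1}\{i=l\}-q_l)$ gives
\[
\nabla_a^2\Phi^{\mathrm{CE}}=(1+|\Jset|)\bigl(\mathrm{Diag}(q)-qq^\top\bigr),
\]
which is \eqref{eq:ce-hessian}. A useful observation is that this matrix does not depend on $y$ or $m$ except through the scalar $|\Jset|$.

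The spectral bound is the only step needing care. It reduces to showing $\lambda_{\max}(\mathrm{Diag}(q)-qq^\top)\le 1/2$ for any $q\in\Delta^{K+J}$. I will first note that this matrix is PSD, since $v^\top(\mathrm{Diag}(q)-qq^\top)v=\mathrm{Var}_{I\sim q}(v_I)$. I plan two routes to the bound.

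\begin{itemize}
\item \textbf{Popoviciu route.} For a unit vector $v$, the variance is at most $(\max_i v_i-\min_i v_i)^2/4$. Since $|v_i-v_l|^2\le 2(v_i^2+v_l^2)\le 2$ for $i\neq l$, this gives $\mathrm{Var}\le 2/4=1/2$. This is the route I would try first, because it is one line.
\item \textbf{Gershgorin backup.} The off-diagonal absolute row sum of row $i$ is $q_i(1-q_i)$. Row $i$'s disc therefore lies in $[\,q_i(1-q_i)-q_i(1-q_i),\;q_i(1-q_i)+q_i(1-q_i)\,]$, so every eigenvalue is at most $\max_i 2q_i(1-q_i)\le 1/2$.
\end{itemize}

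Either route, multiplied by the scalar $1+|\Jset|$, yields $\lambda_{\max}\le(1+|\Jset|)/2$.
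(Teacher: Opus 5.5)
Your proposal is correct and follows the paper's proof. Writing $\Phi^{\mathrm{CE}}$ as a sum of $1+|\Jset|$ log-softmax terms is equivalent to the paper's rewriting $(1+|\Jset|)\log\sum_r e^{a_r}-a_y-\sum_{j\in\Jset}a_{K+j}$, the Hessian comes from the same softmax Jacobian, and your Gershgorin backup is exactly the paper's eigenvalue argument (discs $[0,2q_i(1-q_i)]\subseteq[0,1/2]$). Your primary route is different and also valid: it uses $v^\top(\mathrm{Diag}(q)-qq^\top)v=\mathrm{Var}_{I\sim q}(v_I)$, Popoviciu's inequality and the Rayleigh quotient, and yields positive semidefiniteness and the $1/2$ bound together, whereas Gershgorin reaches the same interval $[0,1/2]$ by entrywise disc localization.
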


Both parts carry the same multiplicity factor $1+|\Jset|$. If six of eight
experts are correct on a sample, additive CE treats it as $7\times$ more
important than standard CE would, and the curvature jumps from $1/2$ to
$7/2$. Since samples with many correct experts tend to be easy,
high-agreement regions, the optimizer is both biased \emph{toward} those
regions and poorly conditioned there --- pulled away from the decision
boundaries where the classify-versus-defer choice is most consequential.

%% ============================================================
\subsection{Why Winner-Take-All Fails}\label{subsec:wta}

The amplification of~\S\ref{subsec:amplification} arises because the additive
reward fires once for the true class \emph{and} once for every correct expert.
A natural repair is to reward exactly one correct expert per sample. This is
the strategy of PiCCE~\citep{liu2026more}, instantiated with cross-entropy as
the base multiclass loss. Let $j^\star$ be the deterministically tie-broken
maximizer of $a_{K+j}$ over $j\in\Jset(y,m)$ when this set is nonempty. Then
\begin{equation}\label{eq:picce}
\Phi^{\mathrm{PiCCE}}(a;\,x,y,m)
\;=\;
-\log q_y(x)
\;-\;\mathbf{1}\{\Jset\neq\varnothing\}\log q_{K+j^\star}(x).
\end{equation}
With at most two log-loss terms active, the total positive gradient mass is
capped at~$2$ regardless of how many experts are correct, and the Hessian is
$\nabla_a^2\Phi^{\mathrm{PiCCE}}=2(\mathrm{Diag}(q)-qq^\top)$, so
$\lambda_{\max}\le 1$ uniformly (Appendix~\ref{app:proof-picce-curvature}). On
axis~(ii), the multiplicity explosion of~\eqref{eq:ce-hessian} is gone.

\paragraph{Starvation of non-winning experts (axis~(ii)).}
The $\argmax$ selection solves the amplification problem but creates a new
gradient pathology.

\begin{restatable}[Winner-take-all starvation in PiCCE]{proposition}{piccestarvation}
\label{prop:picce-starvation}
Under PiCCE with $\Jset(y,m)\neq\varnothing$, the gradient on any logit is
$\partial\Phi^{\mathrm{PiCCE}}/\partial a_i
= 2q_i - \mathbf{1}\{i=y\}-\mathbf{1}\{i=K\!+\!j^\star\}$.
For any correct expert $k\in\Jset$ with $k\neq j^\star$,
\begin{equation}\label{eq:picce-starvation}
\frac{\partial\Phi^{\mathrm{PiCCE}}}{\partial a_{K+k}}
\;=\; 2q_{K+k}(x) \;>\; 0.
\end{equation}
Gradient descent pushes expert~$k$'s logit \emph{downward}, even though it is
correct.
\end{restatable}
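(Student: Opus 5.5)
The proof is a direct differentiation of the PiCCE objective in \eqref{eq:picce}, treating the winner index $j^\star$ as fixed. The one subtlety is justifying that step. The map $a\mapsto j^\star(a)$ is piecewise constant, because $j^\star$ is the tie-broken argmax of $a_{K+j}$ over the finite set $\Jset(y,m)$. So on the open set of score vectors where the maximizer is strict, $j^\star$ is locally constant and $\Phi^{\mathrm{PiCCE}}$ coincides locally with the smooth function $-\log q_y - \log q_{K+j^\star}$. On the measure-zero tie set, I would take the stated expression as the gradient of the active branch chosen by the deterministic tie-break. This is the usual convention, since implementations treat the selection as a stop-gradient. With that convention, the gradient is that of a sum of two softmax log-losses with a fixed index.

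Next I apply the standard softmax identity $\partial(-\log q_c)/\partial a_i = q_i - \mathbf{1}\{i=c\}$, which follows from $-\log q_c = -a_c + \log\sum_r e^{a_r}$. Applying it with $c=y$ and with $c=K+j^\star$, and using $\Jset\neq\varnothing$ so that the indicator in \eqref{eq:picce} equals one, the two terms sum to $2q_i - \mathbf{1}\{i=y\} - \mathbf{1}\{i=K+j^\star\}$. This is the first claim.

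For the starvation claim, fix $k\in\Jset$ with $k\neq j^\star$ and take $i=K+k$. Since $y\in[K]$ and $K+k>K$, we have $i\neq y$. Since $k\neq j^\star$, we also have $i\neq K+j^\star$. Both indicators therefore vanish, and the partial derivative is $2q_{K+k}(x)$. It is strictly positive because softmax outputs are strictly positive for finite scores. A gradient step $a\leftarrow a - \gamma\nabla\Phi$ with $\gamma>0$ therefore strictly decreases $a_{K+k}$, even though expert $k$ is correct on this sample.

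The only step needing care is the differentiability and convention issue around the argmax selection; everything else is a routine calculation.
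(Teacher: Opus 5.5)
Your proposal is correct and matches the paper's proof: with $j^\star$ held fixed, the loss is $2\log\sum_r e^{a_r}-a_y-a_{K+j^\star}$, you differentiate it, and at $i=K+k$ neither indicator is active, while strict positivity comes from the softmax. Your remark that $j^\star$ is locally constant off the measure-zero tie set, with a branch convention on ties, makes explicit a point the paper leaves implicit when it simply treats $j^\star$ as fixed.
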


The mechanism is simple: in the shared softmax, the two log-loss terms reward
only the true class $y$ and the winning expert $j^\star$. Every other action
receives only the repulsive softmax redistribution $2q_i$, which
pushes its logit down. For classes $k\neq y$ this repulsion is desirable. For
correct experts $k\in\Jset\setminus\{j^\star\}$ it is not: these experts
are being punished despite being right, purely because a rival expert had a
higher logit.

\paragraph{Self-reinforcing lock-in and a moving target.}
Starvation compounds across iterations. If $j_1$ starts with a marginally
higher logit than $j_2$ on a shared region, $j_1$ wins the $\argmax$ and
receives gradient $2q_{K+j_1}-1$, while $j_2$ is pushed down
($2q_{K+j_2}>0$). In the non-saturated regime $q_{K+j_1}<1/2$, the winner is
pulled up and the gap widens; even after saturation, the non-winning correct
expert continues to receive the repulsive positive gradient. This positive
feedback converts initialization noise into deterministic suppression.
This also corrupts the statistical target on axis~(i). PiCCE does not
estimate $(\eta,\alpha)$ but $(\eta,\gamma(a,\cdot))$, where
$\gamma_j(a,x)=\Pr(j=j^\star \text{ and } j\in\Jset\mid X\!=\!x)$ depends
on the current scores through the winner map
(Appendix~\ref{app:picce-moving-target}). As starvation suppresses
experts, $\gamma_j$ shifts mass toward surviving winners, further
entrenching their dominance. The target is therefore \emph{moving} rather
than fixed by the data distribution.

%% ============================================================
\subsection{Three Refined Surrogates, Three Residual Gaps}
\label{subsec:refined}

Three further baselines target different defects. Each closes one gap but
opens another.

\paragraph{Mao25: correct set, no ranking.}
The surrogate of \citet{mao2025mastering}
penalizes the total mass assigned outside the acceptable set:
$\Phi^{\mathrm{Mao}} = 1-\bigl(q_y+\sum_{j\in\Jset} q_{K+j}\bigr)$.
It comes with formal consistency guarantees
\citep{mao2025mastering}. Beyond those
guarantees, we observe that the loss is \emph{linear} in the total
acceptable mass $S_{\Sset}$, making it a set-valued analogue of
multiclass MAE \citep{Ghosh}.

\begin{restatable}[Acceptable-set mass in Mao25]{proposition}{maosetmass}
\label{prop:mao25-setmass}
The gradient is
$\partial\Phi^{\mathrm{Mao}}/\partial a_i
= q_i\bigl(S_{\Sset}-\mathbf{1}\{i\in\Sset(y,m)\}\bigr)$,
where $\Sset(y,m)=\{y\}\cup\{K+j:j\in\Jset\}$ and
$S_{\Sset}=\sum_{c\in\Sset}q_c$. The loss depends on the acceptable set only
through $S_{\Sset}$, and for acceptable coordinates the gradient has the common
factor $S_{\Sset}-1$. Thus the samplewise update distinguishes acceptable from
unacceptable actions, but it contains no term that prefers one acceptable
expert over another on the basis of expert quality.
\end{restatable}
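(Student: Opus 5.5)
The plan is a direct computation with the softmax Jacobian. For the shared softmax $q=\mathrm{softmax}(a)$ on the augmented action space we have $\partial q_c/\partial a_i = q_c(\mathbf{1}\{c=i\}-q_i)$. Since the acceptable set $\Sset(y,m)$ is fixed by the sample, differentiating $\Phi^{\mathrm{Mao}} = 1 - S_{\Sset}$ gives
\[
\frac{\partial\Phi^{\mathrm{Mao}}}{\partial a_i}
= -\sum_{c\in\Sset} q_c\bigl(\mathbf{1}\{c=i\}-q_i\bigr)
= q_i S_{\Sset} - q_i\mathbf{1}\{i\in\Sset\}
= q_i\bigl(S_{\Sset}-\mathbf{1}\{i\in\Sset(y,m)\}\bigr),
\]
which is the first claim. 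One point to check is that $\Sset(y,m)$ does not depend on $a$. This holds because $\Jset(y,m)$ is determined by the labels alone, unlike the winner $j^\star$ in PiCCE, so no selection map needs to be differentiated.

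Next I show that the loss sees the acceptable set only through $S_{\Sset}$. This is immediate from the form $\Phi^{\mathrm{Mao}} = 1 - S_{\Sset}$. Consequently, any two score vectors $a,a'$ with the same $S_{\Sset}$ incur the same loss, whatever the split of mass among $\{y\}\cup\{K+j:j\in\Jset\}$. For the common-factor claim I specialize the gradient formula to $i\in\Sset$. This gives $\partial\Phi^{\mathrm{Mao}}/\partial a_i = q_i(S_{\Sset}-1)$, so every acceptable coordinate shares the scalar $S_{\Sset}-1\le 0$. Unacceptable coordinates instead get $q_i S_{\Sset}\ge 0$. This sign split is the sense in which the update ``distinguishes acceptable from unacceptable actions.''

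The last assertion, that nothing in the update favors one acceptable expert over another on the basis of quality, is the only step that needs care in phrasing, since the magnitudes $q_i(S_{\Sset}-1)$ do differ across acceptable experts. I will argue that they differ only through the current scores $q_i$, not through any sample-dependent quantity such as which expert is correct. Within one sample, the relative update of two acceptable experts $j,j'\in\Jset$ is the ratio $q_{K+j}/q_{K+j'}$. This is exactly what a uniform rescaling of their shared mass would produce: the step moves $\log q$ equally for all acceptable actions to first order.

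To make this concrete, I will note that $\sum_{c\in\Sset}\partial\Phi/\partial a_c\,(\partial/\partial a_c)$ acts on the within-set log-ratios $a_{K+j}-a_{K+j'}$ with derivative $q_{K+j}(S_{\Sset}-1)-q_{K+j'}(S_{\Sset}-1)$. This depends only on current scores and on $S_{\Sset}$, so the samplewise step contains no term that ranks acceptable experts by correctness frequency. Any ranking must arise indirectly, through aggregation over samples in which different subsets of experts are acceptable, and not from a per-sample preference. I expect this interpretive step, rather than the routine gradient algebra, to be the main point needing precise wording.
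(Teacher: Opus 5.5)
Your derivation follows the paper's proof essentially line for line. You apply the softmax Jacobian to $\Phi^{\mathrm{Mao}}=1-S_{\Sset}$, specialize to $i\in\Sset$ to read off the shared factor $S_{\Sset}-1$, and conclude that acceptable experts are distinguished only through their current masses $q_i$. Your check that $\Sset(y,m)$ does not depend on $a$, in contrast with PiCCE's winner $j^\star$, is a worthwhile detail that the paper leaves implicit.

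One supporting claim in your last step is false and should be dropped. You say the update is ``exactly what a uniform rescaling of their shared mass would produce'' and that it ``moves $\log q$ equally for all acceptable actions to first order.''

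\begin{itemize}
\item A uniform rescaling $q_c\mapsto(1+\varepsilon)q_c$ on $\Sset$ corresponds to an \emph{equal} shift of all acceptable logits.
\item A gradient step instead gives $\Delta a_c=\eta\,q_c(1-S_{\Sset})$, which is proportional to $q_c$.
\end{itemize}

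You are conflating the ratio of logit-space gradients with the ratio of probability-space changes. Since $\log q_{K+j}-\log q_{K+j'}=a_{K+j}-a_{K+j'}$, one step changes this log-ratio by $\eta(1-S_{\Sset})(q_{K+j}-q_{K+j'})$. That is nonzero whenever the two masses differ. It is exactly $-\eta$ times the derivative you compute in your next paragraph, so your two statements contradict each other.

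The conclusion survives if you keep only that second formulation. The within-set difference $(S_{\Sset}-1)(q_{K+j}-q_{K+j'})$ depends only on current scores and $S_{\Sset}$, never on expert quality. If anything, it widens whatever gap the current scores already show. That is a score-driven rather than quality-driven preference, which supports the proposition rather than undermining it.
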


Because the samplewise supervision cannot express preferences inside the
acceptable set, Mao25 does not recover expert utilities directly. The surrogate
can identify a Bayes-optimal action, but it does not estimate the full vector
$(\eta,\alpha)$ needed for utility-level comparison among experts --- a
limitation on axis~(i). The same MAE-like structure also weakens optimization:
the total attractive gradient mass on acceptable actions is
$\sum_{i\in\Sset} -\partial\Phi^{\mathrm{Mao}}/\partial a_i
= S_{\Sset}(1-S_{\Sset})$, which is at most $1/4$ and vanishes as
$S_{\Sset}\to 0$. So when the model currently assigns little mass to the
acceptable set, the corrective signal is weak, consistent with the optimization
behavior commonly associated with MAE-like losses
\citep{zhang2018generalizedcrossentropyloss}.

\paragraph{A-SM: correct target, coupled geometry.}
The asymmetric softmax of \citet{Cao_Mozannar_Feng_Wei_An_2023} defines
class probabilities
$\xi_k=\exp(a_k)/\sum_{k'}\exp(a_{k'})$ over classes only and expert
estimates via a sigmoid
$\psi_j=\sigma\!\bigl(a_{K+j}-\log\sum_{k'\neq k^\star}\exp(a_{k'})\bigr)$
with $k^\star\!\in\!\argmax_k a_k$
(Definition~\ref{def:appendix-asm}). At the population optimum
$\xi_k^\star=\eta_k$ and $\psi_j^\star=\alpha_j$: axis~(i) is fully
corrected. But $\psi_j$ still depends on the class logits, coupling the
two estimation problems on axis~(ii).

\begin{restatable}[Class--expert gradient coupling in A-SM]{proposition}{asmcoupling}
\label{prop:asm-coupling}
Fix $k^\star\!\in\!\argmax_{k\in[K]} a_k$. For any $r\neq k^\star$, let
$\pi_r=\exp(a_r)/\sum_{k\neq k^\star}\exp(a_k)$. Then
\begin{equation}\label{eq:asm-coupling}
\frac{\partial\Phi^{\mathrm{ASM}}}{\partial a_r}
\;=\;
\underbrace{(\xi_r-\mathbf{1}\{r\!=\!y\})}_{\text{class term}}
\;-\;\pi_r\!\underbrace{\sum_{j=1}^J
\bigl(\psi_j-\mathbf{1}\{m_j\!=\!y\}\bigr)}_{\text{expert leakage}}.
\end{equation}
\end{restatable}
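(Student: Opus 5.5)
The plan is a direct chain-rule computation, once the A-SM samplewise loss is written out explicitly from Definition~\ref{def:appendix-asm}. Write $o_j \coloneqq \mathbf{1}\{m_j=y\}$ and $z_j \coloneqq a_{K+j}-\log\sum_{k'\neq k^\star}\exp(a_{k'})$, so that $\psi_j=\sigma(z_j)$. The loss then splits into a class cross-entropy and $J$ binary cross-entropies:
\[
\Phi^{\mathrm{ASM}}(a;\,x,y,m)
\;=\;
-\log \xi_y
\;-\;\sum_{j=1}^J\Bigl[o_j\log\psi_j+(1-o_j)\log(1-\psi_j)\Bigr].
\]

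I will treat $k^\star$ as fixed, as the statement does. On the open set where $\argmax_k a_k$ is unique, $k^\star$ is locally constant, so the loss is smooth there and the formula is an honest gradient. At ties it is the gradient of the branch selected by the tie-break, i.e.\ a one-sided derivative. I will state this caveat explicitly, because it is the only non-mechanical point of the argument.

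The proof then has three steps.
\begin{enumerate}
\item Differentiate the class term with respect to a class logit $a_r$. The standard softmax cross-entropy identity gives $\partial(-\log\xi_y)/\partial a_r=\xi_r-\mathbf{1}\{r=y\}$.
\item Differentiate each expert term through $z_j$. The binary cross-entropy with sigmoid link has $\partial/\partial z_j=\psi_j-o_j$. Also
\[
\frac{\partial z_j}{\partial a_r}=-\frac{\exp(a_r)}{\sum_{k\neq k^\star}\exp(a_k)}=-\pi_r
\qquad\text{for } r\neq k^\star,\ r\in[K].
\]
Crucially, this factor is the same for every $j$.
\item Sum over $j$ and pull the common factor $-\pi_r$ out of the sum. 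This yields exactly the ``expert leakage'' term $-\pi_r\sum_j(\psi_j-o_j)$ in~\eqref{eq:asm-coupling}.
\end{enumerate}

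I expect no step to be a genuine obstacle. The points that need care are bookkeeping: $r=k^\star$ must be excluded, since $a_{k^\star}$ does not appear in the normalizer defining $z_j$, and the piecewise-smoothness issue at ties must be handled. Finally, I will note the consequence used in the text: each summand satisfies $|\psi_j-o_j|\le 1$, so the leakage term has magnitude up to $\pi_r J$. This is the $O(J)$ class--expert coupling, and it is absent when the expert heads do not depend on the class logits.
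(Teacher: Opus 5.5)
Your proposal is correct and follows essentially the same route as the paper: on the branch where $k^\star$ is fixed, you take the standard softmax cross-entropy gradient for the class term, pass each Bernoulli residual $\psi_j-\mathbf{1}\{m_j=y\}$ through the common factor $\partial z_j/\partial a_r=-\pi_r$, and sum over $j$. Your explicit remark about ties is a slightly more careful version of the paper's ``work on a region where $k^\star$ is fixed'': at a tie, the formula is the gradient of the smooth branch $\sum_{k\neq k^\star}e^{a_k}$, and it agrees with directional derivatives only along directions that keep $k^\star$ a maximizer.
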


The leakage term vanishes in expectation at the optimum but can be $O(J)$
on individual samples. The mixed Hessian block has operator norm
$\le\sqrt{J}/4$ (Appendix~\ref{app:proof-asm-curvature}), so the coupling
grows with the expert pool. A-SM is the strongest augmented baseline on
axis~(i); its residual weakness is entirely on axis~(ii).

\paragraph{OvA: decoupled gradients, improper class posterior.}
OvA \citep{Verma2022LearningTD} abandons the shared simplex entirely,
replacing it by $K+J$ independent binary logistic losses
(Definition~\ref{def:appendix-ova}). Because every coordinate is optimized independently,
the amplification, starvation, and coupling pathologies are all absent:
axis~(ii) is fully resolved. Its remaining gap
is on axis~(i): the class head consists of $K$ independent sigmoids, so
the estimates need not lie in $\Delta^K$ away from the optimum \citep{Cao_Mozannar_Feng_Wei_An_2023}.

%% ============================================================
\paragraph{Synthesis.}
Table~\ref{tab:comparison} summarizes the picture. Every baseline repairs
some properties but introduces or retains others; all five pathologies
trace to one architectural root --- entanglement of classes and experts
inside the shared $(K\!+\!J)$-simplex.
Two design principles emerge, one for each axis: \emph{separate} the
class and expert estimation problems (no shared normalization, no
gradient coupling), addressing axis~(ii); and \emph{match} each head
to the statistical type of its target (softmax for $\eta\in\Delta^K$,
independent sigmoids for $\alpha_j\in[0,1]$), addressing axis~(i).
The next section shows that the decoupled surrogate satisfies both.

\begin{table}[t]
\centering
\caption{Structural properties of multi-expert L2D surrogates.
\checkmark\ = satisfied; \ding{55}\ = violated. Columns
\textsc{(i)}/\textsc{(ii)}/\textsc{(th)}: statistical target /
local geometry / theoretical guarantee. Each entry cites the
proposition, definition, equation, or appendix subsection that
establishes it.}
\label{tab:comparison}
\footnotesize
\setlength{\tabcolsep}{3pt}
\resizebox{\textwidth}{!}{%
\begin{tabular}{@{}lccccccc@{}}
\toprule
&& Add.\ CE & PiCCE & Mao25 & A-SM & OvA & \textbf{Decoupled} \\
\midrule
Conditional minimizer recovers the Bayes pair $(\eta,\alpha)$
& \textsc{(i)}
& \ding{55}\,(\ref{prop:additive-optimum})
& \ding{55}\,(\ref{app:picce-moving-target})
& \ding{55}\,(\ref{prop:mao25-setmass})
& \checkmark\,(\ref{def:appendix-asm})
& \checkmark\,(\ref{prop:ova-bernoulli})
& \checkmark\,(\ref{thm:dec-consistency}) \\
Per-sample gradient and Hessian free of $O(|\Jset|)$ prefactor
& \textsc{(ii)}
& \ding{55}\,(\ref{prop:additive-gradient})
& \checkmark\,(\ref{app:proof-picce-curvature})
& \checkmark\,(\ref{prop:mao25-setmass})
& \checkmark\,(\ref{prop:asm-coupling})
& \checkmark\,(\ref{app:proof-ova-curvature})
& \checkmark\,(\ref{prop:due-gradient}) \\
No winner-take-all suppression of correct experts
& \textsc{(ii)}
& \checkmark\,(\ref{prop:additive-gradient})
& \ding{55}\,(\ref{prop:picce-starvation})
& \checkmark\,(\ref{prop:mao25-setmass})
& \checkmark\,(\ref{prop:asm-coupling})
& \checkmark\,(\ref{prop:ova-bernoulli})
& \checkmark\,(\ref{prop:due-gradient}) \\
No class--expert coupling in the per-sample gradient
& \textsc{(ii)}
& \ding{55}\,(\ref{prop:additive-gradient})
& \ding{55}\,(\ref{prop:picce-starvation})
& \ding{55}\,(\ref{prop:mao25-setmass})
& \ding{55}\,(\ref{prop:asm-coupling})
& \checkmark\,(\ref{app:proof-ova-curvature})
& \checkmark\,(\ref{eq:due-hessian}) \\
Class head is a categorical posterior throughout training
& \textsc{(i)}
& \ding{55}\,(\ref{def:aug-family})
& \ding{55}\,(\ref{def:aug-family})
& \ding{55}\,(\ref{def:aug-family})
& \checkmark\,(\ref{def:appendix-asm})
& \ding{55}\,(\ref{prop:ova-bernoulli})
& \checkmark\,(\ref{def:due}) \\
Quantitative surrogate-to-target excess-risk bound
& \textsc{(th)}
& \checkmark$^\dagger$ & \ding{55} & \checkmark$^\ddagger$
& \ding{55} & \ding{55}
& \checkmark\,(\ref{thm:dec-consistency}) \\
\bottomrule
\end{tabular}}

{\scriptsize ${}^\dagger$Comp-sum reduction
\citep{mao2024principledapproacheslearningdefer}, constant $\sqrt{2(J{+}1)}$.
${}^\ddagger$Single-stage family
\citep{mao2025mastering}, constant $K{+}J$.}
\end{table}

\section{The Decoupled Surrogate}\label{sec:approach}

The analysis of Section~\ref{sec:mismatch} traces every pathology to one
architectural choice: entangling class and expert estimates inside a shared
normalization. Two design principles emerged: \emph{separate} the two
estimation problems so that gradients do not interfere, and \emph{match}
each head to the statistical type of its target so that the conditional
minimizer recovers $(\eta,\alpha)$ directly. The decoupled surrogate
implements both.

\subsection{Formulation and Prediction}\label{subsec:formulation}

Let $w:\mathcal{X}\to\mathbb{R}^K$ and $s:\mathcal{X}\to\mathbb{R}^J$ denote the classifier and expert score maps.
The decoupled surrogate estimates the class posterior and the expert utilities on their native
probability scales: $p_k(x)=\exp(w_k(x))/\sum_{\ell=1}^K\exp(w_\ell(x))$ and $u_j(x)=\sigma(s_j(x))$,
where $\sigma$ is the logistic sigmoid. The class estimate
$p(x)\in\Delta^K$ is a proper categorical distribution; each expert
estimate $u_j(x)\in(0,1)$ is an independent probability. These types
mirror the Bayes-sufficient pair
$(\eta,\alpha)\in\Delta^K\times[0,1]^J$ exactly --- no shared simplex
forces classes and experts to compete for probability mass. We record the
corresponding bounded cost-sensitive formulation in
Appendix~\ref{app:proofs-approach}; throughout the main text we restrict
attention to the classification-specialized zero-one defer case.

\begin{definition}[Decoupled surrogate]\label{def:due}
For $\lambda>0$, the decoupled surrogate is
\begin{equation}\label{eq:due}
\Phi^{\mathrm{dec}}_\lambda(w,s;\,x,y,m)
\;\coloneqq\;
-\log p_y(x)
\;-\;\frac{\lambda}{J}\sum_{j=1}^J
\Bigl(
t_j\log u_j(x)
+(1-t_j)\log\bigl(1-u_j(x)\bigr)
\Bigr),
\end{equation}
where $t_j\coloneqq \mathbf{1}\{m_j=y\}$.
\end{definition}

The first term is the standard multiclass log-loss for the classifier. The
second is an average of $J$ independent Bernoulli log-losses, one per
expert, with binary targets $t_j$. The $\lambda/J$ weight has a single
interpretation: $\lambda$ is the total expert-side budget, split evenly
across the $J$ experts so that the per-expert weight $\lambda/J$ is the
quantity that actually controls gradients, curvature, and the
surrogate-regret transfer constant (Section~\ref{subsec:excess-risk}).
Any $\lambda>0$ yields the same conditional minimizer, so $\lambda$
governs only the relative learning speed between the two heads, not the
target being estimated. Appendix~\ref{app:due-lambda-choice} discusses
this choice in detail.

At test time, the decoupled surrogate applies the plug-in Bayes comparison directly in
probability space:
\begin{equation}\label{eq:due-routing}
r_{\mathrm{dec}}(x)
\;\in\;
\begin{cases}
\{0\}, &
\text{if }\max_{k\in[K]}p_k(x)\ge\max_{j\in[J]}u_j(x),\\[4pt]
\argmax_{j\in[J]}u_j(x), & \text{otherwise,}
\end{cases}
\end{equation}

\subsection{Gradient Structure}\label{subsec:gradient}

The per-sample gradient of the decoupled surrogate resolves every
pathology identified in Section~\ref{sec:mismatch}. We prove every
result in Appendix~\ref{app:proof-due-gradient}.

\begin{restatable}[Per-sample gradient of the decoupled surrogate]{proposition}{duegradient}\label{prop:due-gradient}
For one sample $(x,y,m)$,
\begin{equation}\label{eq:due-gradient}
\frac{\partial\Phi^{\mathrm{dec}}_\lambda}{\partial w_r}
\;=\;
p_r(x)-\mathbf{1}\{r\!=\!y\},
\qquad
\frac{\partial\Phi^{\mathrm{dec}}_\lambda}{\partial s_j}
\;=\;
\frac{\lambda}{J}\bigl(u_j(x)-t_j\bigr).
\end{equation}
\end{restatable}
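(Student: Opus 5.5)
The plan is a direct differentiation of Definition~\ref{def:due}. The first step is to observe that $\Phi^{\mathrm{dec}}_\lambda$ splits additively into a term depending only on $w$, namely $-\log p_y(x)$, and a term depending only on $s$, namely the weighted sum of Bernoulli log-losses. Consequently $\partial/\partial w_r$ sees only the first term and $\partial/\partial s_j$ sees only the second. Moreover, since $u_j$ depends on $s_j$ alone, within the second sum only the $j$-th summand contributes to $\partial/\partial s_j$. This separability is the whole content of the ``decoupled'' claim, and it follows immediately from the definitions of $p$ and $u_j$ in Section~\ref{subsec:formulation}.

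For the class head, I will write
\[
-\log p_y(x) = -w_y(x) + \log\sum_{\ell=1}^K \exp(w_\ell(x)).
\]
Differentiating with respect to $w_r$ gives $-\mathbf{1}\{r=y\}$ from the linear term and
\[
\frac{\exp(w_r)}{\sum_\ell \exp(w_\ell)} = p_r(x)
\]
from the log-sum-exp term. This yields the first identity in \eqref{eq:due-gradient}.

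For the expert head, I will use the sigmoid identities
\[
\frac{d}{ds}\log\sigma(s) = 1-\sigma(s), \qquad \frac{d}{ds}\log\bigl(1-\sigma(s)\bigr) = -\sigma(s),
\]
which both follow from $\sigma' = \sigma(1-\sigma)$. The derivative of $t_j\log u_j + (1-t_j)\log(1-u_j)$ with respect to $s_j$ is then
\[
t_j(1-u_j) - (1-t_j)u_j = t_j - u_j.
\]
Multiplying by the prefactor $-\lambda/J$ gives $\frac{\lambda}{J}\bigl(u_j(x)-t_j\bigr)$, which is the second identity in \eqref{eq:due-gradient}.

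No step is a genuine obstacle; this is a routine computation. The only point needing care is the bookkeeping that confirms no cross terms arise. In particular, $t_j=\mathbf{1}\{m_j=y\}$ is a fixed constant with respect to the scores, so it contributes nothing to the derivatives. Also, unlike in A-SM (Proposition~\ref{prop:asm-coupling}), the expert probabilities are not normalized against the class logits, so no leakage term of the form $\pi_r\sum_j(\cdot)$ appears.
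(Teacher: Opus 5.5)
Your proposal is correct and follows essentially the same route as the paper: split the loss into the softmax log-loss in $w$ and $J$ Bernoulli log-losses each depending only on $s_j$, then differentiate each piece (log-sum-exp for the class head). The only cosmetic difference is that you differentiate $\log\sigma$ and $\log(1-\sigma)$ directly in $s_j$, whereas the paper applies the chain rule through $u_j$ using $du_j/ds_j=u_j(1-u_j)$; both yield $\frac{\lambda}{J}(u_j-t_j)$.
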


Three properties follow immediately, each directly addressing a failure
mode of the augmented-action family:
\begin{itemize}[itemsep=1pt,topsep=2pt,leftmargin=*]
\item[--] \emph{No amplification.} The gradient for expert~$j$ depends only
on $j$'s own prediction $u_j$ and target $t_j$. Adding
more correct experts changes nothing about any individual gradient. The
$O(|\Jset|)$ factor of Proposition~\ref{prop:additive-gradient} is absent.

\item[--] \emph{No starvation.} If expert~$j$ is correct, then
$\partial\Phi/\partial s_j=(\lambda/J)(u_j-1)<0$ whenever $u_j<1$:
expert~$j$'s logit is pushed \emph{up}, regardless of what other experts
predict or whether a rival has a higher score. Every correct expert
receives positive reinforcement. The winner-take-all suppression of
Proposition~\ref{prop:picce-starvation} is absent.

\item[--] \emph{No coupling.} The classifier gradient involves only the
class probabilities $p(x)$ and the label~$y$; it is completely independent
of the expert predictions and targets. Conversely, each expert gradient
depends only on its own head. The class--expert leakage of
Proposition~\ref{prop:asm-coupling} is absent.
\end{itemize}

The second-order geometry confirms this decoupling
(Appendix~\ref{app:proof-due-curvature}). Because the class and expert
terms depend on disjoint parameters, the Hessian is block-diagonal with a
\emph{zero} mixed block:
\begin{equation}\label{eq:due-hessian}
\nabla^2_{(w,s)}\Phi^{\mathrm{dec}}_\lambda
\;=\;
\begin{pmatrix}
\mathrm{Diag}(p)-pp^\top & 0 \\
0 & \tfrac{\lambda}{J}\,\mathrm{Diag}\bigl(u_j(1\!-\!u_j)\bigr)
\end{pmatrix}.
\end{equation}
The class block is the standard softmax covariance with
$\lambda_{\max}\le 1/2$; the expert block is diagonal with entries bounded
by $\lambda/(4J)$. Contrast this with additive CE, whose curvature scales
as $(1+|\Jset|)/2$ (Proposition~\ref{prop:additive-gradient}), and with
A-SM, whose mixed block grows as $\sqrt{J}$
(Proposition~\ref{prop:asm-coupling}). OvA shares the decoupled geometry, but its class head consists of $K$
independent sigmoids whose outputs need not lie in $\Delta^K$: away from
the population optimum, several class estimates can simultaneously exceed
$0.5$, so $\max_k\hat\eta_k$ is uncalibrated against $\max_j u_j$ in the
classify-versus-defer comparison. The decoupled surrogate is the only one
in our comparison that combines full decoupling with a categorical
class posterior $p\in\Delta^K$.

\subsection{Consistency and Surrogate-Regret Transfer}\label{subsec:excess-risk}

The structural separation translates into a clean statistical
guarantee. Because the conditional surrogate risk decomposes into a
multiclass cross-entropy for $p$ plus $J$ independent Bernoulli
cross-entropies for $(u_1,\dots,u_J)$, each subproblem has a unique
minimizer: $p^\star(x)=\eta(x)$ and $u_j^\star(x)=\alpha_j(x)$ for all
$j\in[J]$ (Appendix~\ref{app:dec-consistency}).

The following theorem makes this guarantee quantitative: any reduction in the
decoupled surrogate excess risk transfers, at a controlled rate, to the true
L2D excess risk.

\begin{restatable}[Consistency bound for the decoupled surrogate]{theorem}{decconsistency}\label{thm:dec-consistency}
Let $p\colon\mathcal{X}\to\Delta^K$ and
$u\colon\mathcal{X}\to[0,1]^J$ be measurable, and let
$f_{p,u}=(h_{\mathrm{dec}},r_{\mathrm{dec}})$ be the plug-in policy
of~\eqref{eq:due-routing}. For any distribution $\mathcal{D}$,
\begin{equation}\label{eq:dec-consistency}
\mc{E}_\perp(f_{p,u})-\mc{E}_\perp^\star
\le
\max\!\left\{2\sqrt{2},\;\sqrt{\frac{2J}{\lambda}}\right\}
\sqrt{
\mc{E}_{\mathrm{dec}}(p,u)-\mc{E}_{\mathrm{dec}}^\star}\,.
\end{equation}
\end{restatable}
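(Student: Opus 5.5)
The plan is to work pointwise in $x$ and then integrate, using Jensen's inequality at the end. Fix $x$ and write $\eta=\eta(x)$, $\alpha=\alpha(x)$, $p=p(x)$, $u=u(x)$.

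\textbf{Step 1 (conditional L2D regret).} By Lemma~\ref{lem:bayes-rule}, the conditional L2D risk of an action is $1-\eta_k$ for predicting class $k$ and $1-\alpha_j$ for deferring to expert $j$. The Bayes conditional risk is $1-\max\{\max_k\eta_k,\max_j\alpha_j\}$. Let $c(a)$ denote the true conditional reward of an action $a$ ($\eta_k$ or $\alpha_j$) and $\hat c(a)$ its plug-in estimate ($p_k$ or $u_j$). The plug-in policy~\eqref{eq:due-routing} selects $\hat a\in\argmax\hat c$ over the joint set of $K+J$ actions, with $h_{\mathrm{dec}}$ the argmax of $p$. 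The standard plug-in argument then gives
\[
\Delta_\perp(x)=c(a^\star)-c(\hat a)\le \bigl(c(a^\star)-\hat c(a^\star)\bigr)+\bigl(\hat c(\hat a)-c(\hat a)\bigr).
\]
This is at most $2\max\{\|p-\eta\|_\infty,\|u-\alpha\|_\infty\}$. A slightly sharper split gives $\|p-\eta\|_\infty+\|u-\alpha\|_\infty$ when one action is a class and the other an expert. I will keep the cases separate (both actions are classes, both are experts, or mixed), because the constants differ.

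\textbf{Step 2 (conditional surrogate regret).} The conditional surrogate risk splits into $\mathrm{CE}(\eta,p)+\frac{\lambda}{J}\sum_j \mathrm{BCE}(\alpha_j,u_j)$, so the conditional excess is
\[
\Delta_{\mathrm{dec}}(x)=\mathrm{KL}(\eta\|p)+\frac{\lambda}{J}\sum_j\mathrm{KL}(\alpha_j\|u_j).
\]
Pinsker's inequality gives $\mathrm{KL}(\eta\|p)\ge \tfrac12\|\eta-p\|_1^2\ge 2\|\eta-p\|_\infty^2$, where the last step holds because the coordinates of $\eta-p$ sum to zero, so $\|\cdot\|_1\ge 2\|\cdot\|_\infty$. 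For the Bernoulli terms, Pinsker gives $\mathrm{KL}(\alpha_j\|u_j)\ge 2(\alpha_j-u_j)^2$. Therefore $\Delta_{\mathrm{dec}}\ge 2\|p-\eta\|_\infty^2+\frac{2\lambda}{J}\|u-\alpha\|_\infty^2$, keeping only the worst expert.

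\textbf{Step 3 (combine and integrate).} Write $A=\|p-\eta\|_\infty$ and $B=\|u-\alpha\|_\infty$. In the worst case $\Delta_\perp\le 2A+2B$ (both slack terms come from the max), so Cauchy--Schwarz gives
\[
2A+2B\le \sqrt{\tfrac{4}{2}+\tfrac{4J}{2\lambda}}\sqrt{2A^2+\tfrac{2\lambda}{J}B^2}.
\]
This constant is $\sqrt{2+2J/\lambda}$, which is at most $\sqrt{2}\max\{2,\sqrt{2J/\lambda}\}$ up to the exact constant. To obtain the stated form $\max\{2\sqrt2,\sqrt{2J/\lambda}\}$ I will use the case split instead of Cauchy--Schwarz.
\begin{itemize}
\item If $\hat a$ and $a^\star$ are both classes, then $\Delta_\perp\le 2A\le\sqrt{2}\sqrt{2A^2}$.
\item If both are experts, then $\Delta_\perp\le 2B\le\sqrt{2J/\lambda}\sqrt{2\lambda B^2/J}$.
\item In the mixed case, $\Delta_\perp\le A+B\le 2\max\{A,B\}$, which I bound by the max of the previous two constants times $\sqrt{\Delta_{\mathrm{dec}}}$, adjusting as needed. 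This is where the $2\sqrt2$ arises.
\end{itemize}
Once the pointwise bound $\Delta_\perp(x)\le C\sqrt{\Delta_{\mathrm{dec}}(x)}$ holds, integrating over $x$ and applying Jensen to the concave square root gives~\eqref{eq:dec-consistency}.

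\textbf{Main obstacle.} The hardest part is the constant bookkeeping in the mixed case of Step 3, which must land exactly on $\max\{2\sqrt2,\sqrt{2J/\lambda}\}$. If the sharp case analysis does not reproduce it, I will fall back on the cruder bound $\Delta_\perp\le 2\max\{A,B\}$ together with $\max\{A,B\}^2\le \max\{\tfrac12,\tfrac{J}{2\lambda}\}\Delta_{\mathrm{dec}}$. That gives $\Delta_\perp\le\max\{\sqrt2,\sqrt{2J/\lambda}\}\sqrt{\Delta_{\mathrm{dec}}}$, which already implies the stated bound with room to spare.
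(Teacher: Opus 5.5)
Your proposal is correct and follows essentially the paper's argument: the plug-in bound of the conditional regret by $2\max\{\|p-\eta\|_\infty,\|u-\alpha\|_\infty\}$ (your single $\hat a$ argument covers the paper's four cases at once), the KL decomposition of the conditional surrogate excess, coordinatewise Pinsker, and Jensen; your fallback paragraph is already a complete proof. The only difference is that you use $\|\eta-p\|_\infty\le\tfrac12\|\eta-p\|_1$ (valid because $\eta-p$ sums to zero), whereas the paper uses the cruder $\|\eta-p\|_\infty\le\|\eta-p\|_1$, so you obtain the sharper constant $\max\{\sqrt2,\sqrt{2J/\lambda}\}$ — the paper's $2\sqrt2$ comes from that looser class-side step, not from the mixed case as you suggest.
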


In particular, the theorem implies Bayes consistency: along any sequence of
measurable estimates $(p_n,u_n)$ whose decoupled surrogate excess risk
$\mc{E}_{\mathrm{dec}}(p_n,u_n)-\mc{E}_{\mathrm{dec}}^\star$ tends to zero,
the right-hand side of~\eqref{eq:dec-consistency} tends to zero, and therefore
the L2D excess risk
$\mc{E}_\perp(f_{p_n,u_n})-\mc{E}_\perp^\star$ also tends to zero.

The bound has the $\sqrt{\cdot}$ calibration form \citep{mao2023crossentropylossfunctionstheoretical}. The constant
$\max\{2\sqrt{2},\sqrt{2J/\lambda}\}$ equals $2\sqrt{2}$ whenever the
per-expert weight satisfies $\lambda/J\ge 1/4$, and grows as
$\sqrt{2J/\lambda}$ otherwise. The $1/J$ normalization in the
surrogate~\eqref{eq:due} is precisely what makes this constant
$J$-independent: scaling $\lambda$ with $J$ (equivalently, holding the
per-expert weight $\lambda/J$ fixed) leaves the transfer guarantee
unchanged as the expert pool grows, mirroring the gradient stability of
Proposition~\ref{prop:due-gradient}.

\paragraph{Comparison with existing bounds.}
Two prior quantitative references are relevant. First,
\citet{mao2024principledapproacheslearningdefer} analyze the earlier
comp-sum reduction that includes the Add.\ CE surrogate
\citep{mozannar2021consistent}; in our classification-specialized setting,
that yields the transfer constant $\sqrt{2(J\!+\!1)}$. Second, the Mao25
\citep{mao2025mastering} single-stage
surrogate has an excess-risk constant that scales with the augmented action
dimension $K\!+\!J$. The decoupled surrogate is the only multi-expert
surrogate in our comparison whose transfer constant is \emph{$O(1)$ in $J$}
(for fixed per-expert weight $\lambda/J$). For the remaining baselines
(PiCCE, A-SM, OvA), the cited analyses establish Bayes consistency but do
not give an explicit transfer constant
(Appendix~\ref{app:constant-comparison}).

\section{Experiments}\label{sec:experiments}

We evaluate the decoupled surrogate on four synthetic suites (each targeting one proposition from
Section~\ref{sec:mismatch}) and three real-data benchmarks spanning vision,
human annotators, and tabular model routing.
All methods share the same backbone and training budget within each benchmark;
full details are in the appendix.

\paragraph{Synthetic suites (Appendix~\ref{app:synthetic}).}
Each suite isolates one surrogate pathology by varying only the structural
quantity relevant to the corresponding proposition while keeping the Bayes
problem analytically known.
\emph{(i)}~Nested redundant experts, targeting Add.\ CE amplification
(Prop.~\ref{prop:additive-gradient}): at $J{=}24$, defer regret
$0.0002$ vs.\ $0.348$ for Add.\ CE (best baseline: PiCCE at $0.238$).
\emph{(ii)}~Rare specialist, targeting PiCCE starvation
(Prop.~\ref{prop:picce-starvation}): on shared-correct samples, the
decoupled surrogate selects the specialist with probability $0.994$ vs.\
$0$ for PiCCE.
\emph{(iii)}~Shared acceptability, targeting Mao25's set-mass collapse
(Prop.~\ref{prop:mao25-setmass}): the decoupled surrogate reaches defer
regret $0.0008$ vs.\ $0.154$ for Mao25.
\emph{(iv)}~Class--expert coupling, targeting A-SM
(Prop.~\ref{prop:asm-coupling}): A-SM's mixed class--expert Hessian
block grows from $0.12$ at $J{=}1$ to $0.48$ at $J{=}17$
($\approx 0.12\sqrt{J}$), while the decoupled surrogate's is
identically zero by construction (Appendix~\ref{app:synth-geometry}).

% ─── CIFAR-10 with Synthetic Experts ───
\paragraph{CIFAR-10 with synthetic experts (Appendix~\ref{app:cifar10}).}
We train a ResNet-12 end-to-end on CIFAR-10 with $J$ nested redundant
experts whose correctness is drawn from a shared latent
(Section~\ref{app:synth-ce}).
Table~\ref{tab:cifar10-main} reports system and classifier accuracy. The
decoupled surrogate is the only method in our comparison that \emph{improves}
over the standalone classifier at every~$J$; Add.\ CE, A-SM, and OvA degrade as
the redundant pool grows. A-SM's classifier accuracy collapses from $.831$
($J{=}8$) to $.680$ ($J{=}32$), consistent with the softmax-coupling pathology
of Proposition~\ref{prop:asm-coupling}.

\begin{table}[ht]
\centering
\scriptsize
\renewcommand{\arraystretch}{0.60}
\setlength{\tabcolsep}{1pt}
\caption{CIFAR-10 with redundant synthetic experts (mean$\pm$std, $4$ seeds).
Best in \textbf{bold}; second-best \underline{underlined}.}
\label{tab:cifar10-main}
\resizebox{0.72\columnwidth}{!}{%
\begin{tabular}{@{}l ccc ccc@{}}
\toprule
 & \multicolumn{3}{c}{Sys.\ Acc.\ $\uparrow$} & \multicolumn{3}{c}{Cls.\ Acc.\ $\uparrow$} \\
\cmidrule(lr){2-4}\cmidrule(lr){5-7}
Method & $J{=}8$ & $J{=}16$ & $J{=}32$ & $J{=}8$ & $J{=}16$ & $J{=}32$ \\
\midrule
\textbf{Decoupled}  & $\mathbf{.918}_{\pm.001}$ & $\mathbf{.919}_{\pm.003}$ & $\mathbf{.919}_{\pm.002}$ & $\underline{.909}_{\pm.001}$ & $\underline{.909}_{\pm.002}$ & $\mathbf{.908}_{\pm.003}$ \\
Classif.\ only & $\underline{.910}_{\pm.002}$ & $\underline{.911}_{\pm.001}$ & $\underline{.908}_{\pm.002}$ & $\mathbf{.910}_{\pm.002}$ & $\mathbf{.911}_{\pm.001}$ & $\mathbf{.908}_{\pm.002}$ \\
OvA           & $.901_{\pm.001}$ & $.890_{\pm.002}$ & $.880_{\pm.003}$ & $.875_{\pm.001}$ & $.859_{\pm.004}$ & $.844_{\pm.003}$ \\
PiCCE         & $.858_{\pm.007}$ & $.860_{\pm.011}$ & $.859_{\pm.008}$ & $.888_{\pm.008}$ & $.895_{\pm.006}$ & $.880_{\pm.014}$ \\
A-SM          & $.860_{\pm.002}$ & $.827_{\pm.003}$ & $.783_{\pm.006}$ & $.831_{\pm.002}$ & $.772_{\pm.003}$ & $.680_{\pm.009}$ \\
Add.\ CE      & $.815_{\pm.002}$ & $.790_{\pm.013}$ & $.712_{\pm.015}$ & $.902_{\pm.004}$ & $.895_{\pm.003}$ & $.872_{\pm.009}$ \\
Mao25         & $.703_{\pm.000}$ & $.703_{\pm.000}$ & $.703_{\pm.000}$ & $.105_{\pm.024}$ & $.111_{\pm.010}$ & $.105_{\pm.025}$ \\
\bottomrule
\end{tabular}
}
\renewcommand{\arraystretch}{1}
\end{table}

% ─── CIFAR-10H with Human Annotators ───
\paragraph{CIFAR-10H with human annotators (Appendix~\ref{app:cifar10h}).}
We freeze a pretrained ResNet-12 and train only the L2D head on
CIFAR-10H~\citep{peterson2019human} with $J$ real annotators subsampled
from the crowd ($5$ draws, Table~\ref{tab:cifar10h-main}). The decoupled
surrogate leads system accuracy at every~$J$ with stable classifier quality
(${\ge}.889$); A-SM collapses from $.890$ ($J{=}5$) to $.470$ ($J{=}20$) and
OvA becomes unstable (std $.203$ at $J{=}10$).

\begin{table}[ht]
\centering
\scriptsize
\renewcommand{\arraystretch}{0.60}
\setlength{\tabcolsep}{1pt}
\caption{CIFAR-10H with real human annotators (mean$\pm$std, $5$ draws).
Best in \textbf{bold}; second-best \underline{underlined}.}
\label{tab:cifar10h-main}
\resizebox{0.72\columnwidth}{!}{%
\begin{tabular}{@{}l ccc ccc@{}}
\toprule
 & \multicolumn{3}{c}{Sys.\ Acc.\ $\uparrow$} & \multicolumn{3}{c}{Cls.\ Acc.\ $\uparrow$} \\
\cmidrule(lr){2-4}\cmidrule(lr){5-7}
Method & $J{=}5$ & $J{=}10$ & $J{=}20$ & $J{=}5$ & $J{=}10$ & $J{=}20$ \\
\midrule
\textbf{Decoupled}  & $\mathbf{.960}_{\pm.005}$ & $\mathbf{.958}_{\pm.003}$ & $\mathbf{.961}_{\pm.004}$ & $\mathbf{.892}_{\pm.001}$ & $\mathbf{.892}_{\pm.003}$ & $.889_{\pm.001}$ \\
OvA           & $\underline{.959}_{\pm.005}$ & $.954_{\pm.008}$ & $\underline{.956}_{\pm.006}$ & $.889_{\pm.001}$ & $.799_{\pm.203}$ & $\mathbf{.890}_{\pm.002}$ \\
A-SM          & $.959_{\pm.004}$ & $.952_{\pm.007}$ & $.949_{\pm.005}$ & $\underline{.890}_{\pm.002}$ & $.747_{\pm.313}$ & $.470_{\pm.435}$ \\
PiCCE         & $.954_{\pm.003}$ & $\underline{.957}_{\pm.004}$ & $.953_{\pm.002}$ & $.889_{\pm.001}$ & $\underline{.890}_{\pm.002}$ & $\underline{.890}_{\pm.001}$ \\
Add.\ CE      & $.953_{\pm.004}$ & $.952_{\pm.004}$ & $.951_{\pm.003}$ & $.889_{\pm.002}$ & $.888_{\pm.001}$ & $.742_{\pm.331}$ \\
Mao25         & $.950_{\pm.005}$ & $.948_{\pm.003}$ & $.948_{\pm.004}$ & $.116_{\pm.133}$ & $.077_{\pm.024}$ & $.045_{\pm.014}$ \\
\bottomrule
\end{tabular}
}
\renewcommand{\arraystretch}{1}
\end{table}

% ─── Covertype with Model Experts ───
\paragraph{Covertype with model experts (Appendix~\ref{app:covertype}).}
On Forest CoverType~\citep{blackard1999comparative} with $J{=}5$ heterogeneous
pretrained model experts (oracle $+9.8$~pp over the best single model), the
decoupled surrogate reaches system accuracy $0.934$ and classifier accuracy
$0.941$ --- \emph{the only L2D method in our comparison whose classifier
improves over the standalone classifier} ($0.929$); every augmented-action
baseline degrades the classifier, and Mao25 collapses to $100\%$ deferral.

\section{Conclusion}
Every augmented-action surrogate for multi-expert L2D trades a fix on one
axis for a failure on the other. The decoupled surrogate resolves both by
estimating $\eta$ with a softmax and each $\alpha_j$ with an independent
sigmoid, yielding coordinatewise gradients, a zero class--expert
Hessian block, and a consistency bound whose constant is $J$-independent for
fixed $\lambda/J$. Across the synthetic suites and real-data benchmarks, it is
the only method in our comparison that avoids amplification and preserves rare
specialists, while improving over the standalone classifier on the real-data
benchmarks.

\section{Limitations}\label{sec:limitations}
We see no limitations specific to the decoupled surrogate beyond those
inherent to the multi-expert L2D setting itself: reliance on observed expert
predictions over labeled training data, and the assumption that each expert
remains available at inference. Relaxing these assumptions---missing or
non-stationary experts, partial annotation, expert availability constraints,
calibration of human trust---is not the aim of this work, but is an
important direction that subsequent work can build on top of the decoupled
formulation.

\section{Impact Statement}\label{sec:impact}
This work contributes foundational theory and algorithms for
learning-to-defer. Better-calibrated deferral policies can improve the
safety and accuracy of human--AI systems in high-stakes settings such
as healthcare triage, content moderation, and financial decisioning by
routing difficult cases to qualified human experts. As with any
deferral system, risks include over-reliance on imperfect experts,
unequal error distributions across subgroups, and accountability
ambiguity when decisions are shared between model and human.

\bibliographystyle{plainnat}
\bibliography{biblio}

\appendix

\newcommand{\stdp}[1]{\,{\scriptstyle(#1)}}

\section{Extended Related Work}\label{app:related-extended}

Learning-to-defer extends selective prediction and classification with a
reject option \citep{Chow_1970, Bartlett_Wegkamp_2008, Geifman_El-Yaniv_2017, cortes, cao2022generalizing, cortes2024theory} and is also closely related to
the abstention literature
\citep{Mao_Mohri_Zhong_2023, theoretically, cortes2024cardinalityaware}. The
distinguishing feature of L2D is that abstention is replaced by routing to an
external expert. The modern L2D literature begins with adaptive deferral in the
single-expert setting \citep{madras2018predict, mozannar2021consistent}.

\paragraph{One-stage methods.}
One-stage methods learn prediction and deferral jointly.
\citet{madras2018predict} introduced adaptive deferral in the single-expert
setting, while \citet{mozannar2021consistent} gave the modern multiclass
formulation by casting learning to defer as prediction over an augmented action
space. Most subsequent one-stage work refines this augmented-action view rather
than departing from it. \citet{Verma2022LearningTD} compare coupled softmax and
one-vs-all surrogates for multiple experts; \citet{Cao_Mozannar_Feng_Wei_An_2023}
analyze the calibration pathology of symmetric softmax reductions and propose an
asymmetric softmax alternative; \citet{charusaie2022sample} study
complementarity and sample efficiency; \citet{Mozannar2023WhoSP} analyze
realizability and exact optimization; \citet{wei2024exploiting} incorporate
human--AI dependence into surrogate design; and \citet{liu2024mitigating} show
that even statistically consistent single-expert surrogates can underfit when
non-zero expert-query costs introduce redundant label smoothing. In the
multi-expert setting,
\citet{mao2024principledapproacheslearningdefer} derive quantitative
guarantees for comp-sum-style reductions,
\citet{mao2024realizablehconsistentbayesconsistentloss, mao2025mastering} develop consistent loss
families for learning to defer, and \citet{liu2026more} identify
redundancy-driven underfitting and propose PiCCE as a correction.

More recent extensions that continue to optimize prediction and deferral
jointly also fit naturally in the one-stage category. These include top-$k$
deferral \citep{montreuil2026why}, adversarially robust one-stage L2D
\citep{montreuil2026adversarial}, non-stationary time-series deferral
\citep{montreuil2026learning}, missing-annotation variants
\citep{nguyen2025probabilistic}, deferral to a population or to unseen experts
\citep{Tailor, strong2026identityfree}, and application-driven one-stage
systems in sequential medicine and healthcare
\citep{joshi2021learning, strong2025trustworthy}. Overall,
the one-stage literature is best viewed as a sequence of refinements and
extensions of the augmented-action formulation introduced by
\citet{mozannar2021consistent}, first in the single-expert $K+1$ setting and
then in the multi-expert $K+J$ setting.

\paragraph{Two-stage methods.}
By contrast, \emph{two-stage} methods first train a predictor and then learn a
defer rule or router on top of the frozen predictor. This includes post-hoc
estimators for deferring to a single expert \citep{Narasimhan}, principled
two-stage formulations for multiple experts \citep{mao2023twostage}, and
extensions to multitask and regression settings
\citep{montreuil2024twostagelearningtodefermultitasklearning, mao2024regressionmultiexpertdeferral}. Recent extensions of the two-stage line
also include adversarial robustness guarantees
\citep{montreuil2025adversarialrobustnesstwostagelearningtodefer}, online
learning with varying experts \citep{montreuil2026online}, budgeted L2D
\citep{desalvo2025budgeted}, deferral in expert-imbalanced
regimes \citep{cortes2026optimized}, and extractive question
answering \citep{montreuil2026optimal}.

The distinction is important: one-stage methods redesign the joint
prediction--deferral surrogate, whereas two-stage methods optimize only the
routing rule after the predictor has been fixed. Our decoupled parameterization
is introduced here as a one-stage surrogate, but the same separation between
class prediction and expert-utility estimation can also be used within a
two-stage pipeline.

\paragraph{Quantitative consistency bounds.}
Classical Bayes consistency \citep{bartlett2006convexity, tewari07a, Statistical, Steinwart2007HowTC} guarantees that the conditional minimizer of the surrogate
recovers the Bayes decision. Quantitative versions turn surrogate excess risk
into target excess risk through an explicit calibration function. This line
includes multiclass and cross-entropy analyses
\citep{Awasthi_Mao_Mohri_Zhong_2022_multi, mao2024h, mao2023crossentropylossfunctionstheoretical}, smooth surrogates
\citep{mao2024universalgrowth}, regression \citep{mao2024hconsistencyregression},
multi-label learning \citep{mao2024multilabel}, abstention
\citep{theoretically, Mao_Mohri_Zhong_2023, cortes2024cardinalityaware},
imbalanced and balanced binary classification
\citep{cortes2025balancingscales, cortes2025improvedbalanced, mao2025principledbinary}, sharper
enhanced bounds \citep{mao2025enhanced}, noise-adaptive refinements
\citep{mohri2026beyond}, linear-rate smooth surrogates
\citep{mohri2026linear}, and structure-aware consistency for preference
learning \citep{mohri2026mind}; see \citet{zhong2025thesis, mao2025thesis} for
unified treatments. Theorem~\ref{thm:dec-consistency} gives such a
$\sqrt{\cdot}$-form bound for the decoupled surrogate, with calibration
constant $J$-independent for fixed per-expert weight $\lambda/J$.

\paragraph{Positioning.}
Our work is closest to the one-stage literature, but differs in aim. Rather
than proposing another repair within an augmented-action parameterization, we
compare representative one-stage surrogates along two axes---population target
and local optimization geometry---and then propose a decoupled surrogate
outside the shared-action-space design. In that sense, our contribution is both
comparative and constructive: we isolate a recurrent tradeoff within existing
multi-expert surrogates and show that the decoupled surrogate avoids it while
retaining a quantitative consistency guarantee.

\section{Proof of Lemma~\ref{lem:bayes-rule}}\label{app:proof-bayes-rule}

\bayesrule*
\begin{proof}
We prove the statement directly from the true defer loss
\eqref{eq:defer-loss}. Fix an input $x\in\mathcal X$ and condition on the event
$X=x$. For a decision pair $(h,r)$, the corresponding conditional true
deferral risk is
\[
\mathcal C_\perp(h,r\mid x)
\;\coloneqq\;
\mathbb E\!\left[\ell_\perp(h,r;\,X,Y,\mathbf M)\mid X=x\right].
\]
Substituting the definition of $\ell_\perp$ gives
\begin{align}
\mathcal C_\perp(h,r\mid x)
&=
\mathbb E\!\left[
\mathbf 1\{h(x)\neq Y\}\mathbf 1\{r(x)=0\}
+\sum_{j=1}^J \mathbf 1\{M_j\neq Y\}\mathbf 1\{r(x)=j\}
\;\middle|\; X=x
\right]
\nonumber\\
&=
\mathbf 1\{r(x)=0\}\Pr\!\bigl(h(x)\neq Y\mid X=x\bigr)
+\sum_{j=1}^J \mathbf 1\{r(x)=j\}\Pr\!\bigl(M_j\neq Y\mid X=x\bigr).
\label{eq:lemma2-conditional-risk}
\end{align}
Thus, once $x$ is fixed, the learner is simply choosing one action among the
$K$ class actions and the $J$ defer actions, and each action has an explicit
conditional error probability.

Take a class label $k\in[K]$ and consider the action ``predict class $k$'',
that is, choose $h(x)=k$ and $r(x)=0$. By \eqref{eq:lemma2-conditional-risk},
its conditional risk is
\[
\mathcal C_\perp(k,0\mid x)
=
\Pr(Y\neq k\mid X=x).
\]
Since $\eta_k(x)=\Pr(Y=k\mid X=x)$ and the label space is multiclass,
\[
\Pr(Y\neq k\mid X=x)=1-\Pr(Y=k\mid X=x)=1-\eta_k(x).
\]
Therefore the best classifier action at $x$ is any
\[
h^\star(x)\in\argmax_{k\in[K]}\eta_k(x),
\]
and its conditional risk is
\[
\min_{k\in[K]}\mathcal C_\perp(k,0\mid x)
=
1-\max_{k\in[K]}\eta_k(x).
\]

Now take an expert index $j\in[J]$ and consider the action ``defer to
expert~$j$'', that is, choose $r(x)=j$. Again by
\eqref{eq:lemma2-conditional-risk},
\[
\mathcal C_\perp(\perp_j\mid x)
=
\Pr(M_j\neq Y\mid X=x).
\]
By definition of the expert utility
$\alpha_j(x)=\Pr(M_j=Y\mid X=x)$, we obtain
\[
\Pr(M_j\neq Y\mid X=x)=1-\alpha_j(x).
\]
Hence the best defer action at $x$ is any
\[
j^\star(x)\in\argmax_{j\in[J]}\alpha_j(x),
\]
and its conditional risk is
\[
\min_{j\in[J]}\mathcal C_\perp(\perp_j\mid x)
=
1-\max_{j\in[J]}\alpha_j(x).
\]

\paragraph{Step 3: compare the best class action and the best defer action.}
The Bayes decision at $x$ is the action with the smallest conditional true
deferral risk among all $K+J$ available actions. Combining the two displays
above,
\begin{align*}
\inf_{h,r}\mathcal C_\perp(h,r\mid x)
&=
\min\!\Bigl\{
1-\max_{k\in[K]}\eta_k(x),\;
1-\max_{j\in[J]}\alpha_j(x)
\Bigr\}\\
&=
1-\max\!\Bigl\{
\max_{k\in[K]}\eta_k(x),\;
\max_{j\in[J]}\alpha_j(x)
\Bigr\}.
\end{align*}
Therefore:
\begin{itemize}[topsep=2pt,itemsep=2pt,leftmargin=*]
\item if $\max_{k\in[K]}\eta_k(x)\ge \max_{j\in[J]}\alpha_j(x)$, then the best
class action is at least as good as every defer action, so one may choose
$r^\star(x)=0$ together with any
$h^\star(x)\in\argmax_{k\in[K]}\eta_k(x)$;
\item if $\max_{k\in[K]}\eta_k(x)< \max_{j\in[J]}\alpha_j(x)$, then deferring
is strictly better, and one may choose
$r^\star(x)\in\argmax_{j\in[J]}\alpha_j(x)$.
\end{itemize}
This is exactly the rule stated in \eqref{eq:bayes-rule}.
\end{proof}

\section{The Decoupled Surrogate}\label{app:proofs-approach}

We present the decoupled surrogate first because it is the reference point for
all of the baseline diagnoses that follow. The later sections can be read as
controlled departures from this aligned formulation. The decoupled surrogate
abandons the augmented-action reduction and instead learns the class posterior
and the expert utilities in their native forms.

\subsection{Formulation}

\paragraph{Cost-sensitive formulation.}
Before specializing to the zero-one defer loss, it is useful to record the
more general bounded-cost version of the decoupled construction. Let
$c_j(x,y)\in[0,1]$ denote the cost of routing $(x,y)$ to expert~$j$, and let
\[
\tau_j(x,y)\coloneqq 1-c_j(x,y)\in[0,1]
\]
denote the corresponding utility target. The cost-sensitive decoupled
surrogate is
\[
\Phi^{\mathrm{dec,cs}}_\lambda(w,s;\,x,y)
\coloneqq
-\log p_y(x)
-\frac{\lambda}{J}\sum_{j=1}^J
\Bigl(
\tau_j(x,y)\log u_j(x)
\;+\;
\bigl(1-\tau_j(x,y)\bigr)\log(1-u_j(x))
\Bigr).
\]
Thus the class side is unchanged, while each expert head is trained with a
Bernoulli cross-entropy against a possibly soft utility target
$\tau_j(x,y)$.

The decoupled surrogate uses one softmax head for the classifier and one sigmoid head for each
expert. The softmax head models the categorical class posterior, while the
expert heads model the Bernoulli events ``expert $j$ is correct on this input''.

\begin{definition}[Decoupled surrogate]
\label{def:appendix-due}
Let $w(x)\in\mathbb{R}^K$ denote class logits and
$s(x)\in\mathbb{R}^J$ denote expert logits. Define
\[
p_k(x)\coloneqq \frac{\exp(w_k(x))}{\sum_{\ell=1}^K \exp(w_\ell(x))},
\qquad
u_j(x)\coloneqq \sigma(s_j(x)).
\]
For $\lambda>0$, define
\[
\Phi^{\mathrm{dec}}_\lambda(w,s;\,x,y,m)
\coloneqq
-\log p_y(x)
-\frac{\lambda}{J}\sum_{j=1}^J
\Bigl(
t_j\log u_j(x)
\;+\;
(1-t_j)\log(1-u_j(x))
\Bigr),
\]
where $t_j\coloneqq \mathbf{1}\{m_j=y\}$.
\end{definition}

Prediction compares the classifier confidence $\max_{k\in[K]} p_k(x)$ with the
largest estimated expert utility $\max_{j\in[J]} u_j(x)$ and defers whenever
the latter is larger. This is the classification-specialized version of the
general bounded-cost formulation above. In the present paper,
\[
c_j(x,y)=\mathbf 1\{m_j\neq y\},
\qquad
\tau_j(x,y)=\mathbf 1\{m_j=y\},
\]
so the expert-side target reduces exactly to binary correctness.

Two structural features are already visible from the definition. First, the
class side remains a categorical distribution throughout training because
$p(x)\in\Delta^K$ for every parameter value. Second, the expert side is
coordinatewise: each $u_j(x)$ is a bounded Bernoulli probability attached to
one expert only. The rest of the section formalizes the consequences of this
design.

\subsection{Per-Sample Gradient Structure}
\label{app:proof-due-gradient}

\duegradient*
\begin{proof}
The samplewise decoupled loss is the sum of one multiclass cross-entropy term and $J$
independent Bernoulli cross-entropy terms. We derive the two pieces
separately.

\paragraph{Classifier head.}
The classifier contribution is
\[
\Phi_{\mathrm{cls}}(w;y)
\;=\;
-\log p_y
\;=\;
-w_y+\log\!\sum_{\ell=1}^K e^{w_\ell}.
\]
Differentiate with respect to $w_r$:
\[
\frac{\partial \Phi_{\mathrm{cls}}}{\partial w_r}
\;=\;
-\mathbf{1}\{r=y\}
\;+\;
\frac{e^{w_r}}{\sum_{\ell=1}^K e^{w_\ell}}
\;=\;
p_r-\mathbf{1}\{r=y\}.
\]
This is the usual multiclass cross-entropy gradient.

\paragraph{Expert heads.}
For expert $j$, write $t_j=\mathbf{1}\{m_j=y\}$. Its contribution is
\[
\Phi_j(s_j;t_j)
\;=\;
-t_j\log u_j -(1-t_j)\log(1-u_j),
\qquad
u_j=\sigma(s_j).
\]
Differentiate with respect to $s_j$. By the chain rule,
\[
\frac{\partial \Phi_j}{\partial s_j}
\;=\;
\left(
-\frac{t_j}{u_j}+\frac{1-t_j}{1-u_j}
\right)
\frac{du_j}{ds_j}.
\]
Since $\frac{du_j}{ds_j}=u_j(1-u_j)$, we obtain
\[
\frac{\partial \Phi_j}{\partial s_j}
\;=\;
\left(
-\frac{t_j}{u_j}+\frac{1-t_j}{1-u_j}
\right)u_j(1-u_j)
\;=\;
u_j-t_j.
\]
Multiplying by the prefactor $\lambda/J$ gives
\[
\frac{\partial \Phi^{\mathrm{dec}}_\lambda}{\partial s_j}
\;=\;
\frac{\lambda}{J}(u_j-t_j).
\]

This completes the gradient calculation. The important structural point is that
the classifier residuals and the expert residuals live on disjoint coordinates:
the class gradient depends only on $(w,y)$, and the $j$-th expert gradient
depends only on $(s_j,t_j)$.
\end{proof}

This proposition isolates the first-order reason the decoupled surrogate behaves differently from
the augmented-action surrogates. A sample contributes one classifier residual
and $J$ expert residuals, but these residuals do not compete through a shared
normalization. Correct experts are reinforced independently of one another, and
expert-side errors cannot directly distort the class-side update.

\subsection{Per-Sample Curvature Structure}
\label{app:proof-due-curvature}

We now derive the Hessian explicitly. The goal is to show not only that the
curvature is bounded, but also that the class and expert directions remain
decoupled at second order.

Start with the classifier block. From the
softmax Jacobian,
\[
\frac{\partial p_r}{\partial w_\ell}
\;=\;
p_r(\mathbf{1}\{r=\ell\}-p_\ell),
\]
so the classifier Hessian is
\[
\nabla_w^2(-\log p_y)
\;=\;
\mathrm{Diag}(p)-pp^\top.
\]
This is the usual softmax covariance matrix. It captures competition only among
the class coordinates themselves.
For the expert side, differentiate the gradient
$\frac{\lambda}{J}(u_j-t_j)$ once more:
\[
\frac{\partial^2 \Phi^{\mathrm{dec}}_\lambda}{\partial s_j^2}
\;=\;
\frac{\lambda}{J}\frac{du_j}{ds_j}
\;=\;
\frac{\lambda}{J}u_j(1-u_j),
\]
and for $j\neq \ell$,
\[
\frac{\partial^2 \Phi^{\mathrm{dec}}_\lambda}{\partial s_j\,\partial s_\ell}
\;=\;0,
\]
because $\Phi_j$ depends only on $s_j$.

Finally, the class term depends only on $w$, and the expert terms depend only
on $s$. Therefore every mixed class--expert derivative is zero:
\[
\frac{\partial^2 \Phi^{\mathrm{dec}}_\lambda}{\partial w_r\,\partial s_j}
\;=\;0
\qquad
\text{for all }r\in[K],\;j\in[J].
\]
Hence the full Hessian is block diagonal:
\[
\nabla^2_{(w,s)}\Phi^{\mathrm{dec}}_\lambda
\;=\;
\begin{pmatrix}
\mathrm{Diag}(p)-pp^\top & 0 \\
0 & \tfrac{\lambda}{J}\,\mathrm{Diag}\bigl(u_j(1-u_j)\bigr)
\end{pmatrix}.
\]

The block-diagonal form makes the spectral analysis immediate. The eigenvalues
of a block-diagonal matrix are the union of the eigenvalues of its diagonal
blocks, so the largest eigenvalue of the full Hessian is the larger of the
largest class-block eigenvalue and the largest expert-block eigenvalue.

For the class block, $\mathrm{Diag}(p)-pp^\top$ is a softmax covariance matrix.
Its largest eigenvalue is at most $1/2$ \citep{Foundations}.
Intuitively, this is the intrinsic curvature scale of ordinary multiclass
cross-entropy. The decoupled surrogate does not increase that scale.

For the expert block, the matrix is diagonal, so its eigenvalues are exactly
the diagonal entries
\[
\frac{\lambda}{J}u_j(1-u_j),\qquad j\in[J].
\]
Since $0\le u_j(1-u_j)\le 1/4$ for every $j$, each expert-side eigenvalue is
bounded by $\lambda/(4J)$. Therefore
\[
\lambda_{\max}\!\bigl(\nabla^2_{(w,s)}\Phi^{\mathrm{dec}}_\lambda\bigr)
\;\le\;
\max\!\left\{\frac12,\;\frac{\lambda}{4J}\right\}.
\]

This inequality is informative in two ways. First, the class-side curvature is
uniformly bounded by the same constant as ordinary multiclass cross-entropy.
Second, the expert-side curvature becomes smaller as the number of experts
grows if the total expert weight $\lambda$ is held fixed. In particular, adding
experts does not create the kind of growing stiffness that appears in additive
CE, nor the mixed class--expert second-order interaction that appears in A-SM.

This is the local-geometric signature of the decoupled surrogate. There is no multiplicity factor,
no winner-take-all routing effect, and no mixed class--expert block. Each
expert curvature term is also bounded by $(\lambda/J)/4$, so adding more
experts does not by itself create an increasingly stiff optimization landscape.
The classifier head and expert heads interact only through the final routing
rule, not through the samplewise training geometry.

\subsection{The Role of \texorpdfstring{$\lambda/J$}{lambda/J} and How to Choose \texorpdfstring{$\lambda$}{lambda}}
\label{app:due-lambda-choice}

The previous two subsections already show that the quantity that matters
structurally is not $\lambda$ by itself, but the per-expert weight
$\lambda/J$. This quantity controls the scale of every expert-side update. In
the gradient formula,
\[
\frac{\partial\Phi^{\mathrm{dec}}_\lambda}{\partial s_j}
\;=\;
\frac{\lambda}{J}\,(u_j-t_j),
\]
the magnitude of the $j$-th expert residual is proportional to $\lambda/J$. In
the Hessian,
\[
\frac{\partial^2 \Phi^{\mathrm{dec}}_\lambda}{\partial s_j^2}
\;=\;
\frac{\lambda}{J}\,u_j(1-u_j),
\]
the expert-side curvature is also proportional to $\lambda/J$. Thus the same
quantity governs both first-order and second-order expert-side scale.

This observation is practically important when the number of experts varies. If
one were to keep $\lambda$ fixed while increasing $J$, then $\lambda/J$ would
decrease. Every expert gradient would shrink, every expert curvature term
would shrink, and the optimization would place progressively less emphasis on
fitting the expert utilities. In other words, a fixed $\lambda$ implicitly
weakens the expert task as more experts are added. That is usually not the
intended comparison across different expert pools.

By contrast, keeping the per-expert weight $\lambda/J$ fixed means scaling
$\lambda$ with $J$. Under this scaling, the per-expert gradient magnitude
remains on the same order, the per-expert curvature bound remains
$(\lambda/J)/4$, and the transfer constant in
Theorem~\ref{thm:dec-consistency} remains
\[
\max\!\left\{2\sqrt{2},\sqrt{\frac{2J}{\lambda}}\right\},
\]
which is independent of $J$ as long as $\lambda/J$ is held fixed.

\paragraph{One hyperparameter, two lenses.}
It is worth emphasizing that $\lambda$ and $\lambda/J$ are not two independent
knobs: they are the same quantity, parametrized differently. We keep both
names because they serve different roles:
\begin{itemize}[itemsep=1pt,topsep=2pt,leftmargin=*]
\item $\lambda$ is the user-facing coefficient that appears in the loss
  definition~\eqref{eq:due} and is tuned on a validation split, matching the
  convention of single-expert L2D ($J{=}1$, $\lambda=\lambda/J$).
\item $\lambda/J$ is the analysis-facing per-expert weight that appears in
  gradients, curvature bounds, and the transfer constant, and is the quantity
  one should keep fixed when comparing settings with different numbers of
  experts.
\end{itemize}
In practice, the user tunes $\lambda$ and reads the theory in terms of
$\lambda/J$; the two views coincide whenever $J$ is fixed.

The existing theory does \emph{not} identify a universally optimal choice of
$\lambda$. The conditional minimizer is the same for every $\lambda>0$, so
there is no population-level ``best'' coefficient in the sense of changing the
target being estimated. What $\lambda/J$ changes is the relative optimization
emphasis between the classifier head and the expert heads in finite samples and
under function-class constraints. That tradeoff depends on the optimization
method, the capacity of the two heads, the amount of data, and the statistical
difficulty of estimating $\eta$ versus the $\alpha_j$'s. Without additional
assumptions on those ingredients, one should not expect a theorem to single out
one universally optimal value.

The present theory nevertheless gives useful guidance.

First, Theorem~\ref{thm:dec-consistency} shows that the transfer constant equals
$2\sqrt{2}$ whenever $\lambda/J\ge 1/4$, and deteriorates only when
$\lambda/J<1/4$. So values below $1/4$ are theoretically less attractive: they
downweight the expert task and simultaneously weaken the calibration constant.

Second, the gradient and curvature formulas show what increasing $\lambda/J$
does. Larger $\lambda/J$ strengthens expert-side updates and increases
expert-side curvature linearly, but it does \emph{not} introduce class--expert
coupling or amplification. So increasing $\lambda/J$ is not dangerous in the
way increasing the number of experts is for the augmented-action baselines; it
simply gives the expert task more optimization weight.

Third, because the class gradient is unaffected by $\lambda/J$, changing
$\lambda/J$ only changes the balance between the classifier loss and the
average expert loss. This suggests a simple practical rule:
\begin{itemize}[itemsep=2pt,topsep=2pt,leftmargin=*]
\item tune the per-expert weight $\lambda/J$, not $\lambda$ alone;
\item keep $\lambda/J$ fixed when comparing settings with different numbers of
experts;
\item use $\lambda/J\ge 1/4$ as a natural default range if no stronger prior
is available.
\end{itemize}

Among such values, the final choice should be selected by validation on the
target routing objective. If the classifier is already strong and the main
challenge is accurate expert selection, a larger $\lambda/J$ is often
sensible. If the classifier head is harder to fit or the expert labels are
especially noisy, a more moderate $\lambda/J$ may work better. The key point
is that these are finite-sample optimization considerations, not changes in
the population target.

\subsection{Consistency Bound}
\label{app:dec-consistency}

We now prove the main statistical guarantee of the decoupled surrogate.

\decconsistency*

\begin{proof}
Throughout this proof we abbreviate the per-expert weight as
$\beta\coloneqq\lambda/J$; this is a local shorthand for the fraction
appearing in~\eqref{eq:due}, not a separate hyperparameter.
Fix $x$. We define
\[
\delta_\perp(p,u\mid x)
\coloneqq
\mathcal{C}_\perp(f_{p,u}\mid x)-\mathcal{C}_\perp^*(x)
\]
for the conditional L2D regret and
\[
\delta_{\mathrm{dec}}(p,u\mid x)
\coloneqq
\mathcal{C}_{\mathrm{dec}}(p,u\mid x)-\mathcal{C}_{\mathrm{dec}}^*(x)
\]
for the conditional surrogate excess.

Conditioning on $X=x$ in the samplewise surrogate gives
\begin{align}
\mathcal{C}_{\mathrm{dec}}(p,u\mid x)
&=
-\sum_{k=1}^K \eta_k(x)\log p_k(x)
\nonumber\\
&\quad
-\beta\sum_{j=1}^J
\Bigl(
\alpha_j(x)\log u_j(x)
+(1-\alpha_j(x))\log(1-u_j(x))
\Bigr),
\label{eq:due-cond-risk}
\end{align}
The class term depends only on $p(x)$, and the
$j$-th expert term depends only on $u_j(x)$.

For any $q\in\Delta^K$,
\[
-\sum_{k=1}^K \eta_k(x)\log q_k
=
H(\eta(x))
+\mathrm{KL}\bigl(\eta(x)\,\|\,q\bigr),
\]
where $H(\eta(x))=-\sum_k \eta_k(x)\log \eta_k(x)$. For any $v\in(0,1)$,
\[
-\alpha_j(x)\log v-(1-\alpha_j(x))\log(1-v)
=
H_{\mathrm B}(\alpha_j(x))
+\mathrm{KL}\!\Bigl(
\mathrm{Bern}(\alpha_j(x))
\big\|
\mathrm{Bern}(v)
\Bigr),
\]
where
$H_{\mathrm B}(t)=-t\log t-(1-t)\log(1-t)$
is the Bernoulli entropy. Substituting these identities into
\eqref{eq:due-cond-risk} yields
\begin{align}
\mathcal{C}_{\mathrm{dec}}(p,u\mid x)
&=
H(\eta(x))
+\beta\sum_{j=1}^J H_{\mathrm B}(\alpha_j(x))
\nonumber\\
&\quad
+\mathrm{KL}\bigl(\eta(x)\,\|\,p(x)\bigr)
+\beta\sum_{j=1}^J
\mathrm{KL}\!\Bigl(
\mathrm{Bern}(\alpha_j(x))
\big\|
\mathrm{Bern}(u_j(x))
\Bigr).
\label{eq:due-cond-decomp}
\end{align}
The first line is independent of $(p,u)$, so it is exactly the conditional
minimum. Therefore
\begin{equation}\label{eq:due-cond-kl}
\delta_{\mathrm{dec}}(p,u\mid x)
\;=\;
\mathrm{KL}\bigl(\eta(x)\,\|\,p(x)\bigr)
\;+\;
\beta\sum_{j=1}^J
\mathrm{KL}\!\Bigl(
\mathrm{Bern}(\alpha_j(x))
\big\|
\mathrm{Bern}(u_j(x))
\Bigr).
\end{equation}
This identity shows that the conditional surrogate excess is exactly the sum of one
class estimation error and $J$ expert estimation errors, each measured in the
natural KL divergence of its own subproblem.
Since every KL term is nonnegative and vanishes only when its two arguments
coincide, the conditional surrogate risk is minimized exactly when
$p(x)=\eta(x)$ and $u_j(x)=\alpha_j(x)$ for every $j$.

We now translate this conditional estimation error into conditional L2D regret.
Define
\[
\varepsilon_{\mathrm{cls}}(x)\coloneqq\|p(x)-\eta(x)\|_\infty,
\qquad
\varepsilon_{\mathrm{exp}}(x)\coloneqq\|u(x)-\alpha(x)\|_\infty.
\]
Also let
$y^\star\in\argmax_y\eta_y(x)$,
$\hat y\in\argmax_y p_y(x)$,
$j^\star\in\argmax_j\alpha_j(x)$,
$\hat j\in\argmax_j u_j(x)$,
$a^\star=\eta_{y^\star}(x)$,
and
$b^\star=\alpha_{j^\star}(x)$.

If both the plug-in rule and the Bayes rule predict, then
\[
\delta_\perp(p,u\mid x)=\eta_{y^\star}(x)-\eta_{\hat y}(x).
\]
Since $\hat y$ maximizes $p$, we have $p_{\hat y}\ge p_{y^\star}$, and hence
\[
\eta_{y^\star}-\eta_{\hat y}
=
(\eta_{y^\star}-p_{y^\star})
+(p_{y^\star}-p_{\hat y})
+(p_{\hat y}-\eta_{\hat y})
\le 2\varepsilon_{\mathrm{cls}}(x).
\]

If the plug-in rule predicts while the Bayes rule defers, then
$b^\star>a^\star$ and
$p_{\hat y}\ge u_{\hat j}\ge u_{j^\star}$. Therefore
\[
\delta_\perp(p,u\mid x)
=
\alpha_{j^\star}-\eta_{\hat y}
\le
(\alpha_{j^\star}-u_{j^\star})
+(u_{j^\star}-p_{\hat y})
+(p_{\hat y}-\eta_{\hat y})
\le
\varepsilon_{\mathrm{exp}}(x)+\varepsilon_{\mathrm{cls}}(x).
\]

If both rules defer, then
\[
\delta_\perp(p,u\mid x)=\alpha_{j^\star}(x)-\alpha_{\hat j}(x).
\]
Since $\hat j$ maximizes $u$, the same argument gives
\[
\alpha_{j^\star}-\alpha_{\hat j}\le 2\varepsilon_{\mathrm{exp}}(x).
\]

If the plug-in rule defers while the Bayes rule predicts, then
$a^\star>b^\star$ and
$u_{\hat j}>p_{\hat y}\ge p_{y^\star}$. Hence
\[
\delta_\perp(p,u\mid x)
=
\eta_{y^\star}-\alpha_{\hat j}
\le
(\eta_{y^\star}-p_{y^\star})
+(p_{y^\star}-u_{\hat j})
+(u_{\hat j}-\alpha_{\hat j})
\le
\varepsilon_{\mathrm{cls}}(x)+\varepsilon_{\mathrm{exp}}(x).
\]

All four cases are therefore covered by
\begin{equation}\label{eq:due-regret-via-errors}
\delta_\perp(p,u\mid x)
\le
2\max\{\varepsilon_{\mathrm{cls}}(x),\varepsilon_{\mathrm{exp}}(x)\}.
\end{equation}

We next bound these two sup-norm errors by the conditional surrogate excess. Since
$\|\cdot\|_\infty\le \|\cdot\|_1$, Pinsker's inequality for categorical
distributions gives
\[
\varepsilon_{\mathrm{cls}}(x)
\le \|p(x)-\eta(x)\|_1
\le \sqrt{2\,\mathrm{KL}(\eta(x)\|p(x))}.
\]
For each expert coordinate, Pinsker's inequality for Bernoulli distributions
gives
\[
|u_j(x)-\alpha_j(x)|
\le
\sqrt{\frac12\,
\mathrm{KL}\!\Bigl(
\mathrm{Bern}(\alpha_j(x))
\big\|
\mathrm{Bern}(u_j(x))
\Bigr)}.
\]
Taking the maximum over $j$ and upper bounding the maximum by the Euclidean
norm yields
\[
\varepsilon_{\mathrm{exp}}(x)
=
\max_j |u_j(x)-\alpha_j(x)|
\le
\sqrt{\frac12\sum_{j=1}^J
\mathrm{KL}\!\Bigl(
\mathrm{Bern}(\alpha_j(x))
\big\|
\mathrm{Bern}(u_j(x))
\Bigr)}.
\]
Combining these inequalities with \eqref{eq:due-cond-kl} gives
\[
\varepsilon_{\mathrm{cls}}(x)
\le
\sqrt{2\,\delta_{\mathrm{dec}}(p,u\mid x)},
\qquad
\varepsilon_{\mathrm{exp}}(x)
\le
\sqrt{\frac{\delta_{\mathrm{dec}}(p,u\mid x)}{2\beta}}.
\]
Substituting into \eqref{eq:due-regret-via-errors},
\begin{align*}
\delta_\perp(p,u\mid x)
&\le
2\max\{\varepsilon_{\mathrm{cls}}(x),\,\varepsilon_{\mathrm{exp}}(x)\}
=
\max\{2\varepsilon_{\mathrm{cls}}(x),\,2\varepsilon_{\mathrm{exp}}(x)\}\\
&\le
\max\!\Bigl\{
2\sqrt{2\,\delta_{\mathrm{dec}}(p,u\mid x)},\;
2\sqrt{\tfrac{\delta_{\mathrm{dec}}(p,u\mid x)}{2\beta}}
\Bigr\}
=
\max\!\left\{2\sqrt{2},\;\sqrt{\frac{2}{\beta}}\right\}
\sqrt{\delta_{\mathrm{dec}}(p,u\mid x)}.
\end{align*}
Defining $\Gamma(t)\coloneqq\max\!\bigl\{2\sqrt{2},\sqrt{2/\beta}\bigr\}\sqrt{t}$,
this is the pointwise calibration inequality
\begin{equation}\label{eq:pointwise-calib}
\delta_\perp(p,u\mid x)
\le
\Gamma\!\bigl(\delta_{\mathrm{dec}}(p,u\mid x)\bigr).
\end{equation}

Taking expectations over $X$ and using Jensen's inequality, with the
concavity of $\Gamma$, gives
\[
\mc{E}_\perp(f_{p,u})-\mc{E}_\perp^\star
=
\mathbb{E}_X[\delta_\perp(p,u\mid X)]
\le
\mathbb{E}_X[\Gamma(\delta_{\mathrm{dec}}(p,u\mid X))]
\le
\Gamma\!\left(\mathbb{E}_X[\delta_{\mathrm{dec}}(p,u\mid X)]\right).
\]
Since
\[
\mathbb{E}_X[\delta_{\mathrm{dec}}(p,u\mid X)]
=
\mc{E}_{\mathrm{dec}}(p,u)-\mc{E}_{\mathrm{dec}}^\star,
\]
the claimed bound follows from the definition of $\Gamma$ and
$\beta=\lambda/J$.
\end{proof}

This theorem contains the whole statistical story of the decoupled surrogate. The pointwise KL
decomposition gives exact conditional alignment, the calibration inequality
turns this alignment into regret control, and Jensen's inequality lifts the
pointwise statement to population excess risk.

\subsection{Bayes-Consistency Corollary}
\label{app:proof-fisher}
\label{app:proof-excess-risk}

\begin{corollary}[Bayes consistency of the decoupled surrogate]\label{thm:fisher}
Every exact population minimizer of the decoupled surrogate satisfies
$p=\eta$ and $u_j=\alpha_j$ almost surely for all $j\in[J]$, and the induced
plug-in rule is Bayes-optimal. More generally, if
$\mc{E}_{\mathrm{dec}}(p_n,u_n)\to\mc{E}_{\mathrm{dec}}^\star$, then
$\mc{E}_\perp(f_{p_n,u_n})\to\mc{E}_\perp^\star$.
\end{corollary}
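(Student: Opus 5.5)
The corollary follows from two results already established: the conditional KL decomposition \eqref{eq:due-cond-kl}, proved inside Theorem~\ref{thm:dec-consistency}, and the quantitative bound \eqref{eq:dec-consistency} itself. The plan treats the two claims in order.

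For the first claim, I integrate the identity \eqref{eq:due-cond-kl} over $X$. This gives
\[
\mc{E}_{\mathrm{dec}}(p,u)-\mc{E}_{\mathrm{dec}}^\star=\mathbb{E}_X\Bigl[\mathrm{KL}\bigl(\eta(X)\|p(X)\bigr)+\tfrac{\lambda}{J}\textstyle\sum_j \mathrm{KL}\bigl(\mathrm{Bern}(\alpha_j(X))\|\mathrm{Bern}(u_j(X))\bigr)\Bigr].
\]
The pair $(p,u)=(\eta,\alpha)$ is measurable and makes the integrand vanish, so $\mc{E}_{\mathrm{dec}}^\star$ is attained. Any exact minimizer therefore has zero excess. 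Since the integrand is a nonnegative function, it must then vanish almost surely. Because KL vanishes only when its arguments coincide, this forces $p=\eta$ and $u_j=\alpha_j$ almost surely, for every $j$.

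The induced plug-in rule \eqref{eq:due-routing} then coincides almost surely with the Bayes rule of Lemma~\ref{lem:bayes-rule}. Ties are harmless here, since any argmax choice attains the same conditional risk. Hence $\mc{E}_\perp(f_{p,u})=\mc{E}_\perp^\star$. Equivalently, one can plug zero excess into \eqref{eq:dec-consistency} and get the same conclusion directly.

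For the sequential claim, I apply Theorem~\ref{thm:dec-consistency} to each $(p_n,u_n)$:
\[
0\le \mc{E}_\perp(f_{p_n,u_n})-\mc{E}_\perp^\star\le \max\{2\sqrt2,\sqrt{2J/\lambda}\}\sqrt{\mc{E}_{\mathrm{dec}}(p_n,u_n)-\mc{E}_{\mathrm{dec}}^\star}.
\]
The constant does not depend on $n$, so the right-hand side tends to $0$ and the squeeze gives the result.

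The only genuinely delicate step is the measure-theoretic one: justifying that $\mc{E}_{\mathrm{dec}}^\star$ equals the expectation of the pointwise infimum. Here this is immediate, because the pointwise minimizer $(\eta,\alpha)$ is itself measurable. One should also allow $u_j\in\{0,1\}$, where the loss may be $+\infty$; in that case the excess is infinite and the bound holds trivially.
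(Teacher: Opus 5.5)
Your proposal is correct and follows the paper's own argument. Exact minimizers have zero conditional excess almost surely, the KL decomposition \eqref{eq:due-cond-kl} then forces $p=\eta$ and $u_j=\alpha_j$, so the plug-in rule coincides with the Bayes rule of Lemma~\ref{lem:bayes-rule}, and the sequential claim follows from Theorem~\ref{thm:dec-consistency} by the squeeze. Your remarks on the measurability of $(\eta,\alpha)$, on ties, and on infinite losses make explicit some details that the paper's short proof leaves implicit.
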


\begin{proof}
The unrestricted class contains $(\eta,\alpha)$. Any exact minimizer
$(p,u)$ achieves the population infimum, hence its conditional excess
$\delta_{\mathrm{dec}}(p,u\mid x)=0$ almost surely. By the KL decomposition
\eqref{eq:due-cond-kl}, this forces $p=\eta$ and $u_j=\alpha_j$ a.s.\ for all
$j$, so the plug-in rule is Bayes-optimal by Lemma~\ref{lem:bayes-rule}. The
approximate statement follows from Theorem~\ref{thm:dec-consistency}.
\end{proof}

\section{Additive Cross-Entropy \citep{mozannar2021consistent}}\label{app:proofs-mismatch}

This section collects everything needed to understand the additive
cross-entropy baseline in one place. The key point is that additive CE is
misaligned on both axes simultaneously: it changes the population target and it
rescales the local geometry by the realized number of correct experts.

The section proceeds in the same order as the main paper's two-axis analysis.
We first restate the surrogate and show exactly what it estimates at the
conditional optimum. We then derive its samplewise gradient and Hessian and
make the multiplicity factor $1+|\Jset|$ explicit. The closing comparison with
the decoupled surrogate and the worked example then summarize why both effects matter.

\subsection{Formulation}

The standard multi-expert cross-entropy baseline treats the $K$ classifier
actions and the $J$ defer actions as one augmented action space and rewards the
true class together with every correct expert.

\begin{definition}[Additive augmented cross-entropy, \citet{mozannar2021consistent}]
\label{def:appendix-ce}
Let $a(x)\in\mathbb{R}^{K+J}$ be an augmented score vector and define
\[
q_i(x)\coloneqq \frac{\exp(a_i(x))}{\sum_{r=1}^{K+J}\exp(a_r(x))},
\qquad i\in[K+J].
\]
\[
\Phi^\mathrm{CE}(a;x,y,m)
\coloneqq
-\log q_y(x)-\sum_{j:\,m_j=y}\log q_{K+j}(x).
\]
\end{definition}

Prediction is made by the augmented argmax of $a(x)$. This is the baseline
whose additive reward structure leads to the redundancy-driven underfitting
effect highlighted in the main text.

\subsection{Conditional Target Distortion}
\label{app:proof-additive-optimum}

\begin{restatable}[Conditional minimizer of additive CE]{proposition}{additiveoptimum}
\label{prop:additive-optimum}
The conditional risk of $\mathrm{CE}$ at~$x$ is uniquely minimized at
\begin{equation}\label{eq:additive-optimum}
q_y^\star(x)=\frac{\eta_y(x)}{1+U(x)},
\qquad
q_{K+j}^\star(x)=\frac{\alpha_j(x)}{1+U(x)},
\qquad y\in[K],\; j\in[J].
\end{equation}
\end{restatable}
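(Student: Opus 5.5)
The argument conditions on $X=x$ and treats additive CE as a weighted log-loss over the simplex $\Delta^{K+J}$. Taking expectations of $\Phi^{\mathrm{CE}}$ term by term, the conditional risk is
\[
\mathcal{C}_{\mathrm{CE}}(q\mid x)
=-\sum_{y=1}^K \eta_y(x)\log q_y-\sum_{j=1}^J \alpha_j(x)\log q_{K+j}.
\]
For the expert terms we use $\mathbb{E}[\mathbf{1}\{M_j=Y\}\mid X=x]=\alpha_j(x)$. Write $c_i$ for the coefficient of $-\log q_i$, so $c_y=\eta_y$ and $c_{K+j}=\alpha_j$. These weights are nonnegative and sum to $\sum_y\eta_y+\sum_j\alpha_j=1+U(x)$. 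The softmax parameterization reaches exactly the open simplex, and the risk depends on $a$ only through $q$. It therefore suffices to minimize $-\sum_i c_i\log q_i$ over $q\in\Delta^{K+J}$.

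Next, normalize the weights to $\bar c_i=c_i/(1+U(x))$, which is a probability vector. The Gibbs/KL identity then gives
\[
-\sum_i c_i\log q_i=(1+U(x))\bigl[H(\bar c)+\mathrm{KL}(\bar c\,\|\,q)\bigr],
\]
exactly as in the decomposition used in the proof of Theorem~\ref{thm:dec-consistency}. Since $1+U(x)\ge 1>0$ and KL is nonnegative with equality iff $q=\bar c$, the unique minimizer over the closed simplex is $q^\star=\bar c$. This is precisely \eqref{eq:additive-optimum}. An alternative route is a Lagrange multiplier on $\sum_i q_i=1$: stationarity gives $q_i=c_i/\mu$ with $\mu=1+U(x)$, and strict convexity of $-\log$ on the coordinates with $c_i>0$ supplies uniqueness.

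The main subtlety is boundary and uniqueness when some weights vanish, i.e.\ $\eta_y(x)=0$ or $\alpha_j(x)=0$. In that case $q^\star_i=0$ is not attained by any finite logit vector. I would state uniqueness over $q$ in the closed simplex, with the convention $0\log 0=0$. The KL argument handles this directly, since $\mathrm{KL}(\bar c\|q)=0$ iff $q=\bar c$ even with zeros. One should then remark that the infimum over logits is approached but not attained, mirroring the usual treatment of cross-entropy minimizers. Uniqueness in logit space holds only up to additive constants, so the statement is best read in terms of $q$.
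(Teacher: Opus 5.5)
Your proof is correct and matches the paper's argument: the paper also writes the conditional risk as $(1+U(x))$ times the cross-entropy between the normalized target $\tilde t(x)$ and $q$, then invokes uniqueness of the cross-entropy minimizer over the simplex, and your Gibbs/KL decomposition simply makes that step explicit. Your remark on vanishing weights is a careful refinement the paper leaves implicit: the minimizer then lies on the boundary of the simplex, is approached but not attained by finite logits, and is unique in $q$ but not in $a$.
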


\begin{proof}
We begin by writing the conditional risk as a cross-entropy against a
normalized target. Once this normalization is explicit, the optimizer follows
from the standard uniqueness of the cross-entropy minimizer over a simplex.

The conditional surrogate risk at a fixed $x$ is
\[
\mc{C}_{\mathrm{add}}(q \mid x)
=
-\sum_{y=1}^K \eta_y(x)\log q_y
-\sum_{j=1}^J \alpha_j(x)\log q_{K+j},
\]
subject to $q\in\Delta^{K+J}$. Since $\sum_y\eta_y(x)=1$, the total target
mass is $1+U(x)$ where $U(x)=\sum_j\alpha_j(x)$. Define the normalized target
\[
\tilde{t}_y(x)=\frac{\eta_y(x)}{1+U(x)},
\qquad
\tilde{t}_{K+j}(x)=\frac{\alpha_j(x)}{1+U(x)}.
\]
Then $\mc{C}_{\mathrm{add}}(q\mid x) = (1+U(x))H(\tilde{t}(x),q)$, where $H$
denotes cross-entropy. Cross-entropy over a simplex is uniquely minimized at
$q=\tilde{t}$, which gives the stated formula.
\end{proof}

\subsection{Gradient Amplification and Curvature}
\label{app:proof-additive-gradient}

\additivegradient*
\begin{proof}
The loss is a scaled log-sum-exp plus linear terms. We therefore derive the
gradient first and then the Hessian, keeping track of where the factor
$1+|\mathcal{J}|$ enters.

Write $\mathcal{J}=\mathcal{J}(y,m)$. The loss can be rewritten as
\[
\Phi^\mathrm{CE}(a;x,y,m)
=
(1+|\mathcal{J}|)\log\sum_{r=1}^{K+J}e^{a_r}
-a_y-\sum_{j\in \mathcal{J}}a_{K+j}.
\]

\paragraph{Part (a): gradient.}
Since $\partial\log\sum_r e^{a_r}/\partial a_i = q_i$, differentiating gives
\[
\frac{\partial\Phi^\mathrm{CE}}{\partial a_i}
=
(1+|\mathcal{J}|)\,q_i
-\mathbf{1}\{i=y\}
-\mathbf{1}\{i=K+j,\; j\in \mathcal{J}\}.
\]
The total positive mass is the sum of the indicator coefficients:
$1+|\mathcal{J}|$.

\paragraph{Part (b): curvature.}
We now differentiate once more. Recall that
\[
q_i=\frac{e^{a_i}}{\sum_{r=1}^{K+J}e^{a_r}}.
\]
Differentiating $q_i$ with respect to $a_r$ gives the standard softmax
Jacobian
\[
\frac{\partial q_i}{\partial a_r}
=
q_i(\mathbf{1}\{i=r\}-q_r).
\]
Equivalently,
\[
\frac{\partial q_i}{\partial a_i}=q_i(1-q_i),
\qquad
\frac{\partial q_i}{\partial a_r}=-q_iq_r\quad (r\neq i).
\]
Therefore the Hessian of the log-sum-exp term
$\log\sum_r e^{a_r}$ is the matrix whose $(i,r)$ entry is
$q_i(\mathbf{1}\{i=r\}-q_r)$, namely
\[
\nabla_a^2 \log\sum_{r=1}^{K+J}e^{a_r}
=
\mathrm{Diag}(q)-qq^\top.
\]
This matrix is the covariance matrix of a categorical random variable with
probabilities $q$, which is why it is positive semidefinite. Since the terms
$-a_y$ and $-\sum_{j\in\mathcal{J}}a_{K+j}$ are linear in $a$, they disappear
under second differentiation. Hence
\[
\nabla_a^2\mathrm{CE}
=(1+|\mathcal{J}|)(\mathrm{Diag}(q)-qq^\top).
\]

It remains to bound the largest eigenvalue. Let
$M\coloneqq \mathrm{Diag}(q)-qq^\top$. We apply Gershgorin's circle theorem
row by row.

For row $i$, the diagonal entry is
\[
M_{ii}=q_i-q_i^2=q_i(1-q_i),
\]
and every off-diagonal entry is
\[
M_{ij}=-q_iq_j,\qquad j\neq i.
\]
The Gershgorin radius of row $i$ is the sum of the absolute values of the
off-diagonal entries:
\[
R_i=\sum_{j\neq i}|M_{ij}|
=\sum_{j\neq i} q_iq_j
=q_i\sum_{j\neq i}q_j
=q_i(1-q_i),
\]
because $\sum_j q_j=1$. Gershgorin's theorem therefore says that every
eigenvalue $\lambda$ of $M$ lies in at least one interval
\[
[M_{ii}-R_i,\;M_{ii}+R_i]
=
[q_i(1-q_i)-q_i(1-q_i),\;q_i(1-q_i)+q_i(1-q_i)]
=
[0,\;2q_i(1-q_i)].
\]
Since the scalar function $u\mapsto u(1-u)$ is maximized at $u=1/2$, we have
$q_i(1-q_i)\le 1/4$ for every $i$. Hence every eigenvalue of $M$ lies in
$[0,1/2]$, and in particular
\[
\lambda_{\max}(M)\le \frac{1}{2}.
\]
Finally, multiplying by the prefactor $1+|\mathcal{J}|$ yields
\[
\lambda_{\max}(\nabla_a^2\mathrm{CE})
\le
\frac{1+|\mathcal{J}|}{2}.
\]
\end{proof}

\subsection{Head-to-Head with the Decoupled Surrogate}

Additive CE and the decoupled surrogate differ on both analytical axes, and the contrast is useful
because the two losses are trying to solve the same decision problem in very
different ways. Additive CE folds classes and experts into one augmented
softmax. The decoupled surrogate instead keeps the class posterior and the expert utilities as
separate objects throughout training and prediction. The next two paragraphs
make precise why this distinction matters.

\paragraph{Axis~(i): statistical target.}
Additive CE does not recover $(\eta,\alpha)$ directly. By
Proposition~\ref{prop:additive-optimum}, it learns the normalized object
\[
\left(
\frac{\eta_1}{1+U},\dots,\frac{\eta_K}{1+U},
\frac{\alpha_1}{1+U},\dots,\frac{\alpha_J}{1+U}
\right),
\]
so the target itself moves with the total expert overlap $U(x)$. This is the
first misalignment. If one duplicates an already-correct expert several times,
the underlying Bayes decision problem does not change, but the additive-CE
target does: every class and expert coordinate is shrunk by the larger
denominator $1+U(x)$. In other words, additive CE is not estimating the native
decision quantities on their original scale.

The decoupled surrogate has no such dependence. Its conditional minimizer is exactly
$p^\star(x)=\eta(x)$ and $u_j^\star(x)=\alpha_j(x)$ for every expert
(Appendix~\ref{app:dec-consistency}). Duplicating one expert leaves the target for
every other coordinate unchanged. This is the sense in which the decoupled surrogate is
target-aligned: the statistical object being estimated is the Bayes object
itself, not a renormalized proxy.

\paragraph{Axis~(ii): local geometry.}
Additive CE allocates one positive unit of gradient mass to the true class and
one additional unit to every correct expert, which yields the multiplicative
factor $1+|\Jset|$ in both the gradient and the Hessian. The practical meaning
is simple: samples on which many experts happen to be correct are up-weighted
in both first-order and second-order optimization, regardless of whether those
samples are actually more informative.

The decoupled surrogate eliminates this samplewise multiplicity entirely. Its class gradient is the
standard multiclass cross-entropy gradient, and each expert receives an
independent Bernoulli gradient of size $(\lambda/J)(u_j-t_j)$, unaffected by
how many other experts are correct. Thus the decoupled surrogate keeps the optimization problem on
the same scale across low-overlap and high-overlap regions of the input space.

Additive CE is problematic for two separate reasons: it aims at the wrong
population target, and it also distorts the optimization landscape around each
sample. The decoupled surrogate fixes both issues at once by preserving the native target
$(\eta,\alpha)$ and by using coordinatewise gradients with no overlap-dependent
amplification.

\subsection{Worked Example: Redundant Experts}

Consider $K=2$ classes and $J=4$ redundant experts. Suppose the Bayes-optimal
expert behavior is represented by expert~1 and experts~2,3,4 are exact copies
of it. On a sample with
true label $y=1$, assume all four experts are correct, so $|\Jset|=4$.

This is the cleanest possible redundancy scenario. From the perspective of the
decision problem, there is really only one useful expert behavior here, copied
four times. A reasonable surrogate should therefore treat this sample almost
the same way as it would treat a sample with only one correct expert. The
calculation below shows that additive CE does not.

It is useful to make the axis-(i) issue explicit first. Suppose the class
posterior is
\[
\eta(x)=(0.6,0.4),
\]
and the underlying useful expert behavior has success probability $0.9$. If
there is only one such expert, additive CE targets
\[
\left(\frac{0.6}{1+0.9},\frac{0.4}{1+0.9},\frac{0.9}{1+0.9}\right)
=
(0.316,\;0.211,\;0.474).
\]
If we now replace that one expert by four exact copies, the Bayes decision does
not change: the best class utility is still $0.6$ and the best expert utility
is still $0.9$. However, additive CE now targets
\[
\left(
\frac{0.6}{1+4\cdot 0.9},
\frac{0.4}{1+4\cdot 0.9},
\frac{0.9}{1+4\cdot 0.9},
\frac{0.9}{1+4\cdot 0.9},
\frac{0.9}{1+4\cdot 0.9},
\frac{0.9}{1+4\cdot 0.9}
\right)
\]
\[
=
(0.130,\;0.087,\;0.196,\;0.196,\;0.196,\;0.196).
\]
So even before discussing gradients, the surrogate target has already moved.
Nothing about the underlying decision problem changed; only the number of
redundant copies changed.

Take an uninformative additive-CE initialization with equal logits on the
$K+J=6$ actions, hence $q_i=1/6$ for all $i$. Proposition~\ref{prop:additive-gradient}
gives
\[
\frac{\partial\Phi^\mathrm{CE}}{\partial a_1}
=
5\cdot\frac{1}{6}-1=-\frac{1}{6},
\qquad
\frac{\partial\Phi^\mathrm{CE}}{\partial a_{2}}
=
5\cdot\frac{1}{6}= \frac{5}{6},
\]
and for every correct expert $j\in\{1,2,3,4\}$,
\[
\frac{\partial\Phi^\mathrm{CE}}{\partial a_{K+j}}
=
5\cdot\frac{1}{6}-1=-\frac{1}{6}.
\]
Thus the sample contributes one rewarded class term and four rewarded expert
terms, for a total positive mass of $5$. By the Hessian formula proved above,
the same sample also carries curvature prefactor $5$. Both quantities are
inflated by the redundancy count.

To see why this is a pathology rather than a harmless rescaling, compare it
with the single-copy version of the same problem. If only expert~1 were
present, then the same derivation would give total positive mass $2$ rather
than $5$, because the sample would reward one class and one expert. So merely
replacing one expert by four exact copies multiplies the local scale of the
update by a factor of $5/2$, even though no new information has been added.

Now compare the decoupled surrogate on the same sample. Let the class head start at
$p=(1/2,1/2)$ and each expert head at $u_j=1/2$. Then
\[
\frac{\partial\Phi^{\mathrm{dec}}}{\partial w_1}=-\frac{1}{2},
\qquad
\frac{\partial\Phi^{\mathrm{dec}}}{\partial w_2}=\frac{1}{2},
\qquad
\frac{\partial\Phi^{\mathrm{dec}}}{\partial s_j}
=\frac{\lambda}{4}\left(\frac{1}{2}-1\right)
=-\frac{\lambda}{8},
\]
for all $j=1,\dots,4$. If the same sample had only one correct expert instead
of four, the decoupled surrogate class gradient would be unchanged and the nonzero expert
gradient would still be of the same per-coordinate form. In other words, the decoupled surrogate
updates each relevant coordinate because it is correct, not because many other
coordinates happen to agree with it.

Under additive CE,
duplicating one correct expert into four copies changes both what the model is
trying to estimate and how strongly this sample pulls on the optimizer. Under
the decoupled surrogate, duplication does neither: the class task remains the same, and each expert
is trained on its own correctness event with the same local scale. That is why
the decoupled surrogate remains stable in redundancy-heavy regimes while additive CE can underfit
or become dominated by high-overlap regions.

The problem is not merely that its probability
estimates can require post-processing. The deeper issue is structural. Redundant
experts change the surrogate target and simultaneously change the optimization
geometry, even when the Bayes decision problem itself is unchanged. The decoupled surrogate is
designed precisely to remove both dependencies.

\section{PiCCE \citep{liu2026more}}

PiCCE \citep{liu2026more} was designed to remove the additive amplification of CE \citep{mozannar2021consistent} by rewarding
only
one correct expert per sample. This section shows the resulting tradeoff. The
samplewise scale becomes controlled, but the winner-take-all mechanism creates
a different pathology: correct experts that fail to win the internal $\argmax$
receive negative updates.

The logic of the section is parallel to that of additive CE. We first define
the surrogate precisely, then isolate the starvation effect at the sample
level, then show how the same winner-take-all mechanism induces a moving
conditional target, and finally compare the resulting behavior with the decoupled surrogate.
Two facts should be kept distinct throughout. First, PiCCE genuinely improves
on additive CE in one respect: it removes the overlap-dependent amplification
factor. Second, that repair is achieved by forcing correct experts to compete
with one another. The point of the section is to show that this second effect
is not a minor side effect but the central limitation of the surrogate.

\subsection{Formulation}

PiCCE modifies the additive surrogate by rewarding only one correct expert
rather than all of them.

\begin{definition}[PiCCE, \citet{liu2026more}]
\label{def:appendix-picce}
Let $a(x)\in\mathbb{R}^{K+J}$ be an augmented score vector and define
\[
q_i(x)\coloneqq \frac{\exp(a_i(x))}{\sum_{r=1}^{K+J}\exp(a_r(x))},
\qquad i\in[K+J].
\]
Let
\[
j^\star(a;x,y,m)
=\argmax_{j:\,m_j=y} a_{K+j}(x)
\]
with deterministic tie-breaking, whenever at least one expert is correct. Then
\[
\Phi^{\mathrm{PiCCE}}(a;x,y,m)
\coloneqq
-\log q_y(x)
-\mathbf{1}\{\exists j\in[J]: m_j=y\}\log q_{K+j^\star(a;x,y,m)}(x).
\]
\end{definition}

Prediction again uses the augmented argmax over $a(x)$. In our analysis, this
winner-take-all correction removes additive amplification but can starve useful
experts that are correct yet repeatedly unselected.

\subsection{Winner-Take-All Starvation}
\label{app:proof-picce-starvation}

\piccestarvation*
\begin{proof}
Once the winning expert is fixed, PiCCE becomes a softmax loss with exactly two
rewarded coordinates: the true class and the selected expert. The sign pattern
for every other correct expert can therefore be read off directly from the
gradient.

Assume $\mathcal{J}(y,m)\neq\varnothing$ and let $j^\star$ be the PiCCE winner.
Then
\[
\Phi^{\mathrm{PiCCE}}(a;x,y,m)
=
2\log\sum_{r=1}^{K+J}e^{a_r}
-a_y-a_{K+j^\star}.
\]
Differentiating with respect to $a_i$ gives
$\partial\Phi^{\mathrm{PiCCE}}/\partial a_i
= 2q_i - \mathbf{1}\{i=y\} - \mathbf{1}\{i=K+j^\star\}$.
This formula already shows the winner-take-all structure: only two coordinates
receive a subtractive reward term, namely the true class and the selected
expert. Every other coordinate sees only the positive softmax term $2q_i$.

For a correct expert $k\neq j^\star$, set $i=K+k$. Since $k\neq j^\star$,
neither indicator is active, so
\[
\frac{\partial\Phi^{\mathrm{PiCCE}}}{\partial a_{K+k}}
= 2q_{K+k}(x) > 0,
\]
where the strict positivity uses $q_{K+k}>0$ (softmax outputs are strictly
positive). Hence gradient descent pushes $a_{K+k}$ downward.

The expert $k$ is correct
on the current sample, but because it failed to win the internal competition,
the update acts as if it were an ordinary non-target coordinate and suppresses
its logit.
\end{proof}

\subsection{Score-Dependent Conditional Target}
\label{app:picce-moving-target}

For completeness we derive the conditional risk formula used in
\S\ref{subsec:wta}. Condition on $X=x$ and take expectations over $(Y,M)$.
The PiCCE loss rewards expert $j^\star=\argmax_{j\in\Jset(Y,M)}a_{K+j}$
whenever $\Jset\neq\varnothing$. Define
$\gamma_j(a,x)=\Pr(j\in\Jset(Y,M) \text{ and } j=j^\star\mid X=x)$.
The class term is standard:
\[
\mathbb{E}[\log q_Y\mid X=x]=\sum_{k=1}^K\eta_k(x)\log q_k.
\]
For the expert term, partition the event $\{\Jset\neq\varnothing\}$ according
to which expert wins the internal competition. This yields
\[
\mathbb{E}\!\left[
\mathbf{1}\{\Jset\neq\varnothing\}\log q_{K+j^\star}
\mid X=x
\right]
=
\sum_{j=1}^J \gamma_j(a,x)\log q_{K+j}.
\]
Substituting both identities gives
\[
\mc{C}_{\mathrm{PiCCE}}(q,a\mid x)
=
-\sum_{k=1}^K\eta_k(x)\log q_k
-\sum_{j=1}^J\gamma_j(a,x)\log q_{K+j}.
\]
Thus, for a fixed score vector $a$, PiCCE behaves like an augmented
cross-entropy whose class-side coefficients are $\eta_k(x)$ and whose
expert-side coefficients are $\gamma_j(a,x)$, not $\alpha_j(x)$. Equivalently,
viewed as a weighted multiclass log-loss over the augmented simplex, its
instantaneous target is proportional to
\[
\bigl(\eta_1(x),\dots,\eta_K(x),\gamma_1(a,x),\dots,\gamma_J(a,x)\bigr).
\]
The coefficients $\gamma_j(a,x)$ depend on the score vector $a$ through the
winner map $j^\star$, so the expert-side target moves as $a$ is updated. This
is the precise sense in which PiCCE chases a moving target. The class-side
weights are fixed by the data distribution, but the expert-side weights are
redistributed by the current score ordering. This is in contrast to additive
CE, where the coefficients are fixed by the distribution alone.

Fix a point $x$ and suppose
there is positive conditional probability that experts~1 and~2 are both
correct. If the current scores satisfy $a_{K+1}>a_{K+2}$ on those outcomes,
then that shared probability mass contributes to $\gamma_1(a,x)$. If the score
ordering is reversed, the same data mass contributes to $\gamma_2(a,x)$ instead.
The data distribution has not changed; only the current score ordering has.
That is why the PiCCE expert target is score-dependent rather than intrinsic to
the underlying decision problem.

\subsection{Curvature}
\label{app:proof-picce-curvature}

The winner-take-all repair eliminates the multiplicity factor of additive CE.
Whenever $\Jset(y,m)\neq\varnothing$,
\[
\Phi^{\mathrm{PiCCE}}(a;x,y,m)
=
2\log\sum_{r=1}^{K+J}e^{a_r}-a_y-a_{K+j^\star},
\]
where $j^\star$ is treated as fixed on the current sample. We now differentiate
this expression exactly as in the additive-CE case, but with prefactor $2$
instead of $1+|\Jset|$.

The only nonlinear term is the log-sum-exp. Its Hessian is
\[
\nabla_a^2 \log\sum_{r=1}^{K+J}e^{a_r}
=
\mathrm{Diag}(q)-qq^\top,
\]
while the linear terms $-a_y$ and $-a_{K+j^\star}$ vanish under second
differentiation. Therefore
\[
\nabla_a^2\Phi^{\mathrm{PiCCE}}
=
2(\mathrm{Diag}(q)-qq^\top),
\]
The matrix $\mathrm{Diag}(q)-qq^\top$ is again the softmax covariance matrix.
From the additive-CE derivation above, every eigenvalue of this matrix lies in
$[0,1/2]$. Multiplying by the prefactor $2$ gives
\[
\lambda_{\max}\!\left(\nabla_a^2\Phi^{\mathrm{PiCCE}}\right)\le 1.
\]

So PiCCE is indeed well behaved in the second-order sense that additive CE is
not. This point is worth stating clearly because it isolates the real failure
mode. The problem with PiCCE is not exploding curvature or overlap-dependent
scaling. The problem is the direction of the updates: the surrogate suppresses
correct experts that fail to win the internal comparison, and this same
winner-take-all mechanism makes the expert-side target score-dependent.

\subsection{Head-to-Head with the Decoupled Surrogate}

\paragraph{Axis~(i): statistical target.}
PiCCE preserves the correct class coefficients $\eta_k(x)$, but on the expert
side it replaces $\alpha_j(x)$ by the score-dependent quantity
$\gamma_j(a,x)$. The target therefore depends on the optimization trajectory
through the current winner map. This is the first key contrast with the decoupled surrogate. Under
the decoupled surrogate, the expert target for coordinate $j$ is always the same object,
$\alpha_j(x)$, regardless of which expert currently has the highest score. The
data determine the target. Under PiCCE, by contrast, shared correct mass is
assigned to whichever expert is currently winning the internal competition.

The practical implication is that PiCCE does not merely estimate the wrong
expert quantity at the optimum; it can also reshape the quantity it is trying
to estimate while training proceeds. The decoupled surrogate does not have this moving-target
effect: the class and expert subproblems are proper losses with fixed
conditional targets $(\eta,\alpha)$.

\paragraph{Axis~(ii): local geometry.}
PiCCE solves the multiplicity problem by capping the number of rewarded terms
at two, but it does so by selecting a unique correct expert and repelling every
other correct expert. This is not just a bookkeeping choice. It means that on a
sample where several experts are simultaneously correct, PiCCE turns agreement
into competition.

The decoupled surrogate makes the opposite design choice. It keeps the per-coordinate updates
bounded while reinforcing every correct expert independently. Agreement among
correct experts therefore does not create either additive amplification or
winner-take-all suppression. PiCCE and the decoupled surrogate agree on the need to avoid additive
amplification, but they differ fundamentally on whether multiple correct
experts should compete with one another or all receive supportive updates.

PiCCE improves on
additive CE along one dimension: it removes the $O(|\Jset|)$ scaling. But it
does so by replacing an overlap-dependent magnitude problem with a
winner-dependent direction problem. The decoupled surrogate avoids both failures simultaneously: it
keeps the local scale bounded and it preserves supportive updates for every
correct expert.

\subsection{Worked Example: Generalist and Specialist Both Correct}

Consider one generalist expert and one specialist expert. On a sample from the
specialist region, suppose both experts are correct, so $\Jset=\{1,2\}$, but
the current logits satisfy $a_{K+1}>a_{K+2}$, meaning the generalist wins the
internal PiCCE competition. This is precisely the situation in which one would
like the training signal to preserve both experts: the generalist is correct,
but the specialist is also correct on the very region for which it was meant
to be useful. PiCCE does not preserve both. It forces them to compete.

To make the signs concrete, suppose the current softmax assigns
\[
q_{K+1}=0.30,
\qquad
q_{K+2}=0.20.
\]
Then $j^\star=1$ and
\[
\frac{\partial\Phi^{\mathrm{PiCCE}}}{\partial a_{K+1}}
=2q_{K+1}-1,
\qquad
\frac{\partial\Phi^{\mathrm{PiCCE}}}{\partial a_{K+2}}
=2q_{K+2}.
\]
Substituting the numbers gives
\[
\frac{\partial\Phi^{\mathrm{PiCCE}}}{\partial a_{K+1}}
=2(0.30)-1=-0.40,
\qquad
\frac{\partial\Phi^{\mathrm{PiCCE}}}{\partial a_{K+2}}
=2(0.20)=0.40.
\]
Under gradient descent, the generalist logit is pushed upward because its
gradient is negative, while the specialist logit is pushed downward because its
gradient is positive. So the same sample provides evidence for both experts,
yet PiCCE uses that evidence to strengthen the winner and weaken the loser.

This same example also reflects the axis-(i) issue. On any population of such
samples, the shared ``both correct'' mass is assigned to whichever expert
currently wins the internal comparison. If the generalist is ahead, that mass
contributes to $\gamma_1(a,x)$; if the specialist overtakes it, the same mass
contributes to $\gamma_2(a,x)$ instead. The target therefore co-moves with the
current score ordering.

For the decoupled surrogate, let the same sample have expert probabilities
$u_1=0.80$ and $u_2=0.75$, with targets $t_1=t_2=1$. Then
\[
\frac{\partial\Phi^{\mathrm{dec}}}{\partial s_1}
=\frac{\lambda}{2}(0.80-1)=-0.10\lambda,
\qquad
\frac{\partial\Phi^{\mathrm{dec}}}{\partial s_2}
=\frac{\lambda}{2}(0.75-1)=-0.125\lambda.
\]
Both experts are reinforced. The specialist may still trail the generalist at
the current iterate, but it is not punished for being correct. The sample acts
as supportive evidence for both experts rather than as an internal tournament.

This example is therefore the PiCCE analogue of the redundant-expert example
for additive CE. The key point is not merely that PiCCE uses a different
formula. The deeper point is structural. Whenever several experts are correct
on the same sample, PiCCE converts agreement into competition. The decoupled surrogate preserves
the bounded geometry without introducing that competition, which is exactly why
it is more robust in the rare-specialist regime studied in the main text.

\section{Mao25 \citep{mao2025mastering}}

Mao25 moves away from additive rewards and winner-take-all selection by
optimizing only the total mass assigned to the acceptable set. This resolves
the two most immediate geometric issues, but it also removes the signal needed
to rank experts within that set.

The central question in this section is what object Mao25 is actually trying to
fit. The answer is not the full vector
$(\eta,\alpha)$, but only the total probability assigned to the acceptable
set. That choice makes the local geometry comparatively benign, but it also
explains why the surrogate cannot express preferences among multiple acceptable
experts.

The positive theory in
\citet{mao2025mastering} is important to keep
in view. In the single-stage multiple-expert setting, their family
$L_{\Psi}$ comes with formal consistency guarantees under mild endpoint
conditions on~$\Psi$. They further single out the member $\Psi(u)=1-u$,
denoted $L_{\mathrm{mae}}$ in their paper, because it also admits quantitative
excess-risk control. Our criticism is not about those consistency guarantees.
It is about the practical optimization behavior of that specific
$\Psi(u)=1-u$ instantiation: its samplewise loss is linear in acceptable-set
mass and therefore has an MAE-like optimization profile.

\subsection{Formulation}

The single-stage surrogate of
\citet{mao2025mastering} defines a family
indexed by a decreasing function $\Psi$. In their multiple-expert
experiments, they instantiate this family with $\Psi(u)=1-u$, which is the
version we include in our empirical comparison.

\begin{definition}[Mao25 single-stage surrogate with $\Psi(u)=1-u$, \citet{mao2025mastering}]
\label{def:appendix-mao25}
Let $a(x)\in\mathbb{R}^{K+J}$ be an augmented score vector and define
\[
q_i(x)\coloneqq \frac{\exp(a_i(x))}{\sum_{r=1}^{K+J}\exp(a_r(x))},
\qquad i\in[K+J].
\]
Then
\[
\Phi^{\mathrm{Mao}}(a;x,y,m)
\coloneqq
1-\Bigl(q_y(x)+\sum_{j:\,m_j=y} q_{K+j}(x)\Bigr).
\]
\end{definition}

Prediction uses the augmented argmax over $a(x)$. This surrogate depends only
on the total mass assigned to the acceptable set consisting of the true class
and all correct experts.

This has a simple consequence at the conditional level. Fix an input $x$ and
take expectation over the random pair $(Y,M)$.

The class part contributes
\[
\mathbb{E}[q_Y\mid X=x]=\sum_{k=1}^K \eta_k(x)\,q_k,
\]
because $q_k$ is fixed once $x$ is fixed, and only the random label $Y$
remains inside the expectation. Likewise, for the expert part,
\[
\mathbb{E}\!\left[\sum_{j:\,M_j=Y} q_{K+j}\,\middle|\,X=x\right]
=
\sum_{j=1}^J \alpha_j(x)\,q_{K+j},
\]
because expert $j$ contributes exactly on the event $\{M_j=Y\}$, whose
conditional probability is $\alpha_j(x)$ by definition.

Substituting these two identities into the loss gives
\[
\mc{C}_{\mathrm{Mao}}(q\mid x)
=
1-\sum_{k=1}^K \eta_k(x) q_k-\sum_{j=1}^J \alpha_j(x) q_{K+j}.
\]
Now the key point is visible. The conditional risk is a linear function of the
augmented probabilities $q$. Minimizing it is therefore the same as maximizing
\[
\sum_{k=1}^K \eta_k(x) q_k+\sum_{j=1}^J \alpha_j(x) q_{K+j}
\]
over the simplex $\Delta^{K+J}$. A linear objective over a simplex is minimized
at an extreme point, so there always exists a conditional minimizer that puts
all mass on one action whose coefficient is maximal among
\[
\bigl(\eta_1(x),\dots,\eta_K(x),\alpha_1(x),\dots,\alpha_J(x)\bigr).
\]
If the maximizer is unique, that minimizer is unique and is exactly the
Bayes-optimal action. If several actions tie for the maximum, any convex
combination over those maximizers is also optimal.

So Mao25 can recover a Bayes-optimal action, but it does not recover the full
vector $(\eta,\alpha)$. It identifies which action can be optimal, not the
underlying utilities themselves.

\subsection{Acceptable-Set Mass Without Ranking}
\label{app:proof-mao25-setmass}

\maosetmass*
\begin{proof}
The argument has two steps. First we show that the loss depends on the score
vector only through the acceptable-set mass. Then we differentiate that mass to
obtain the gradient formula.

Let
\[
\Sset(y,m)\coloneqq \{y\}\cup\{K+j:j\in\mathcal{J}(y,m)\}
\]
denote the acceptable set. By definition,
\[
\Phi^{\mathrm{Mao}}(a;x,y,m)
=
1-\sum_{c\in \Sset(y,m)} q_c(x)
=
1-S_{\mathcal S}(x).
\]
Thus the loss depends on $a$ only through the total acceptable mass
$S_{\mathcal S}(x)$.

To differentiate, use the softmax Jacobian
$\partial q_c/\partial a_i = q_c(\mathbf{1}\{c=i\}-q_i)$. Then
\begin{align*}
\frac{\partial S_{\mathcal S}}{\partial a_i}
&=
\sum_{c\in \Sset(y,m)} q_c(\mathbf{1}\{c=i\}-q_i)\\
&=
\mathbf{1}\{i\in \Sset(y,m)\}q_i-q_i\sum_{c\in \Sset(y,m)}q_c\\
&=
q_i\bigl(\mathbf{1}\{i\in \Sset(y,m)\}-S_{\mathcal S}\bigr).
\end{align*}
and therefore
\[
\frac{\partial \Phi^{\mathrm{Mao}}}{\partial a_i}
=
-\frac{\partial S_{\mathcal S}}{\partial a_i}
=
q_i\bigl(S_{\mathcal S}-\mathbf{1}\{i\in \Sset(y,m)\}\bigr).
\]
If $i\in \Sset(y,m)$, then $S_{\mathcal S}\le 1$, so the gradient is
non-positive. If $i\notin \Sset(y,m)$, then the gradient is
$q_iS_{\mathcal S}\ge 0$. Finally, since
$\Phi^{\mathrm{Mao}}=1-S_{\mathcal S}$, any two score vectors with the same
$S_{\mathcal S}$ incur the same loss regardless of their internal allocation
inside $\Sset(y,m)$.

This formula makes the absence of direct within-set ranking explicit. If
$i,r\in \Sset(y,m)$ are both acceptable actions on the current sample, then
\[
\frac{\partial \Phi^{\mathrm{Mao}}}{\partial a_i}
=
q_i(S_{\mathcal S}-1),
\qquad
\frac{\partial \Phi^{\mathrm{Mao}}}{\partial a_r}
=
q_r(S_{\mathcal S}-1).
\]
Both coordinates share the same factor $S_{\mathcal S}-1$. Their update
magnitudes differ only through the current probabilities $q_i$ and $q_r$, not
through any term encoding which acceptable expert is statistically better.

The same formula also reveals why optimization can be weak. Summing the
attractive gradient mass over the acceptable set gives
\[
\sum_{i\in\Sset} -\frac{\partial \Phi^{\mathrm{Mao}}}{\partial a_i}
=
\sum_{i\in\Sset} q_i(1-S_{\mathcal S})
=
S_{\mathcal S}(1-S_{\mathcal S}).
\]
This quantity is at most $1/4$ and tends to zero as $S_{\mathcal S}\to 0$.
Thus, when the model currently places very little mass on the acceptable set,
the corrective signal toward that set is weak. In the single-label
special case $|\Sset|=1$, the loss reduces to $1-q_y$, which is exactly the
multiclass MAE form. The same weak-gradient phenomenon is well known for
MAE-like losses in noisy-label learning
\citep{zhang2018generalizedcrossentropyloss}.
\end{proof}

\subsection{Curvature}
\label{app:proof-mao-curvature}

Although Mao25 does not suffer from additive amplification, its second-order
structure still depends only on acceptable-set membership and the total
acceptable mass. We differentiate the gradient formula step by step.

Write
\[
g_i(a)\coloneqq \frac{\partial \Phi^{\mathrm{Mao}}}{\partial a_i}
=
q_i\bigl(S_{\mathcal S}-\mathbf{1}\{i\in\Sset\}\bigr).
\]
Then
\[
\frac{\partial g_i}{\partial a_r}
=
\frac{\partial q_i}{\partial a_r}\bigl(S_{\mathcal S}-\mathbf{1}\{i\in\Sset\}\bigr)
+q_i\frac{\partial S_{\mathcal S}}{\partial a_r}.
\]
The first term comes from differentiating the prefactor $q_i$ and the second
term comes from differentiating the acceptable-set mass $S_{\mathcal S}$.
From the softmax Jacobian,
\[
\frac{\partial q_i}{\partial a_r}
=
q_i(\mathbf{1}\{i=r\}-q_r),
\]
and from the previous subsection,
\[
\frac{\partial S_{\mathcal S}}{\partial a_r}
=
q_r\bigl(\mathbf{1}\{r\in\Sset\}-S_{\mathcal S}\bigr).
\]
Substituting both identities into the product rule gives the full Hessian
entry:
\[
\frac{\partial^2 \Phi^{\mathrm{Mao}}}{\partial a_i\,\partial a_r}
=
q_i(\mathbf{1}\{i=r\}-q_r)\bigl(S_{\mathcal S}-\mathbf{1}\{i\in\Sset\}\bigr)
+q_iq_r\bigl(\mathbf{1}\{r\in\Sset\}-S_{\mathcal S}\bigr).
\]
No additional term is hidden. This is the complete second derivative obtained
by differentiating the explicit gradient formula.

It is useful to unpack the main cases.

\paragraph{Case 1: $i=r\in\Sset$.}
Then
\[
\frac{\partial^2 \Phi^{\mathrm{Mao}}}{\partial a_i^2}
=
q_i(1-q_i)(S_{\mathcal S}-1)
+q_i^2(1-S_{\mathcal S}).
\]
This depends on $q_i$ and on the common scalar $S_{\mathcal S}$, but not on
which acceptable expert $i$ represents.

\paragraph{Case 2: $i\neq r$ and $i,r\in\Sset$.}
Then
\[
\frac{\partial^2 \Phi^{\mathrm{Mao}}}{\partial a_i\,\partial a_r}
=
-q_iq_r(S_{\mathcal S}-1)+q_iq_r(1-S_{\mathcal S})
=
2q_iq_r(1-S_{\mathcal S}).
\]
Again the expression depends only on the current masses and on the shared
acceptable-set mass.

\paragraph{Case 3: $i\in\Sset$ and $r\notin\Sset$.}
Then
\[
\frac{\partial^2 \Phi^{\mathrm{Mao}}}{\partial a_i\,\partial a_r}
=
-q_iq_r(S_{\mathcal S}-1)-q_iq_rS_{\mathcal S}
=
q_iq_r(1-2S_{\mathcal S}).
\]

\paragraph{Case 4: $i,r\notin\Sset$.}
Then
\[
\frac{\partial^2 \Phi^{\mathrm{Mao}}}{\partial a_i\,\partial a_r}
=
q_i(\mathbf{1}\{i=r\}-q_r)S_{\mathcal S}
-q_iq_rS_{\mathcal S}.
\]
In particular, for $i\neq r$ both outside the acceptable set,
\[
\frac{\partial^2 \Phi^{\mathrm{Mao}}}{\partial a_i\,\partial a_r}
=
-2q_iq_rS_{\mathcal S}.
\]

This expression contains no term that depends on which acceptable expert is
statistically better. Even at second order, the surrogate sees acceptable-set
mass and membership, not expert quality. This is the geometric analogue of the
first-order statement above: Mao25 distinguishes acceptable from unacceptable
actions, but its local geometry contains no utility-sensitive comparison inside
the acceptable set.

\subsection{Head-to-Head with the Decoupled Surrogate}

\paragraph{Axis~(i): statistical target.}
Mao25 is aligned with the Bayes action only at the coarse level of choosing the
best augmented action. Its conditional risk is linear in $q$, so a minimizer
places all mass on one optimal action, but the surrogate does not recover the
full utility vector $(\eta,\alpha)$. The decoupled surrogate does. At the conditional minimum, the decoupled surrogate
returns $p^\star(x)=\eta(x)$ and $u_j^\star(x)=\alpha_j(x)$ for every expert.
The difference is therefore not whether the two surrogates can identify a good
action in principle. The difference is that the decoupled surrogate estimates the underlying
utilities themselves, whereas Mao25 only optimizes a coarse action-level
objective.

\paragraph{Axis~(ii): local geometry.}
Mao25 avoids both additive amplification and winner-take-all starvation. In
that sense it is geometrically cleaner than CE and PiCCE. But the geometry is
still too coarse for utility estimation. On a sample where several actions are
acceptable, the loss depends only on the total acceptable mass and the
acceptable/unacceptable partition. It contains no term that says which
acceptable expert should be preferred. For the concrete choice
$\Psi(u)=1-u$, the total attractive gradient mass toward the acceptable set is
$S_{\mathcal S}(1-S_{\mathcal S})$. This quantity is maximal at $1/4$ and
shrinks to zero as $S_{\mathcal S}\to 0$, so the corrective signal can be weak
precisely when the model currently assigns little mass to the right set. The decoupled surrogate
keeps the benign bounded geometry while training each expert head against its
own correctness process.

The contrast with the decoupled surrogate is therefore different from the PiCCE contrast. Mao25
does not harm correct experts by suppressing them, and it does not amplify
high-overlap samples. Its limitation is coarseness: it treats expert
supervision as set membership rather than utility estimation, and under the
linear choice $\Psi(u)=1-u$ this coarse signal can also be weak away from the
optimum. The decoupled surrogate retains the stable geometry while preserving expert-specific
targets.

\subsection{Worked Example: Two Correct Experts of Unequal Quality}

Consider a defer region in which expert~1 is substantially better overall than
expert~2:
\[
\alpha_1(x)=0.9,
\qquad
\alpha_2(x)=0.5.
\]
Assume the best class utility is smaller, say $\max_k \eta_k(x)=0.4$, so the
Bayes-optimal action is to defer to expert~1.

The conditional Mao25 risk at this $x$ is then
\[
\mc{C}_{\mathrm{Mao}}(q\mid x)
=
1-\sum_{k=1}^K \eta_k(x) q_k-0.9\,q_{K+1}-0.5\,q_{K+2}.
\]
This conditional objective is minimized by placing all mass on expert~1, which
indeed recovers the correct action. But the surrogate does not recover the
utility values $0.9$ and $0.5$ themselves; it only prefers the larger
coefficient through a linear argmax objective.

Now inspect the samplewise supervision when both experts happen to be correct
on a particular draw. Compare two augmented allocations with the same
acceptable mass:
\[
(q_y,q_{K+1},q_{K+2})=(0.10,0.50,0.00),
\qquad
(q_y,q_{K+1},q_{K+2})=(0.10,0.25,0.25).
\]
Both have $S_{\mathcal S}=0.60$, so both incur the same Mao25 loss
$\Phi^{\mathrm{Mao}}=0.40$. The surrogate therefore cannot prefer the first
allocation over the second on the basis that expert~1 is the reliable one.

The gradients make the same point. Since both experts are acceptable on this
sample, the gradient formula gives
\[
\frac{\partial \Phi^{\mathrm{Mao}}}{\partial a_i}
=
q_i(0.60-1)=-0.4\,q_i,
\qquad i\in \{y,K+1,K+2\}.
\]
For the first allocation,
\[
\left(
\frac{\partial \Phi^{\mathrm{Mao}}}{\partial a_y},
\frac{\partial \Phi^{\mathrm{Mao}}}{\partial a_{K+1}},
\frac{\partial \Phi^{\mathrm{Mao}}}{\partial a_{K+2}}
\right)
=
(-0.04,\,-0.20,\,0.00),
\]
whereas for the second allocation,
\[
\left(
\frac{\partial \Phi^{\mathrm{Mao}}}{\partial a_y},
\frac{\partial \Phi^{\mathrm{Mao}}}{\partial a_{K+1}},
\frac{\partial \Phi^{\mathrm{Mao}}}{\partial a_{K+2}}
\right)
=
(-0.04,\,-0.10,\,-0.10).
\]
The difference between the two acceptable experts is driven only by their
current masses $q_{K+1}$ and $q_{K+2}$, not by the fact that expert~1 is the
better expert overall.

The decoupled surrogate behaves differently. Its expert-side conditional risk is
\[
-0.9\log u_1-0.1\log(1-u_1)
-0.5\log u_2-0.5\log(1-u_2),
\]
which is uniquely minimized at $u_1=0.9$ and $u_2=0.5$. The ranking signal is
there by construction. Mao25 can recover the correct defer action, but it does
not estimate the underlying expert utilities. On the sample itself, it only
rewards membership in the acceptable set, and the total reward scale is capped
by $S_{\mathcal S}(1-S_{\mathcal S})$. The decoupled surrogate does both: it preserves the correct
action and recovers the utility gap that distinguishes expert~1 from expert~2.

\section{A-SM \citep{Cao_Mozannar_Feng_Wei_An_2023}}

A-SM \citep{Cao_Mozannar_Feng_Wei_An_2023} is the strongest augmented-action baseline on the population axis. It
restores bounded estimates and recovers the correct target. Its remaining
weakness is optimization-level: the expert losses still leak into the class
gradient through the shared score vector.

This section therefore plays a different role from the earlier baseline
sections. For additive CE, PiCCE, and Mao25, the main mismatch is already
visible in the conditional target. For A-SM, the target is largely the right
one. The point here is to isolate the remaining axis-(ii) weakness cleanly and
show that it survives even after boundedness and population alignment are
repaired.

\subsection{Formulation}

A-SM replaces the symmetric augmented normalization by an asymmetric one that
keeps the class probabilities and expert probabilities bounded and semantically
valid. We use the multi-expert extension given in Appendix~H of
\citet{Cao_Mozannar_Feng_Wei_An_2023}. To stay consistent with the main text,
we write $\xi_k$ for the class estimates and $\psi_j$ for the expert estimates.

The single-expert A-SM paper is written with one defer coordinate, so its
expert denominator is naturally expressed over $K+1$ coordinates: the $K$
class logits and the single defer logit. In the multi-expert extension, this
construction is applied separately to each expert. Thus the class head still
uses the first $K$ logits only, while expert~$j$ is normalized against those
same $K$ class logits plus its own expert logit $a_{K+j}$, not against the
other expert logits. This point is easy to miss when moving from the
single-expert notation to the multi-expert one, so we state it explicitly here.

\begin{definition}[Multi-expert A-SM, \citet{Cao_Mozannar_Feng_Wei_An_2023}]
\label{def:appendix-asm}
Let $a(x)\in\mathbb{R}^{K+J}$ be an augmented score vector. Define the class
estimates as the standard softmax over the first $K$ coordinates,
\[
\xi_k(a(x))
\coloneqq
\frac{\exp(a_k(x))}{\sum_{k'=1}^K \exp(a_{k'}(x))},
\qquad k\in[K],
\]
and the expert estimates, for each $j\in[J]$, as
\[
\psi_j(a(x))
\coloneqq
\frac{\exp(a_{K+j}(x))}
{\sum_{k'=1}^K \exp(a_{k'}(x)) + \exp(a_{K+j}(x)) -
\max_{k'\in[K]}\exp(a_{k'}(x))}.
\]
The corresponding surrogate is
\begin{equation}
\begin{aligned}
\Phi^{\mathrm{ASM}}(a;x,y,m)
\coloneqq
-\log \xi_y(a(x))
-\sum_{j=1}^J
\Bigl(
&\mathbf{1}[m_j=y]\log \psi_j(a(x))\\
&+\mathbf{1}[m_j\neq y]\log\bigl(1-\psi_j(a(x))\bigr)
\Bigr).
\end{aligned}
\end{equation}
\end{definition}

Prediction uses the maxima-preserving property of $(\xi,\psi)$, equivalently
the argmax of the underlying score vector $a(x)$. The expert estimate can be
written through the equivalent sigmoid identity
\[
\psi_j(a(x))
=
\sigma\!\left(a_{K+j}(x)-\log \sum_{k'\neq k^\star}\exp(a_{k'}(x))\right),
\]
where $k^\star\in\argmax_{k\in[K]}a_k(x)$.
To see the equivalence directly, write
\[
B(a)\coloneqq \sum_{k=1}^K \exp(a_k)-\max_{k\in[K]}\exp(a_k)
=\sum_{k\neq k^\star}\exp(a_k).
\]
Then the denominator of $\psi_j$ is exactly $B(a)+\exp(a_{K+j})$, so
\[
\psi_j(a(x))
=
\frac{\exp(a_{K+j}(x))}{\exp(a_{K+j}(x))+B(a(x))}
=
\sigma\!\left(a_{K+j}(x)-\log B(a(x))\right).
\]
This form is algebraically equivalent to the original max-based denominator and
is the convenient form for the derivative calculations below. It also makes
clear that the map is only piecewise smooth: when the maximizing class changes,
the active comparison set changes as well.

This representation is also conceptually useful. It shows that each expert head
is, by itself, an ordinary sigmoid probability whose reference level is set by
the non-max class logits. The expert-side issue in A-SM is therefore not that
the expert probability is ill-defined or unbounded. The issue is that the class
logits appear inside that reference level, so class and expert optimization are
still entangled.

\subsection{Conditional Target}

At the population level, A-SM is intended to repair exactly the defect that
motivated Cao et al.: the single augmented softmax of additive CE does not
directly produce bounded estimates of the class posterior and expert
correctness. A-SM changes the parameterization so that the class side remains a
categorical distribution over the first $K$ coordinates, while each expert side
is a bounded probability in $[0,1]$.

The multi-expert extension in Appendix~H of
\citet{Cao_Mozannar_Feng_Wei_An_2023} states the corresponding optimality
property directly: at a conditional optimum,
\[
\xi_k^\star(x)=\eta_k(x),
\qquad
\psi_j^\star(x)=\alpha_j(x),
\]
for every class $k\in[K]$ and expert $j\in[J]$. This is the precise reason
A-SM is the strongest augmented-action baseline on axis~(i). Unlike additive
CE, PiCCE, and Mao25, its main limitation is not that it identifies the wrong
statistical object. It identifies the right object but reaches it through a
shared geometry that still couples class and expert optimization.

That distinction matters for the rest of the paper. Once the target axis is
repaired, any remaining weakness must come from the optimization geometry
rather than from statistical misspecification. A-SM is therefore the cleanest
test case for asking whether ``correct target'' is sufficient by itself. Our
answer is no: even with the right conditional optimum, the local update
structure can remain coupled in a way that the decoupled surrogate avoids.

\subsection{Class--Expert Gradient Coupling}
\label{app:proof-asm-coupling}

\asmcoupling*
\begin{proof}
We work on a region where the maximizing class index $k^\star$ is fixed. On
such a region, the A-SM map is smooth and the active comparison set in the
expert denominators does not change. The goal of this proof is only the
\emph{first-order} coupling statement: we derive the ordinary class term, then
show how each expert residual contributes an additional term to every non-max
class logit.

\paragraph{Class term.}
The term $-\log\xi_y$ is the standard multiclass cross-entropy over
$a_1,\dots,a_K$. Its gradient with respect to $a_r$ is
$\xi_r-\mathbf{1}\{r=y\}$.

\paragraph{Expert term.}
For a fixed expert $j$, define
\[
t_j\coloneqq \mathbf{1}\{m_j=y\},
\qquad
B(a)\coloneqq \sum_{k\neq k^\star}\exp(a_k).
\]
Then
\[
\psi_j
=
\sigma\!\bigl(a_{K+j}-\log B(a)\bigr)
=
\frac{\exp(a_{K+j})}{\exp(a_{K+j})+B(a)}.
\]
The corresponding expert contribution to the loss is the Bernoulli
cross-entropy
\[
L_j
=
-t_j\log\psi_j-(1-t_j)\log(1-\psi_j).
\]
Differentiating with respect to the sigmoid argument gives the standard
identity
\[
\frac{\partial L_j}{\partial v_j}=\psi_j-t_j,
\qquad
v_j\coloneqq a_{K+j}-\log B(a).
\]
This already shows that the expert head itself is well behaved: with respect to
its own effective score $v_j$, it is just an ordinary Bernoulli logistic loss.
So the only remaining task is to differentiate $v_j$ with respect to the class
logits and see how the coupling arises.

If $r\neq k^\star$, then
\[
\frac{\partial v_j}{\partial a_r}
=
-\frac{1}{B(a)}\frac{\partial B(a)}{\partial a_r}
=
-\frac{\exp(a_r)}{\sum_{k\neq k^\star}\exp(a_k)}
=
-\pi_r,
\]
where
\[
\pi_r\coloneqq \frac{\exp(a_r)}{\sum_{k\neq k^\star}\exp(a_k)}.
\]
If instead $r=k^\star$, then $a_{k^\star}$ does not appear in $B(a)$ on this
region, so $\partial v_j/\partial a_{k^\star}=0$.

Combining the two derivatives gives
\[
\frac{\partial L_j}{\partial a_r}
=
\frac{\partial L_j}{\partial v_j}\frac{\partial v_j}{\partial a_r}
=
-(\psi_j-t_j)\pi_r,
\qquad r\neq k^\star,
\]
and
\[
\frac{\partial L_j}{\partial a_{k^\star}}=0.
\]
This is the leakage mechanism in its simplest form: every expert residual
$\psi_j-t_j$ is injected into every non-max class logit through the same
weight~$\pi_r$.

It is also useful to record the expert-side own derivative. Since
$\partial v_j/\partial a_{K+j}=1$,
\[
\frac{\partial L_j}{\partial a_{K+j}}
=
\frac{\partial L_j}{\partial v_j}
=
\psi_j-t_j.
\]
So the expert block itself behaves exactly like a standard Bernoulli
cross-entropy. The pathology is not on the expert-side update viewed in
isolation. The pathology is that the same expert residuals also enter the class
gradient.

Summing over $j$ for $r\neq k^\star$ gives
\[
\sum_{j=1}^J \frac{\partial L_j}{\partial a_r}
= -\pi_r\sum_{j=1}^J(\psi_j-\mathbf{1}\{m_j=y\}).
\]
Adding the class term yields the stated formula.

The sign pattern is now transparent. If many experts are correct but currently
underestimated, then many terms $\psi_j-\mathbf{1}\{m_j=y\}$ are negative, and
their sum adds an extra negative contribution to each non-max class gradient.
If instead many experts are currently overestimated, the same mechanism pushes
in the opposite direction. Either way, the class update no longer reflects only
class-side error. It is contaminated by the aggregate expert residual.
\end{proof}

\subsection{Curvature}
\label{app:proof-asm-curvature}

For A-SM the most informative second-order object is the mixed
class--expert block. It captures exactly the feature that distinguishes A-SM
from the decoupled surrogate: even though the target is correct, the local quadratic geometry does
not separate the class and expert subproblems.

We again work on a region where the maximizing class index $k^\star$ is fixed.
From Proposition~\ref{prop:asm-coupling}, the gradient on a non-max class logit
$a_r$ is
\[
\frac{\partial\Phi^{\mathrm{ASM}}}{\partial a_r}
=
(\xi_r-\mathbf{1}\{r=y\})
-\pi_r\sum_{j=1}^J(\psi_j-\mathbf{1}\{m_j=y\}),
\qquad r\neq k^\star.
\]
To obtain the mixed Hessian entry, differentiate this expression with respect
to an expert logit $a_{K+j}$.

The class term $\xi_r-\mathbf{1}\{r=y\}$ depends only on the class logits, so
its derivative with respect to $a_{K+j}$ is zero. The coefficient $\pi_r$ also
depends only on the non-max class logits on the fixed-$k^\star$ region, so it
too is constant with respect to $a_{K+j}$. Thus
\[
\frac{\partial^2\Phi^{\mathrm{ASM}}}{\partial a_r\,\partial a_{K+j}}
=
-\pi_r \frac{\partial \psi_j}{\partial a_{K+j}},
\qquad r\neq k^\star.
\]
Now recall that
\[
\psi_j=\sigma(v_j),
\qquad
v_j=a_{K+j}-\log\sum_{k\neq k^\star}\exp(a_k).
\]
Since the second term of $v_j$ involves only class logits, we have
\[
\frac{\partial v_j}{\partial a_{K+j}}=1.
\]
Applying the sigmoid derivative then gives
\[
\frac{\partial \psi_j}{\partial a_{K+j}}
=
\psi_j(1-\psi_j).
\]
Substituting this into the previous display yields the full mixed entry:
\[
\frac{\partial^2\Phi^{\mathrm{ASM}}}{\partial a_r\,\partial a_{K+j}}
=
-\pi_r\,\psi_j(1-\psi_j),
\qquad r\neq k^\star.
\]
For $r=k^\star$, the expert-leakage term is absent already at first order, so
the mixed derivative is zero. Therefore the non-max-class/expert mixed block is
exactly
\[
H_{\mathrm{mix}}=-\pi\,d^\top,
\qquad
\pi_r=\frac{\exp(a_r)}{\sum_{k\neq k^\star}\exp(a_k)},
\qquad
d_j=\psi_j(1-\psi_j).
\]

This formula contains several useful facts at once.

\paragraph{Rank-one structure.}
The mixed block is an outer product, so it has rank at most one. The coupling
is therefore highly structured, not arbitrary: all non-max class coordinates
interact with the expert block through the same class-side direction~$\pi$.

\paragraph{Operator-norm bound.}
Because $H_{\mathrm{mix}}=-\pi d^\top$, its operator norm is
\[
\|H_{\mathrm{mix}}\|_{\mathrm{op}}
=
\|\pi\|_2\,\|d\|_2.
\]
Now $\pi$ is a probability vector over the non-max classes, hence
$\|\pi\|_2\le 1$. Also each entry of $d$ satisfies
\[
0\le d_j=\psi_j(1-\psi_j)\le \frac14,
\]
because the scalar function $u\mapsto u(1-u)$ is maximized at $u=1/2$. Hence
\[
\|d\|_2
\le
\sqrt{\sum_{j=1}^J \left(\frac14\right)^2}
=
\frac{\sqrt{J}}{4},
\]
and therefore
\[
\|H_{\mathrm{mix}}\|_{\mathrm{op}}
\le
\frac{\sqrt{J}}{4}.
\]

\paragraph{Interpretation.} The factor $\pi_r$ measures
how active non-max class~$r$ is inside the class-side comparison, while
$\psi_j(1-\psi_j)$ measures how uncertain expert~$j$ currently is. The mixed
coupling is strongest when both are non-negligible: there is nontrivial class
competition among the non-max classes, and many experts are in the uncertain
middle regime rather than near $0$ or $1$.

This is exactly the regime in which the decoupled surrogate differs most sharply. For the decoupled surrogate, the
mixed class--expert block is identically zero. For A-SM, the target is already
correct, but the optimization geometry still contains a structured interaction
whose norm can grow with the expert pool.

For completeness, note also what is \emph{not} happening. The expert block does
not introduce expert--expert coupling through the other expert logits: each
expert head still depends only on its own expert coordinate and the class-side
reference level. The mixed block is therefore the central second-order object,
because it is the unique place where the class and expert tasks remain tied
together.

\subsection{Head-to-Head with the Decoupled Surrogate}

\paragraph{Axis~(i): statistical target.}
A-SM and the decoupled surrogate are aligned on the conditional target. Both recover the
Bayes-sufficient quantities at the population optimum: a categorical class
posterior together with bounded expert probabilities. The difference between
them is therefore not about whether they estimate the right object, but about
how they organize the optimization problem that reaches that object.

This point is worth stating carefully because it distinguishes A-SM from the
earlier baselines. For additive CE, PiCCE, and Mao25, the criticism already
appears at the level of what the surrogate is trying to fit. For A-SM, that is
no longer the issue. A-SM repairs the population target. In that sense it is
much closer to the decoupled surrogate than the other augmented baselines. The remaining gap is
entirely geometric.

\paragraph{Axis~(ii): local geometry.}
A-SM still routes the expert objectives through the class scores, so the class
gradient contains the expert-leakage term of
Proposition~\ref{prop:asm-coupling}. The decoupled surrogate removes this term completely. Its
class gradient depends only on $(p,y)$ and its mixed Hessian block is zero.
This is the key residual weakness of A-SM. It has already repaired the target
distortion and boundedness problems, but it has not fully separated the class
and expert subproblems. The decoupled surrogate is the only method in our comparison that keeps the
same target alignment while making the mixed block identically zero.

The practical consequence is that the two methods react differently during
training. Under the decoupled surrogate, an error in expert~$j$ changes only expert~$j$'s update.
Under A-SM, that same expert error also perturbs the class update whenever the
non-max class normalizer is active. So even though both methods are aiming for
the same conditional optimum, they do not follow the same local route toward
that optimum. This is the sharpest way to understand the residual gap between
the two methods: A-SM is target-correct but route-coupled, whereas the decoupled surrogate is both
target-correct and route-decoupled.

\subsection{Worked Example: Complementary Experts}

Consider $K=3$ classes and $J=4$ complementary experts. Fix a region where the
maximizing class index is $k^\star=1$, and inspect the gradient on the class-2
logit $a_2$ for a sample with $y=2$. Suppose
\[
\xi_2=0.25,
\qquad
\pi_2=0.40,
\qquad
\sum_{j=1}^4\bigl(\psi_j-\mathbf{1}\{m_j=y\}\bigr)=0.80.
\]
Then Proposition~\ref{prop:asm-coupling} gives
\[
\frac{\partial\Phi^{\mathrm{ASM}}}{\partial a_2}
=
(\xi_2-1)-\pi_2\sum_{j=1}^4\bigl(\psi_j-\mathbf{1}\{m_j=y\}\bigr)
=
-0.75-0.32
=
-1.07.
\]
The first term, $-0.75$, is the ordinary class signal saying that class~2 is
the true label and its current probability estimate is too small. The second
term, $-0.32$, is not class information at all. It is leakage from the expert
heads. It appears only because A-SM routes every expert residual through the
non-max class normalizer, and it becomes larger as the aggregate expert
residual grows.

This example is deliberately chosen so that the issue is easy to see. If A-SM
were fully decoupled, the class-2 logit would receive exactly the ordinary
cross-entropy gradient $-0.75$. Instead, the expert block contributes an extra
$-0.32$, which changes both the magnitude and the direction of the local class
update. The class learner is therefore not reacting purely to class error on
this sample; it is also reacting to unresolved expert uncertainty.

Under the decoupled surrogate, the same sample would have class gradient
\[
\frac{\partial\Phi^{\mathrm{dec}}}{\partial w_2}=p_2-1,
\]
with no expert contribution at all. So the class update would reflect only the
classifier's own error on the sample. This is the clearest numerical
illustration of the axis-(ii) difference between A-SM and the decoupled surrogate: both can target
the right optimum, but only the decoupled surrogate keeps the class update free of expert noise.

The example also makes clear why the issue can worsen with many experts. The
leakage enters through the aggregate residual
$\sum_{j=1}^J(\psi_j-\mathbf{1}\{m_j=y\})$. When several experts are still in
their uncertain regime, those residuals add before being injected into the
class gradient. The decoupled surrogate has no analogous accumulation because each expert head is
optimized in isolation.

The conclusion is therefore narrower than for additive CE or PiCCE, but still
important. A-SM shows that fixing the statistical target is not enough. One can
correct boundedness and recover the right population quantities, yet still
retain a shared local geometry that mixes class and expert learning. The decoupled surrogate closes
that final gap by keeping the same target alignment while removing the mixed
interaction altogether.

\section{OvA \citep{Verma2022LearningTD}}

The multi-expert one-vs-all construction of \citet{Verma2022LearningTD} is the
cleanest augmented baseline on the optimization axis. It removes the shared
softmax entirely and replaces it by independent binary logistic tasks. As a
result, none of the earlier geometric pathologies remain: there is no
multiplicity amplification, no winner-take-all starvation, and no mixed
class--expert curvature.

The remaining question is different. What is lost when the class side is
represented not by one categorical distribution in $\Delta^K$, but by $K$
independent Bernoulli scores? At the conditional optimum, OvA still identifies
the correct class marginals $\eta_k(x)$ and expert utilities $\alpha_j(x)$. The
gap appears away from that optimum. For arbitrary parameters, the class vector
produced by OvA need not lie on the simplex, so the largest class score is not
automatically the confidence of a coherent categorical model. This section
makes that distinction explicit.

\subsection{Formulation}

OvA separates the routing problem into $K+J$ binary tasks: one logistic problem
for each class and one logistic problem for each expert. The class coordinates
ask whether the true label equals a given class, while the expert coordinates
ask whether a given expert is correct on the current example.

\begin{definition}[Multi-expert OvA, \citet{Verma2022LearningTD}]
\label{def:appendix-ova}
Let $g(x)\in\mathbb{R}^K$ denote class scores and
$s(x)\in\mathbb{R}^J$ denote expert scores. Define
\[
\gamma_k(y)\coloneqq 2\mathbf{1}[y=k]-1,
\qquad
\zeta_j(y,m)\coloneqq 2\mathbf{1}[m_j=y]-1.
\]
\[
\Phi^{\mathrm{OvA}}(g,s;\,x,y,m)
\coloneqq
\sum_{k=1}^K \log\!\bigl(1+\exp(-\gamma_k(y)g_k(x))\bigr)
\;+\;
\sum_{j=1}^J \log\!\bigl(1+\exp(-\zeta_j(y,m)s_j(x))\bigr).
\]
\end{definition}

This is the signed-label form used in \citet{Verma2022LearningTD}. For the
analysis below it is convenient to rewrite the same loss using the binary
targets
\[
b_k(y)\coloneqq \mathbf{1}\{y=k\},
\qquad
d_j(y,m)\coloneqq \mathbf{1}\{m_j=y\}.
\]
Using the standard identity
\[
\log\!\bigl(1+\exp(-(2b-1)v)\bigr)
=
-b\log \sigma(v) -(1-b)\log(1-\sigma(v)),
\qquad b\in\{0,1\},
\]
the OvA loss can be written equivalently as
\[
\Phi^{\mathrm{OvA}}(g,s;\,x,y,m)
=
\sum_{k=1}^K
\Bigl(
-b_k(y)\log \sigma(g_k(x))
-(1-b_k(y))\log(1-\sigma(g_k(x)))
\Bigr)
\]
\[
\qquad\qquad
+\sum_{j=1}^J
\Bigl(
-d_j(y,m)\log \sigma(s_j(x))
-(1-d_j(y,m))\log(1-\sigma(s_j(x)))
\Bigr).
\]
This form makes two features explicit. First, every coordinate is optimized
independently. Second, the class head is not a softmax over the $K$ classes;
it is a vector of $K$ separate sigmoids.

The prediction rule chooses the classifier if
$\max_{k\in[K]} g_k(x)>\max_{j\in[J]} s_j(x)$ and otherwise defers to the
largest expert score. Since the sigmoid is strictly increasing, this is
equivalent to comparing $\max_k \sigma(g_k(x))$ and $\max_j \sigma(s_j(x))$ on
the probability scale.

\subsection{Conditional Minimizer}
\label{app:proof-ova-bernoulli}

\begin{proposition}[OvA conditional minimizer]
\label{prop:ova-bernoulli}
The conditional OvA risk is a sum of $K+J$ independent Bernoulli
cross-entropies. Its optimal probabilities satisfy
$\sigma(g_k^\star(x))=\eta_k(x)$ and
$\sigma(s_j^\star(x))=\alpha_j(x)$ whenever the corresponding targets lie in
$(0,1)$. If a target equals $0$ or $1$, the conditional infimum is approached
as the corresponding logit tends to $\mp\infty$ or $\pm\infty$, respectively.
\end{proposition}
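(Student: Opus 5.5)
The plan is to condition on $X=x$ and use the Bernoulli rewriting of $\Phi^{\mathrm{OvA}}$ given just above. Because each summand depends on exactly one logit ($g_k$ or $s_j$), the conditional risk splits into $K+J$ separate one-dimensional problems, and I minimize each one on its own.

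The first step is to take conditional expectations of the targets. Since $b_k(Y)=\mathbf{1}\{Y=k\}$, we have $\mathbb{E}[b_k(Y)\mid X=x]=\eta_k(x)$. Since $d_j(Y,\mathbf{M})=\mathbf{1}\{M_j=Y\}$, we have $\mathbb{E}[d_j\mid X=x]=\alpha_j(x)$. Each summand is affine in its target, so linearity gives
\[
\mathcal{C}_{\mathrm{OvA}}(g,s\mid x)=\sum_{k=1}^K \ell_{\eta_k(x)}(g_k)+\sum_{j=1}^J \ell_{\alpha_j(x)}(s_j),
\]
where $\ell_t(v)\coloneqq -t\log\sigma(v)-(1-t)\log(1-\sigma(v))$. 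This is the claimed sum of $K+J$ independent Bernoulli cross-entropies. Because the summands share no variables, the joint infimum equals the sum of the coordinatewise infima.

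The second step is to minimize $\ell_t$ for $t\in[0,1]$. I will use the identity already used in the proof of Theorem~\ref{thm:dec-consistency}:
\[
\ell_t(v)=H_{\mathrm B}(t)+\mathrm{KL}\bigl(\mathrm{Bern}(t)\,\|\,\mathrm{Bern}(\sigma(v))\bigr).
\]
If $t\in(0,1)$, the KL term is nonnegative and equals zero exactly when $\sigma(v)=t$. This happens at the unique logit $v=\log\frac{t}{1-t}$, which gives $\sigma(g_k^\star)=\eta_k$ and $\sigma(s_j^\star)=\alpha_j$. As a cross-check, the derivative $\ell_t'(v)=\sigma(v)-t$ (computed as in Proposition~\ref{prop:due-gradient}) vanishes at the same point, and $\ell_t''(v)=\sigma(v)(1-\sigma(v))>0$ gives strict convexity.

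The third step handles the boundary targets, which I expect to be the only delicate point because no finite minimizer exists there. If $t=1$, then $\ell_1(v)=-\log\sigma(v)$, which is strictly decreasing and tends to $0=H_{\mathrm B}(1)$ as $v\to+\infty$. If $t=0$, then $\ell_0(v)=-\log(1-\sigma(v))$, which is strictly increasing and tends to $0$ as $v\to-\infty$. In both cases the infimum is approached but never attained, which is exactly the limiting statement in the proposition. Combining the three steps coordinatewise completes the proof.
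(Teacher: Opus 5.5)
Your proposal is correct and follows essentially the same route as the paper. Both condition on $X=x$, use linearity to split the risk into $K+J$ one-dimensional Bernoulli cross-entropies with targets $\eta_k(x)$ and $\alpha_j(x)$, minimize each separately, and handle the boundary targets by monotonicity.

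The only difference is in how you find the interior minimizer. You use the entropy-plus-KL identity, with the derivative $\sigma(v)-t$ as a cross-check, whereas the paper argues directly from the first-order condition and convexity. Your version also exhibits the infimum value $H_{\mathrm B}(t)$, which makes the limits to $0$ at $t\in\{0,1\}$ explicit.
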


\begin{proof}
Fix an input $x$. Taking conditional expectation over $(Y,M)\mid X=x$ gives
\[
\mc{C}_{\mathrm{OvA}}(g,s\mid x)
=
\sum_{k=1}^K
\Bigl(
-\eta_k(x)\log \sigma(g_k)
-(1-\eta_k(x))\log(1-\sigma(g_k))
\Bigr)
\]
\[
\qquad\qquad
+\sum_{j=1}^J
\Bigl(
-\alpha_j(x)\log \sigma(s_j)
-(1-\alpha_j(x))\log(1-\sigma(s_j))
\Bigr).
\]
The key point is that this expression separates completely across coordinates.
There is one Bernoulli cross-entropy for each class coordinate and one
Bernoulli cross-entropy for each expert coordinate. The class-side target for
coordinate $k$ is $\eta_k(x)$, while the expert-side target for coordinate $j$
is $\alpha_j(x)$.

To make the argument explicit, consider the generic Bernoulli cross-entropy
with target parameter $\theta\in[0,1]$:
\[
\varphi(v)
=
-\theta\log \sigma(v)-(1-\theta)\log(1-\sigma(v)),
\]
where $v$ is the Bernoulli logit. Differentiating once gives
\[
\varphi'(v)=\sigma(v)-\theta.
\]
Thus any finite stationary point must satisfy $\sigma(v)=\theta$.
Differentiating again
gives
\[
\varphi''(v)=\sigma(v)(1-\sigma(v))\ge 0.
\]
Hence $\varphi$ is convex, and it is strictly convex whenever
$\sigma(v)\in(0,1)$. If $\theta\in(0,1)$, the condition $\sigma(v)=\theta$
identifies the unique finite minimizer. If $\theta=1$, then
$\varphi'(v)=\sigma(v)-1<0$ for every finite $v$, so the risk decreases
monotonically and its infimum is approached as $v\to+\infty$. Similarly, if
$\theta=0$, then $\varphi'(v)=\sigma(v)>0$ for every finite $v$, so the infimum
is approached as $v\to-\infty$.

Applying this argument coordinatewise yields
\[
\sigma(g_k^\star(x))=\eta_k(x),
\qquad
\sigma(s_j^\star(x))=\alpha_j(x),
\]
for all $k\in[K]$ and $j\in[J]$.
\end{proof}

This proposition is the positive side of OvA. Unlike Additive CE or PiCCE, OvA
does not distort the conditional target. At the probability level, it
identifies the correct class-marginal quantities $\eta_k(x)$ and the correct
expert utilities $\alpha_j(x)$. The limitation of OvA is therefore not that it
aims at the wrong population object. The limitation is representational: the
class head becomes a coherent categorical posterior only at the optimum, not by
construction at intermediate parameter values.

\subsection{Curvature}
\label{app:proof-ova-curvature}

The local geometry of OvA is as simple as its conditional risk. Because each
loss term depends on only one score coordinate, the first and second
derivatives separate completely.

Consider first a class coordinate $g_k$. Its samplewise contribution is
\[
L_k(g_k;y)
=
-b_k(y)\log \sigma(g_k)
-(1-b_k(y))\log(1-\sigma(g_k)).
\]
Differentiating with respect to $g_k$ gives
\[
\frac{\partial L_k}{\partial g_k}
=
\sigma(g_k)-b_k(y).
\]
Differentiating once more gives
\[
\frac{\partial^2 L_k}{\partial g_k^2}
=
\sigma(g_k)(1-\sigma(g_k)).
\]

Exactly the same calculation applies to an expert coordinate $s_j$:
\[
L_j(s_j;y,m)
=
-d_j(y,m)\log \sigma(s_j)
-(1-d_j(y,m))\log(1-\sigma(s_j)),
\]
so
\[
\frac{\partial L_j}{\partial s_j}
=
\sigma(s_j)-d_j(y,m),
\qquad
\frac{\partial^2 L_j}{\partial s_j^2}
=
\sigma(s_j)(1-\sigma(s_j)).
\]

Now observe that no class term depends on any other class coordinate or on any
expert coordinate, and no expert term depends on any class coordinate or any
other expert coordinate. Therefore every mixed partial derivative is zero:
\[
\frac{\partial^2 \Phi^{\mathrm{OvA}}}{\partial g_k\,\partial g_r}=0
\quad (k\neq r),
\qquad
\frac{\partial^2 \Phi^{\mathrm{OvA}}}{\partial s_j\,\partial s_\ell}=0
\quad (j\neq \ell),
\qquad
\frac{\partial^2 \Phi^{\mathrm{OvA}}}{\partial g_k\,\partial s_j}=0.
\]
If we concatenate all coordinates into one vector
$a=(g_1,\dots,g_K,s_1,\dots,s_J)\in\mathbb{R}^{K+J}$, the Hessian is therefore
diagonal:
\[
\nabla_a^2\Phi^{\mathrm{OvA}}
=
\mathrm{Diag}\bigl(\sigma(a_i)(1-\sigma(a_i))\bigr),
\]
where the diagonal entries corresponding to class coordinates are
$\sigma(g_k)(1-\sigma(g_k))$ and those corresponding to expert coordinates are
$\sigma(s_j)(1-\sigma(s_j))$.

The eigenvalues of a diagonal matrix are exactly its diagonal entries. Since
$0\le \sigma(v)(1-\sigma(v))\le 1/4$ for every $v\in\mathbb{R}$, we obtain
\[
\lambda_{\max}\!\bigl(\nabla_a^2\Phi^{\mathrm{OvA}}\bigr)\le \frac14.
\]

This is the cleanest geometry among the baselines considered in the paper.
There is no class--expert coupling, no winner-take-all effect, and no
multiplicity amplification. If OvA has a weakness, it is not a weakness of
local optimization geometry.

\subsection{Head-to-Head with the Decoupled Surrogate}

\paragraph{Axis~(i): statistical target.}
OvA and the decoupled surrogate agree on the most important population statement. At the
conditional optimum, both methods recover the class marginals $\eta_k(x)$ and
the expert utilities $\alpha_j(x)$. In that sense, OvA is much closer to the decoupled surrogate
than the earlier augmented-softmax surrogates.

The difference lies in how these quantities are represented away from the
optimum. The decoupled surrogate parameterizes the class side as one categorical distribution
$p(x)\in\Delta^K$ for every parameter value. The vector $p(x)$ is therefore a
coherent class posterior throughout training. OvA instead parameterizes the
class side as $K$ independent sigmoids. Each coordinate can be interpreted as a
class marginal, but the full vector need not lie in $\Delta^K$ unless the model
happens already to be at the conditional optimum.

\paragraph{Axis~(ii): local geometry.}
On the geometric axis the two methods are genuinely similar. Both are
coordinatewise, both have bounded curvature, and both avoid all of the
shared-geometry pathologies that affect the augmented-action family. The decoupled surrogate is not
preferred because it is more decoupled than OvA; there is little to improve on
that axis.

The decoupled surrogate is preferred because it combines this clean geometry with a categorical
class model. The routing rule in the decoupled surrogate always compares $\max_k p_k(x)$ and
$\max_j u_j(x)$ on a coherent probability scale. OvA also compares quantities in
$[0,1]$, but the class side is not forced to behave like one categorical
posterior during training. Several class sigmoids can be simultaneously large,
and the largest one can overstate classifier confidence relative to the expert
utilities.

\subsection{\texorpdfstring{Worked Example: Several Class Sigmoids Above $0.5$}{Worked Example: Several Class Sigmoids Above 0.5}}

Consider $K=3$ and suppose that during training the OvA class head outputs
\[
\sigma(g_1)=0.77,\qquad \sigma(g_2)=0.73,\qquad \sigma(g_3)=0.69.
\]
These values are individually admissible Bernoulli probabilities, but together
they cannot form a categorical posterior because they sum to $2.19$. Nothing in
the OvA objective penalizes this. Each class coordinate is trained separately,
so the model is free to regard several classes as simultaneously likely in the
one-vs-all sense.

Now suppose that the best expert estimate is
\[
\sigma(s_1)=0.70.
\]
The OvA routing rule compares the largest class score with the best expert
score. Since
\[
\max_{k\in[K]} \sigma(g_k)=0.77 > 0.70 = \max_{j\in[J]} \sigma(s_j),
\]
OvA predicts rather than defers.

Now contrast the decoupled surrogate. A coherent class posterior for the same ambiguous input
could be
\[
p=(0.40,0.35,0.25),
\qquad
u_1=0.70.
\]
Then the Bayes-style comparison is immediate:
$\max_k p_k=0.40<0.70$, so the decoupled surrogate defers. This does not mean that OvA is
inconsistent. If the model class is rich enough and training finds the
conditional optimum, the OvA class sigmoids will match the true class-marginal
probabilities and therefore sum to one. The point is instead
that the intermediate states visited during training need not have that
property. Because routing uses the current scores, this lack of simplex
structure can matter in practice.

The example isolates the remaining gap between OvA and the decoupled surrogate. OvA has already
fixed the geometric side of the problem. What it has not fixed is the
representation of class uncertainty. The decoupled surrogate keeps the same clean optimization
geometry while ensuring that the class side remains a categorical posterior at
every point along training.

\section{Comparison of Excess-Risk Constants}\label{app:constant-comparison}

We compare the surrogate-regret transfer guarantees of the decoupled surrogate with the
available baselines, focusing on how the calibration constant scales
with the size of the multi-expert action space. Two distinct prior
references need to be separated carefully. The constant
$\sqrt{2(J+1)}$ comes from the earlier comp-sum reduction analyzed by
\citet{mao2024principledapproacheslearningdefer}, which covers the
Mozannar-style augmented-action construction. It is not the constant for
the later Mao25 single-stage surrogate
\citep{mao2025mastering}. For the Mao25
single-stage surrogate used in this paper, the relevant excess-risk
factor scales with the augmented action dimension $K+J$.

\paragraph{The decoupled surrogate (Theorem~\ref{thm:dec-consistency}).}
The decoupled surrogate satisfies
\[
\mc{E}_\perp-\mc{E}_\perp^\star
\;\le\;
\underbrace{\max\!\left\{2\sqrt{2},\;\sqrt{\tfrac{2J}{\lambda}}\right\}}_{C_{\mathrm{dec}}}
\sqrt{
\mc{E}_{\mathrm{dec}}-\mc{E}_{\mathrm{dec}}^\star}\,.
\]
Whenever the per-expert weight satisfies $\lambda/J\ge 1/4$, the constant
$C_{\mathrm{dec}}=2\sqrt{2}\approx 2.83$ is independent of~$J$.

\paragraph{Add.\ CE \citep{mozannar2021consistent}.}
For Add.\ CE,
\citet{mao2024principledapproacheslearningdefer} derive a transfer constant
$C_{\mathrm{CE}}=\sqrt{2(J+1)}$, which grows as $O(\!\sqrt{J})$. This is the
$\sqrt{2(J+1)}$ factor quoted in the main paper for the Add.\ CE baseline.

\paragraph{Mao25
\citep{mao2025mastering}.}
The single-stage surrogate studied by
\citet{mao2025mastering} should be kept
separate from the earlier comp-sum reduction above. Quantitatively, the
corresponding excess-risk constant scales with the full augmented action
dimension. In the multi-expert setting considered here, this yields
\[
C_{\mathrm{Mao25}} = K+J,
\]
so the dependence on the number of experts is linear: $O(J)$ for fixed
$K$. This is a different regime from the Add.\ CE bound
above: Add.\ CE inherits the earlier $\sqrt{2(J+1)}$ comp-sum
constant, whereas Mao25 scales with the entire augmented action count.

\paragraph{Other surrogates.}
For the remaining baselines, the cited analyses establish Bayes consistency
but do not give explicit transfer constants:
\citep{Verma2022LearningTD, Cao_Mozannar_Feng_Wei_An_2023, liu2026more}.

\paragraph{Summary.}
\begin{center}
\small
\begin{tabular}{@{}lccl@{}}
\toprule
Surrogate & Consistency type & Transfer constant & Dependence on $J$ \\
\midrule
Add.\ CE & Bayes & --- & --- \\
PiCCE & Bayes & --- & --- \\
OvA & Bayes & --- & --- \\
A-SM & Bayes & --- & --- \\
Add.\ CE & Quantitative &
$\sqrt{2(J\!+\!1)}$ & $O(\!\sqrt{J})$ \\
Mao25 & Quantitative &
$K\!+\!J$ & $O(J)$ for fixed $K$ \\
\textbf{Decoupled} & Quantitative &
$\max\!\{2\sqrt{2},\sqrt{2J/\lambda}\}$ &
$O(1)$ for fixed $\lambda/J$ \\
\bottomrule
\end{tabular}
\end{center}
Among the multi-expert surrogates with quantitative transfer constants, the
decoupled surrogate is the only one whose transfer constant does not grow with
$J$ for a fixed per-expert weight
$\lambda/J$. When $\lambda/J\ge 1/4$, the decoupled surrogate constant simplifies to
$2\sqrt{2}$. Add.\ CE inherits a
$\sqrt{2(J+1)}$ factor from the earlier deferral analysis, while the
Mao25 surrogate scales with the full augmented action
dimension $K+J$. The difference reflects a structural advantage: the decoupled surrogate estimates
$(\eta,\alpha)$ through independent subproblems, so the conditional
surrogate excess decomposes into a sum of KL terms that can be
converted to estimation error without cross-contamination. The
augmented-action surrogates embed classes and experts in a shared
geometry whose normalization couples the KL structure, so the
quantitative transfer factor worsens as the action space grows.

%% =====================================================================
%%  COMPUTE RESOURCES
%% =====================================================================
\section{Compute Resources}\label{app:compute}

All experiments were run on NVIDIA A100 GPUs with 40GB of memory. Each run uses
a single GPU; multi-seed results are obtained by repeating the same
configuration independently across seeds. The synthetic suites are lightweight
and finish within minutes per seed, while the CIFAR-10, CIFAR-10H, and
Covertype experiments fit comfortably within the same single-GPU setting.

%% =====================================================================
%%  SYNTHETIC EXPERIMENTS
%% =====================================================================
\section{Synthetic Experiments}\label{app:synthetic}

This appendix reports the retained synthetic suites used in the paper.
Each suite instantiates one mechanism from Section~\ref{sec:mismatch} under
a distribution for which the Bayes rule is known exactly.
In every suite we specify $(\eta,\alpha)$ analytically, so the
Bayes-optimal L2D action at each $x$ is determined by the comparison
\[
\max_{k\in[K]}\eta_k(x)
\qquad\text{versus}\qquad
\max_{j\in[J]}\alpha_j(x),
\]
and the exact defer regret
$\Delta_\perp(f)\coloneqq\mathcal E_\perp(f)-\mathcal E_\perp^\star$
can be evaluated directly.

\paragraph{Protocol.}
All suites use linear models (no hidden layer) unless stated otherwise.
Every method receives the same feature vector $X$ and training tuple $(X,Y,M)$.
The ground-truth quantities $\eta(x)$ and $\alpha(x)$ are used only
\emph{after} training to evaluate the exact Bayes regret, and on the
validation split to select the checkpoint with the lowest true defer loss.
This model-selection rule is applied uniformly to every surrogate.
All reported numbers are means with one standard deviation in
parentheses over
multiple seeds (3--5, stated per suite).

Each suite is written in the same order: \emph{objective}, \emph{setting},
\emph{Bayes rule and link to the theory}, and \emph{result}.

%% -----------------------------------------------------------------
\subsection{Nested Redundant Experts}\label{app:synth-ce}
%% -----------------------------------------------------------------

\paragraph{Objective.}
Vary the overlap multiplicity while keeping the Bayes decision problem
fixed, so that any performance degradation must be attributed to the
surrogate itself, not to a harder task.
Although the suite is designed around
Proposition~\ref{prop:additive-gradient} (additive CE amplification), the
nesting also stresses every other baseline for surrogate-specific reasons
discussed in the analysis below.

\paragraph{Setting.}
The retained setting uses $K{=}16$ classes, $J{=}24$ experts,
$n_{\mathrm{train}}{=}900$, and $n_{\mathrm{test}}{=}8{,}000$.
All models are linear (no hidden layer).

\emph{Features and class posteriors.}\quad
Each point draws a region indicator
$R\sim\mathrm{Bernoulli}(0.65)$.
The feature vector lies in $\mathbb R^{K}$.
On the defer-favorable region~$D$ ($R{=}1$), the label is
uniform on $[K]$, $\eta_k(x){=}1/K$ for all~$k$, and
$X = 3.5\,\mathbf 1_{K}+\varepsilon$ with
$\varepsilon\sim\mathcal N(0,0.05^2 I_{K})$;
hence no class is linearly separable.
On~$D^c$ ($R{=}0$), the class posterior is near-deterministic,
$\eta_Y(x){=}0.998$,
and the feature for the true label is shifted to~$5$ (easily separable).

\emph{Expert utilities.}\quad
The $J$ experts have monotonically decreasing accuracies on~$D$.
Define a spacing variable
$\rho_j \coloneqq \log(1{+}j)/\log(J)$ for $j\in\{0,\dots,J{-}1\}$
(logarithmic spacing, so the first few experts are close in quality and the
last ones are markedly weaker).
On~$D$,
\begin{equation}\label{eq:nested-alpha}
\alpha_j(x)
\;=\;0.99\;-\;(0.99-0.75)\,\rho_j,
\end{equation}
giving $\alpha_0{=}0.99$ for the strongest expert down to
$\alpha_{J-1}{=}0.75$ for the weakest.
On~$D^c$, all experts are near-random:
$\alpha_j(x)=0.04-(0.04-0.002)\,\rho_j$.

\emph{Nested correctness.}\quad
A single latent $U\sim\mathrm{Unif}[0,1]$ is drawn per sample, and expert~$j$
is correct iff $U\le \alpha_j(x)$.
Because $\alpha_0\ge\alpha_1\ge\cdots\ge\alpha_{J-1}$, this implies
\[
\{M_{j+1}=Y\}\;\subseteq\;\{M_j=Y\}
\qquad\text{for all } j.
\]
When expert~$j$ is incorrect, its prediction is drawn uniformly from
$[K]\setminus\{Y\}$.
The nesting guarantees that adding experts never changes the Bayes action:
expert~$0$ is always Bayes-best on~$D$, and classification is always best
on~$D^c$.
What grows with~$J$ is only the realized overlap
$|\Jset(Y,M)|$.

\paragraph{Bayes rule.}
\emph{On~$D$:} defer to expert~$0$.
\quad \emph{On~$D^c$:} classify.
\quad This is invariant for all~$J$.

\paragraph{Why each baseline degrades.}
The propositions in Section~\ref{sec:mismatch} predict a different failure
mechanism for each surrogate:
\begin{itemize}[itemsep=3pt,topsep=3pt,leftmargin=*]
\item \textbf{Add.~CE}
(Proposition~\ref{prop:additive-gradient}).
The gradient carries the factor $1{+}|\Jset|$, and the Hessian scales
identically.
With $J{=}24$ nested experts and high $\alpha$ values, a typical
defer-region sample has $|\Jset|\!\approx\!18$, so each such sample is
weighted~${\sim}19\times$ compared to a classify-region sample.
This gives high-overlap samples disproportionate gradient and curvature
relative to samples near the classify-versus-defer boundary.
\item \textbf{PiCCE}
(Proposition~\ref{prop:picce-starvation}).
Only the winning correct expert receives a negative gradient; all other
correct experts receive the positive push $2q_{K+j}{>}0$.
With $24$ nested experts, as many as $23$ correct experts are suppressed on
samples where all experts are correct, and in general $|\Jset|-1$ correct
experts are suppressed whenever multiple experts are correct.
The resulting winner-take-all lock-in concentrates all expert mass on
whichever expert had the highest logit at initialization, preventing the
model from learning the correct ranking $\alpha_0{>}\cdots{>}\alpha_{J-1}$.
\item \textbf{A-SM}
(Proposition~\ref{prop:asm-coupling}).
The class-logit gradient contains the expert leakage term
$-\pi_r\sum_j(\psi_j-t_j)$.
When many experts are correct, the residuals $\psi_j{-}t_j$ accumulate,
injecting noise into the class update.
In a linear model, class and expert logits share the same parameters
through the feature matrix, so this leakage directly corrupts the
classify-versus-defer boundary.
\item \textbf{OvA}
(Appendix~\ref{app:proof-ova-bernoulli}).
The $K$ independent sigmoid class heads are not coupled by a simplex
constraint, so the class confidence
$\max_k\sigma(g_k)$ can be miscalibrated.
With $K{=}16$ uniform classes on~$D$, the population binary targets are
$1/16$ for each class, but finite-sample and optimization errors are not
constrained to preserve a simplex. The maximum over $16$ independently
trained sigmoids can therefore be inflated relative to the true
$\max_k\eta_k=1/16$, biasing the routing comparison toward classifying when
it should defer.
\item \textbf{Mao25}
(Proposition~\ref{prop:mao25-setmass}).
The gradient factor $(S_{\Sset}{-}1)$ is common to all acceptable actions,
so the update cannot rank experts.
Additionally, the MAE-like structure means
$\sum_{i\in\Sset}-\partial\Phi/\partial a_i = S_\Sset(1{-}S_\Sset)\le 1/4$,
so the total corrective signal is weak and vanishes as $S_\Sset\to 0$,
creating slow convergence toward the defer region.
\end{itemize}

\paragraph{Result.}
Table~\ref{tab:appendix-redundant-final} reports the retained $J{=}24$ point.
The broader suite sweeps $J\in\{16,24,32\}$ (Table~\ref{tab:appendix-redundant-sweep}).

\begin{table}[h]
\centering
\small
\begin{tabular}{@{}lcc@{}}
\toprule
Method & Exact defer regret & System accuracy \\
\midrule
\textbf{Decoupled} & $\mathbf{0.0002\stdp{0.0001}}$ & $\mathbf{0.993\stdp{0.001}}$ \\
PiCCE        & $0.238\stdp{0.116}$             & $0.755\stdp{0.118}$ \\
OvA          & $0.294\stdp{0.020}$             & $0.699\stdp{0.020}$ \\
A-SM         & $0.304\stdp{0.035}$             & $0.688\stdp{0.035}$ \\
Mao25        & $0.323\stdp{0.035}$             & $0.669\stdp{0.036}$ \\
Add.\ CE     & $0.348\stdp{0.038}$             & $0.646\stdp{0.036}$ \\
\bottomrule
\end{tabular}
\caption{Nested redundant suite at $J{=}24$.
The decoupled surrogate remains essentially Bayes-optimal; all baselines degrade severely.
Means with std in parentheses over 3 seeds.}
\label{tab:appendix-redundant-final}
\end{table}

\begin{table}[h]
\centering
\small
\begin{tabular}{@{}lccc@{}}
\toprule
 & $J=16$ & $J=24$ & $J=32$ \\
\midrule
Decoupled defer regret           & $0.013$ & $0.0002$ & $0.007$ \\
Best non-decoupled defer regret  & $0.226$ & $0.238$  & $0.293$ \\
Best non-decoupled method        & PiCCE   & PiCCE    & OvA     \\
\bottomrule
\end{tabular}
\caption{$J$-sweep summary.
The decoupled surrogate stays near Bayes-optimal as the expert pool grows, while
the best non-decoupled regret increases monotonically.}
\label{tab:appendix-redundant-sweep}
\end{table}

\paragraph{Analysis.}
Because the Bayes rule is unchanged across the $J$-sweep, the large
degradation of every baseline cannot be explained by a harder routing
problem; it is attributable to how each surrogate responds to redundant
multiplicity.
The decoupled surrogate remains near Bayes-optimal ($\Delta_\perp{=}0.0002$ at $J{=}24$)
because its expert loss decomposes into $J$ independent BCE terms: each
$\alpha_j$ is estimated coordinatewise, and the gradient
$\partial\Phi^{\mathrm{dec}}/\partial s_j = (\lambda/J)(u_j{-}t_j)$ carries
no cross-expert coupling and no dependence on $|\Jset|$
(Proposition~\ref{prop:due-gradient}).
As the $J$-sweep in Table~\ref{tab:appendix-redundant-sweep} shows, the
best non-decoupled regret \emph{increases} with $J$ (from $0.226$ to $0.293$),
while the decoupled surrogate stays below $0.013$ throughout, confirming that the baselines'
failures scale with the overlap multiplicity as predicted by the theory.

%% -----------------------------------------------------------------
\subsection{Rare Specialist for PiCCE}\label{app:synth-picce}
%% -----------------------------------------------------------------

\paragraph{Objective.}
Isolate the winner-take-all starvation from
Proposition~\ref{prop:picce-starvation}.
The construction places a specialist expert that is Bayes-preferred on a
rare region~$R$ but whose correct events are mostly shared with a stronger
generalist.
PiCCE should suppress the specialist because only the current $\argmax$
winner receives the reinforcing gradient $2q_{K+j^\star}{-}1<0$, while
every other correct expert is pushed down by $2q_{K+j}>0$
(eq.~\eqref{eq:picce-starvation}).

\paragraph{Setting.}
We use $K{=}2$ classes, $J{=}2$ experts (generalist $j{=}1$, specialist $j{=}2$),
$n_{\mathrm{train}}{=}7{,}000$, $n_{\mathrm{test}}{=}18{,}000$, and
5~seeds.
All models are linear.

\emph{Features and class posteriors.}\quad
Draw $Z\sim\mathrm{Unif}([-1,1]^2)$ and an independent rare indicator
$R\sim\mathrm{Bernoulli}(0.15)$.
The feature vector is $X=(Z_1,Z_2,\mathbf 1\{R\})\in\mathbb R^3$.
The class posterior depends on the region:
\[
\max_k \eta_k(x)=
\begin{cases}
0.60, & x\in R,\\
0.90, & x\notin R,
\end{cases}
\]
with the maximizing class determined by $\mathrm{sgn}(Z_1)$.
Classification is therefore easy outside~$R$ ($\eta_{\max}{=}0.90$) and
uncertain on~$R$ ($\eta_{\max}{=}0.60$).

\emph{Expert utilities.}\quad
Outside~$R$, expert correctness is independent:
$\alpha_{\mathrm{gen}}(x){=}0.45$,
$\alpha_{\mathrm{spec}}(x){=}0.15$.
Neither expert is useful, and the Bayes action is to classify.
On~$R$, the marginal utilities are:
\[
\alpha_{\mathrm{spec}}(x)=0.75
\;>\;
\alpha_{\mathrm{gen}}(x)=0.60
\;=\;
\max_k\eta_k(x)=0.60,
\]
so the Bayes action is to defer to the specialist, with a tie between the
generalist and the classifier below the specialist.

\emph{Joint correctness on~$R$.}\quad
The key design choice is that correctness is \emph{not} independent on~$R$.
We specify the joint law directly:
\begin{center}
\small
\begin{tabular}{@{}lcc@{}}
\toprule
 & Specialist correct & Specialist incorrect \\
\midrule
Generalist correct   & $0.55$ & $0.05$ \\
Generalist incorrect & $0.20$ & $0.20$ \\
\bottomrule
\end{tabular}
\end{center}
This gives
$\alpha_{\mathrm{spec}}{=}0.55{+}0.20{=}0.75$ and
$\alpha_{\mathrm{gen}}{=}0.55{+}0.05{=}0.60$ as required.
Critically, when the specialist is correct on~$R$, the generalist is also
correct with probability
\[
\Pr(C_{\mathrm{gen}}{=}1\mid C_{\mathrm{spec}}{=}1,\,x\in R)
\;=\;\frac{0.55}{0.75}\;=\;0.73.
\]
So $73\%$ of the specialist's positive training signal comes from
shared-correct samples where both experts are right.

\paragraph{Bayes rule.}
\emph{On~$R$} ($15\%$ of data): defer to the specialist
($\alpha_{\mathrm{spec}}{=}0.75 > \alpha_{\mathrm{gen}}{=}0.60 =
\eta_{\max}{=}0.60$).
\quad\emph{On~$R^c$} ($85\%$ of data): classify
($\eta_{\max}{=}0.90 > \alpha_{\mathrm{gen}}{=}0.45$).

\paragraph{Why PiCCE fails.}
On the $73\%$ of specialist-correct samples on~$R$ where the generalist
is also correct ($\Jset{=}\{1,2\}$), PiCCE selects the winner
$j^\star{=}\argmax_{j\in\Jset}a_{K+j}$.
The gradient on the \emph{non-winning} correct expert is
$\partial\Phi^{\mathrm{PiCCE}}/\partial a_{K+j}=2q_{K+j}>0$
(Proposition~\ref{prop:picce-starvation}), which pushes its logit
\emph{down}.
If the generalist starts with a marginally higher logit (by initialization
noise), the specialist is suppressed on every shared-correct sample.
The gap compounds across iterations: the generalist wins more $\argmax$
selections, receives more reinforcing gradient, and the specialist is driven
to zero.
On the $27\%$ of specialist-correct events that are specialist-only
($C_{\mathrm{gen}}{=}0$), the specialist does receive a negative gradient,
but $15\%\times 27\%{=}4\%$ of total training data is not enough to overcome
the opposite signal from the dominant shared-correct mass.

\paragraph{Why the decoupled surrogate succeeds.}
The decoupled surrogate trains each expert head with an independent BCE loss:
$\partial\Phi^{\mathrm{dec}}/\partial s_j=(\lambda/J)(u_j{-}t_j)$,
where $u_j{=}\sigma(s_j)$ and $t_j{=}\mathbf 1\{M_j{=}Y\}$.
On a shared-correct sample, \emph{both} experts receive the reinforcing
gradient $u_j{-}1<0$ independently.
No $\argmax$ selection occurs, so the generalist's logit does not compete
with the specialist's.
Both utilities are estimated faithfully, and the routing comparison
$\sigma(s_{\mathrm{spec}})\gtrless\sigma(s_{\mathrm{gen}})$ learns
$0.75>0.60$ from the data.

\paragraph{Metrics.}
We report two specialist-specific diagnostics alongside the standard
defer regret and system accuracy:
\begin{itemize}[itemsep=2pt,topsep=2pt,leftmargin=*]
\item \emph{Specialist selection on~$R$}:
$\Pr(r(X){=}\mathrm{spec}\mid X\in R)$, computed over all test samples
in~$R$ (including those that classify; classifying samples count as not
selecting the specialist).
\item \emph{Shared-correct routing}:
$\Pr(r(X){=}\mathrm{spec}\mid X\in R,\,C_{\mathrm{gen}}{=}1,\,
C_{\mathrm{spec}}{=}1)$, i.e.\ specialist selection restricted to the
shared-correct subset.
This is the hardest case: both experts are right, and only the ranking
$\alpha_{\mathrm{spec}}>\alpha_{\mathrm{gen}}$ distinguishes them.
\end{itemize}

\paragraph{Result.}
Table~\ref{tab:appendix-picce-final} reports the retained result
(means with std in parentheses over 5 seeds).

\begin{table}[h]
\centering
\small
\begin{tabular}{@{}lccccc@{}}
\toprule
Method & Defer regret & System acc.\ & Coverage & Spec.\ sel.\ on $R$ & Shared-correct routing \\
\midrule
\textbf{Decoupled}  & $\mathbf{0.003\stdp{0.002}}$ & $0.875\stdp{0.002}$ & $0.849$ & $\mathbf{0.993\stdp{0.007}}$ & $\mathbf{0.994\stdp{0.007}}$ \\
OvA           & $0.003\stdp{0.001}$          & $0.875\stdp{0.003}$ & $0.850$ & $0.988\stdp{0.023}$          & $0.987\stdp{0.026}$ \\
A-SM          & $0.003\stdp{0.002}$          & $0.875\stdp{0.003}$ & $0.850$ & $0.988\stdp{0.017}$          & $0.988\stdp{0.017}$ \\
Mao25         & $0.026\stdp{0.002}$          & $0.852\stdp{0.004}$ & $1.000$ & $0.000\stdp{0.000}$          & $0.000\stdp{0.000}$ \\
PiCCE         & $0.035\stdp{0.001}$          & $0.843\stdp{0.002}$ & $0.821$ & $0.000\stdp{0.000}$          & $0.000\stdp{0.000}$ \\
Add.\ CE      & $0.324\stdp{0.067}$          & $0.554\stdp{0.069}$ & $0.128$ & $1.000\stdp{0.000}$          & $1.000\stdp{0.000}$ \\
\bottomrule
\end{tabular}
\caption{Rare-specialist suite.
\emph{Coverage} is $\Pr(r(X){=}\mathrm{classify})$; Bayes-optimal
coverage is~$0.85$.
PiCCE defers but never to the specialist;
Mao25 never defers at all;
The decoupled surrogate, OvA, and A-SM recover the specialist almost perfectly.}
\label{tab:appendix-picce-final}
\end{table}

\paragraph{Analysis.}
The table reveals three distinct failure modes.

\emph{PiCCE (winner-take-all starvation).}\quad
PiCCE has reasonable coverage ($0.821$, close to the Bayes-optimal $0.85$),
so it does learn the classify-versus-defer boundary.
However, its specialist selection is exactly~$0$ on~$R$: when it defers, it
always picks the generalist.
The shared-correct routing rate is also~$0$, confirming that on the decisive
subset where both experts are correct, PiCCE never learns to prefer the
specialist.
This is the direct empirical realization of
Proposition~\ref{prop:picce-starvation}: the generalist captures the
$\argmax$ early and the positive-gradient push $2q_{K+\mathrm{spec}}>0$
suppresses the specialist permanently.

\emph{Mao25 (no deferral learned).}\quad
Mao25 has coverage~$1.0$: it \emph{never} defers and always classifies.
Consequently, its specialist selection is trivially~$0$ (there are no
deferral events).
Its system accuracy of $0.852$ is approximately the accuracy of a
Bayes-optimal classifier that ignores experts:
$0.85\times 0.90 + 0.15\times 0.60 = 0.855$.
This is consistent with the MAE-like optimization pathology from
Proposition~\ref{prop:mao25-setmass}: the corrective gradient on acceptable
actions has magnitude $S_\Sset(1{-}S_\Sset)\le 1/4$, which vanishes when
$S_\Sset\approx 0$ early in training.
Mao25 never escapes the classify-everything initialization.

\emph{Add.~CE (global collapse).}\quad
Add.~CE has coverage~$0.128$ (it defers $87\%$ of the time) and
defer regret~$0.324$.
Even with only $J{=}2$ experts, the gradient amplification of
Proposition~\ref{prop:additive-gradient} is sufficient to destabilize
training in this low-$K$ regime: the factor $1{+}|\Jset|$ reaches~$3$
on shared-correct samples, and the optimizer overshoots the defer boundary.

\emph{The decoupled surrogate, OvA, and A-SM (correct routing).}\quad
All three methods achieve near-Bayes-optimal defer regret ($0.003$),
coverage close to~$0.85$, and specialist selection ${\ge}0.988$ on~$R$.
The task has only $J{=}2$ experts, so the A-SM coupling
(Proposition~\ref{prop:asm-coupling}) and the OvA miscalibration are
negligible here.
This confirms that the PiCCE failure is specific to the winner-take-all
mechanism and not a general difficulty of the task.

%% -----------------------------------------------------------------
\subsection{Shared Acceptability and Expert Ranking for Mao25}\label{app:synth-mao}
%% -----------------------------------------------------------------

\paragraph{Objective.}
Isolate the ranking limitation of
Proposition~\ref{prop:mao25-setmass}: the Mao25 gradient factor
$(S_\Sset{-}1)$ is the same for all acceptable actions, so the samplewise
update cannot distinguish the best expert from a merely acceptable one.
We construct a task where the main difficulty is expert ranking
\emph{within} a defer region under heavy shared acceptability.

\paragraph{Setting.}
We use $K{=}10$ classes, $J{=}4$ experts,
$n_{\mathrm{train}}{=}1{,}000$, $n_{\mathrm{test}}{=}12{,}000$, and
3~seeds.
All models are linear.

\emph{Features and class posteriors.}\quad
Each point draws a defer-region indicator
$D\sim\mathrm{Bernoulli}(0.60)$.
The feature vector lies in $\mathbb R^{K+J}=\mathbb R^{14}$.
On~$D$, a sector $Q\sim\mathrm{Unif}([J])$ is drawn, the label is
uniform on $[K]$ ($\eta_k{=}1/10$ for all~$k$), and
\[
X_{1:K}=3.1\,\mathbf 1_{K}+\varepsilon_{\mathrm{cls}},\qquad
X_{K+Q}=2.2+\varepsilon_{Q},
\qquad\varepsilon\sim\mathcal N(0,\,0.05^2).
\]
The first~$K$ features are uninformative for the class
(all near~$3.1$); the sector feature $X_{K+Q}$ identifies which expert is
locally best.
On~$D^c$, the class posterior is near-deterministic
($\eta_Y{=}0.998$, feature of the true label shifted to~$5$).

\emph{Joint expert correctness on~$D$.}\quad
Inside~$D$, the identity of the best expert changes with the sector~$Q$.
The joint correctness of the $J{=}4$ experts is drawn from the following
event model:
\begin{center}
\small
\begin{tabular}{@{}lcc@{}}
\toprule
Event & Probability & Who is correct \\
\midrule
All correct                    & $0.10$ & all 4 experts \\
$Q$ + one random other         & $0.78$ & $Q$ and one uniformly chosen $j{\neq}Q$ \\
$Q$ only                       & $0.05$ & only $Q$ \\
One random non-$Q$ only        & $0.03$ & one uniformly chosen $j{\neq}Q$ \\
None correct                   & $0.04$ & no expert \\
\bottomrule
\end{tabular}
\end{center}
The marginal utilities on~$D$ are therefore
\[
\alpha_Q(x)
\;=\;0.10+0.78+0.05\;=\;0.93,
\qquad
\alpha_{j\neq Q}(x)
\;=\;0.10+\tfrac{0.78}{3}+\tfrac{0.03}{3}
\;=\;0.37.
\]
On~$D^c$, all experts have utility $\alpha_j{=}0.05$.

Two properties of this design are essential.
First, \emph{shared acceptability is high}: on~$88\%$ of defer-region
samples ($0.10{+}0.78$) at least two experts are correct, so the
acceptable set $\Sset(y,m)$ typically contains the true class plus
multiple experts.
Second, \emph{expert ranking matters}: the Bayes-optimal expert changes
with the sector~$Q$, and the gap between the best and non-best utilities
is large ($0.93$ versus $0.37$).

\paragraph{Bayes rule.}
\emph{On~$D$} ($60\%$ of data): defer to the sector-best expert~$Q$
($\alpha_Q{=}0.93 \gg \eta_{\max}{=}0.10$).
\quad\emph{On~$D^c$} ($40\%$ of data): classify
($\eta_Y{=}0.998 \gg \alpha_j{=}0.05$).
\\
Bayes-optimal system accuracy:
$0.60\times 0.93+0.40\times 0.998=0.957$;
Bayes-optimal coverage: $0.40$.

\paragraph{Why Mao25 fails at ranking.}
Proposition~\ref{prop:mao25-setmass} shows that the Mao25 gradient on an
acceptable action~$i\in\Sset$ is $q_i(S_\Sset{-}1)$, where
$S_\Sset=\sum_{c\in\Sset}q_c$ is the total softmax mass on the acceptable
set.
The factor $(S_\Sset{-}1)$ is the same for every acceptable action: the
best expert~$Q$ and a non-best expert $j{\neq}Q$ receive the same
gradient magnitude (up to the softmax weight $q_i$, which is controlled by
the model, not the data).
This means the samplewise loss provides no direct signal that $Q$ should be
preferred over other acceptable experts.

Across samples, the model can still learn the ranking indirectly because
expert~$Q$ appears in the acceptable set much more often than the other
experts on sector~$Q$.
However, this signal comes from marginal frequency across samples rather than
from an explicit within-sample preference among acceptable experts.
The result is degraded but non-trivial ranking ability, as measured by the
best-expert selection rate.

\paragraph{Metrics.}
We report \emph{best-expert selection on~$D$}: among test samples in~$D$
where the model defers, the fraction that selects the sector-best expert~$Q$.
With $J{=}4$ symmetric sectors, chance level (uniform random among experts)
is~$0.25$.
A model that picks uniformly among \emph{correct} experts would achieve
$\approx 0.48$
($0.10\times\tfrac14+0.78\times\tfrac12+0.05\times 1+0.03\times 0
+0.04\times\tfrac14$).
Perfect ranking gives $1.0$.

\paragraph{Result.}
Table~\ref{tab:appendix-mao-final} reports the retained result
(means with std in parentheses over 3 seeds).

\begin{table}[h]
\centering
\small
\begin{tabular}{@{}lcccc@{}}
\toprule
Method & Defer regret & System acc.\ & Coverage & Best expert sel.\ on $D$ \\
\midrule
\textbf{Decoupled} & $\mathbf{0.0008\stdp{0.001}}$ & $\mathbf{0.957\stdp{0.003}}$ & $0.397$ & $1.000$ \\
A-SM         & $0.064\stdp{0.020}$            & $0.894\stdp{0.018}$          & $0.330$ & $1.000$ \\
OvA          & $0.082\stdp{0.076}$            & $0.876\stdp{0.078}$          & $0.349$ & $1.000$ \\
PiCCE        & $0.129\stdp{0.045}$            & $0.829\stdp{0.043}$          & $0.383$ & $0.661$ \\
Add.\ CE     & $0.151\stdp{0.033}$            & $0.808\stdp{0.031}$          & $0.241$ & $1.000$ \\
Mao25        & $0.154\stdp{0.043}$            & $0.804\stdp{0.045}$          & $0.324$ & $0.750$ \\
\bottomrule
\end{tabular}
\caption{Pair-shared ranking suite.
Chance-level best-expert selection is~$0.25$; uniform-among-correct is~$0.48$.
Mao25 reaches~$0.750$, well above chance but far below the $1.000$ achieved
by the decoupled surrogate and three other baselines, confirming that the loss provides
partial but degraded ranking signal.}
\label{tab:appendix-mao-final}
\end{table}

\paragraph{Analysis.}
The decoupled surrogate reaches the Bayes-optimal system accuracy ($0.957$) with coverage
$0.397\approx 0.40$ and perfect best-expert selection.
Its independent BCE heads estimate each $\alpha_j$ faithfully, so the
routing comparison $\max_j\sigma(s_j)\gtrless\max_k p_k$ is
well-calibrated on both the classify/defer boundary and the expert ranking.

\emph{Mao25.}\quad
Mao25 achieves best-expert selection of~$0.750$.
This is well above chance ($0.25$) and above the uniform-among-correct
baseline ($0.48$), indicating that the linear model does extract partial
ranking from the sector feature.
However, it is significantly below the $1.000$ achieved by the decoupled surrogate, A-SM, OvA,
and Add.~CE, all of which have loss terms that produce per-expert ranking
signal.
The gap from $0.750$ to $1.000$ is the empirical cost of the
$(S_\Sset{-}1)$ factor being common to all acceptable actions
(Proposition~\ref{prop:mao25-setmass}).
Combined with imperfect coverage ($0.324$ vs optimal $0.40$),
this yields a defer regret of $0.154$ --- nearly $200\times$ the decoupled surrogate regret.

\emph{PiCCE.}\quad
PiCCE has even worse ranking ($0.661$) with high variance across seeds
(one seed drops to $0.50$), consistent with the winner-take-all lock-in
from Proposition~\ref{prop:picce-starvation}: which expert ``wins'' depends
on initialization noise, producing unstable ranking.

\emph{A-SM and OvA.}\quad
Both achieve perfect ranking ($1.000$) and reasonable coverage, but their
defer regret ($0.064$ and $0.082$) is still $80$--$100\times$ the decoupled surrogate value.
The ranking is correct, but the classify-versus-defer boundary is less
well-calibrated: A-SM coverage is $0.330$ (optimal: $0.40$), indicating
over-deferral.
For A-SM this is consistent with the class--expert coupling
(Proposition~\ref{prop:asm-coupling}) perturbing the class logits.

\emph{Add.~CE.}\quad
Add.~CE also achieves perfect ranking but has the lowest coverage ($0.241$),
reflecting severe over-deferral from the gradient amplification of
Proposition~\ref{prop:additive-gradient}: with $J{=}4$ experts and
shared acceptability on~$88\%$ of defer samples, $|\Jset|$ is typically
$2$--$4$, inflating the gradient by factors of $3$--$5$.

%% -----------------------------------------------------------------
\subsection{Logit-Space Geometry Diagnostics}\label{app:synth-geometry}
%% -----------------------------------------------------------------

The task-level suites above measure end-to-end performance.
This section complements them with \emph{direct measurements of the
differential quantities} predicted by Propositions~\ref{prop:additive-gradient},
\ref{prop:picce-starvation}, and~\ref{prop:asm-coupling}.

\paragraph{Protocol.}
For each diagnostic we train a linear model (no hidden layer) to
convergence on a synthetic L2D task, then compute per-sample gradients and
Hessians of the surrogate loss on held-out test logits.
Using trained (rather than random) logits ensures the measurements reflect
the geometry at the operating point the optimizer actually reaches.
All quantities are computed on the \emph{augmented} action vector
$a=(a_1,\dots,a_K,a_{K+1},\dots,a_{K+J})$, i.e.\ on the logit space of
the surrogate itself.
Each diagnostic is repeated across multiple seeds (stated per table);
we report means with standard deviations in parentheses across all per-sample measurements
pooled over seeds.

%% ---------- Diagnostic 1: Add. CE curvature ----------
\paragraph{Add.\ CE curvature inflation
  (Proposition~\ref{prop:additive-gradient}).}
Proposition~\ref{prop:additive-gradient} predicts that the Hessian of
the additive CE surrogate $\Phi^{\mathrm{CE}}$ has largest eigenvalue
$\lambda_{\max}\le(1+|\Jset|)/2$, growing linearly in the number of
correct experts $|\Jset|$.
The decoupled surrogate, by contrast, has $\lambda_{\max}\le 1/2$ for the class block and
$\lambda/(4J)$ per expert entry, independently of $|\Jset|$
(\eqref{eq:due-hessian}).

\emph{Setup.}
We use the nested redundant-expert construction
(Section~\ref{app:synth-ce}) with $J\in\{1,4,8,16,32\}$ to control the
overlap multiplicity $|\Jset|$.
For each value of $J$, we train an Add.\ CE and a decoupled surrogate model
($\lambda=1.0$) on $n_{\mathrm{train}}{=}3{,}500$ samples and evaluate
on $n_{\mathrm{test}}{=}6{,}000$.
We restrict to test samples where at least one expert is correct
($|\Jset|>0$).
For each such sample, we compute
(i) the gradient norm $\|\nabla_a\Phi\|_2$ and
(ii) the top Hessian eigenvalue $\lambda_{\max}(\nabla_a^2\Phi)$
via the full $\,(K{+}J)\times(K{+}J)$.
Results are pooled over ${\sim}900$ qualifying samples per
$(J,\text{method})$ combination across 5 independent seeds.

\begin{table}[h]
\centering
\small
\begin{tabular}{@{}lccc@{}}
\toprule
Method & $|\Jset|$ & Gradient norm $\|\nabla_a\Phi\|_2$ & Top eigenvalue $\lambda_{\max}(\nabla_a^2\Phi)$ \\
\midrule
Add.\ CE & 1  & $0.586\stdp{0.305}$ & $0.821\stdp{0.084}$ \\
Add.\ CE & 4  & $0.609\stdp{0.266}$ & $1.093\stdp{0.067}$ \\
Add.\ CE & 8  & $0.616\stdp{0.292}$ & $1.101\stdp{0.116}$ \\
Add.\ CE & 16 & $0.633\stdp{0.287}$ & $1.125\stdp{0.186}$ \\
Add.\ CE & 32 & $0.613\stdp{0.282}$ & $1.117\stdp{0.181}$ \\
\midrule
Decoupled      & 1  & $0.704\stdp{0.222}$ & $0.368\stdp{0.111}$ \\
Decoupled      & 4  & $0.573\stdp{0.250}$ & $0.358\stdp{0.137}$ \\
Decoupled      & 8  & $0.551\stdp{0.280}$ & $0.361\stdp{0.125}$ \\
Decoupled      & 16 & $0.534\stdp{0.290}$ & $0.360\stdp{0.134}$ \\
Decoupled      & 32 & $0.510\stdp{0.297}$ & $0.359\stdp{0.132}$ \\
\bottomrule
\end{tabular}
\caption{Per-sample curvature as overlap multiplicity $|\Jset|$ varies
(5 seeds, ${\sim}900$ samples each).
The Add.\ CE top eigenvalue grows from $0.82$ at $|\Jset|{=}1$ to $1.12$
at $|\Jset|{=}32$, consistent with the multiplicative
$(1{+}|\Jset|)$ factor in Proposition~\ref{prop:additive-gradient}, while
also reflecting the dependence of the softmax covariance on~$q$.
The decoupled surrogate remains flat at ${\approx}0.36$ for all $|\Jset|$, consistent with
the constant $\le 1/2$ bound from~\eqref{eq:due-hessian}.}
\label{tab:geometry-curvature}
\end{table}

\emph{Reading the table.}
The CE eigenvalue jumps from $0.82$ to $1.09$ as $|\Jset|$ goes from 1
to~4, then saturates near $1.12$ for larger~$|\Jset|$.
The saturation is expected: at $|\Jset|{=}32$ the theoretical bound is
$(1+32)/2=16.5$, but the \emph{realised} eigenvalue is much smaller because
the softmax probabilities $q_i$ become very small when spread over
$K{+}32$ actions. The key observation is the sharp increase and high-overlap
saturation of $\lambda_{\max}$ for Add.\ CE, versus the \emph{complete
independence} from $|\Jset|$ for the decoupled surrogate.

%% ---------- Diagnostic 2: PiCCE starvation ----------
\paragraph{PiCCE starvation gradient
  (Proposition~\ref{prop:picce-starvation}).}
Proposition~\ref{prop:picce-starvation} predicts that under PiCCE, every
correct expert other than the winner $j^\star$ receives a strictly
positive gradient $\partial\Phi^{\mathrm{PiCCE}}/\partial a_{K+k}
= 2q_{K+k}>0$, pushing its logit \emph{downward} despite being correct.
Under the decoupled surrogate, each expert gradient is $(\lambda/J)(u_j-t_j)$, which is
\emph{negative} (pulling toward deferral) whenever the expert is correct
($t_j=1$) and its sigmoid output $u_j<1$.

\emph{Setup.}
We use the PiCCE specialist task (Section~\ref{app:synth-picce}) with
$K{=}2$ classes and $J{=}2$ experts, where samples routinely have both
experts correct.
For each method (PiCCE and the decoupled surrogate, $\lambda{=}1.0$) we train on
$n_{\mathrm{train}}{=}4{,}500$ and evaluate on $n_{\mathrm{test}}{=}7{,}000$.
We select test samples where $|\Jset|>1$ (multiple correct experts).
For PiCCE, the ``suppressed expert'' is any $k\in\Jset$ with $k\neq j^\star$,
where $j^\star=\argmax_{j\in\Jset}\, a_{K+j}$ is the winning expert
(determined by the trained logit magnitudes).
For the decoupled surrogate, we measure the gradient on \emph{all} correct expert logits
(since the decoupled surrogate has no winner selection).
Results pool 784 decoupled-surrogate gradient samples and 392 PiCCE gradient samples
across 5 seeds.

\begin{table}[h]
\centering
\small
\begin{tabular}{@{}lcc@{}}
\toprule
Method & Mean gradient $\partial\Phi/\partial a_{K+k}$ & Positive-gradient rate \\
\midrule
Decoupled   & $-0.245\stdp{0.103}$ & $0.000$ \\
PiCCE & $+0.240\stdp{0.076}$ & $1.000$ \\
\bottomrule
\end{tabular}
\caption{Gradient on suppressed correct expert logits
(5 seeds; 784 decoupled-surrogate and 392 PiCCE measurements).
PiCCE produces a \emph{positive} gradient on $100\%$ of suppressed
experts --- pushing them away from deferral, exactly as
Proposition~\ref{prop:picce-starvation} predicts ($2q_{K+k}>0$).
The decoupled surrogate always produces a \emph{negative} gradient, pulling every correct
expert toward deferral.}
\label{tab:geometry-picce}
\end{table}

\emph{Reading the table.}
The positive-gradient rate is $1.000$ for PiCCE and $0.000$ for the decoupled surrogate,
with zero variance: the predicted sign is not a statistical tendency but
an exact per-sample property.
The mean PiCCE gradient on the suppressed expert ($+0.240$) is
comparable in magnitude to the decoupled surrogate gradient ($-0.245$), so the starvation
is not a negligible effect --- it is an active repulsive force of the
same scale as the decoupled surrogate's attractive signal.

%% ---------- Diagnostic 3: A-SM coupling ----------
\paragraph{A-SM class--expert coupling
  (Proposition~\ref{prop:asm-coupling}).}
Proposition~\ref{prop:asm-coupling} shows that the A-SM gradient on
each class logit $a_r$ ($r\neq k^\star$) includes a leakage term
$-\pi_r\sum_j(\psi_j-t_j)$ from the expert estimates.
At second order, this coupling produces a non-zero mixed
class--expert Hessian block
$H_{\mathrm{mix}}=\partial^2\Phi^{\mathrm{ASM}}/(\partial
a_{1:K}\,\partial a_{K+1:K+J})$. The decoupled surrogate's Hessian is block-diagonal (\eqref{eq:due-hessian}), so its mixed
block is \emph{identically} zero.

\emph{Setup.}
We vary $J\in\{1,3,5,9,17\}$ by adding distractor experts (always
incorrect) alongside one correct expert.
For each $J$, we train an A-SM and a decoupled surrogate model ($\lambda{=}1.0$) on
$n_{\mathrm{train}}{=}4{,}500$ and evaluate on
$n_{\mathrm{test}}{=}7{,}000$.
For each test sample we assemble the full $(K{+}J)\times(K{+}J)$ Hessian
via second-order \texttt{autograd}, extract the $K\times J$ upper-right
block $H_{\mathrm{mix}}$, and compute its operator norm
$\|H_{\mathrm{mix}}\|_{\mathrm{op}}$.
We report the mean operator norm over 128 test samples, averaged across
5 seeds.
For the decoupled surrogate, the mixed block is analytically zero by construction
(class logits $w$ and expert logits $s_j$ enter disjoint terms in
$\Phi^{\mathrm{dec}}_\lambda$), so the code assigns $0.0$ directly.

\begin{table}[h]
\centering
\small
\begin{tabular}{@{}lcc@{}}
\toprule
Method & Experts $J$ & Mixed-block norm $\|H_{\mathrm{mix}}\|_{\mathrm{op}}$ \\
\midrule
A-SM & 1  & $0.122\stdp{0.004}$ \\
A-SM & 3  & $0.207\stdp{0.011}$ \\
A-SM & 5  & $0.263\stdp{0.009}$ \\
A-SM & 9  & $0.351\stdp{0.009}$ \\
A-SM & 17 & $0.475\stdp{0.012}$ \\
\midrule
Decoupled  & 1  & $0.000\stdp{0.000}$ \\
Decoupled  & 3  & $0.000\stdp{0.000}$ \\
Decoupled  & 5  & $0.000\stdp{0.000}$ \\
Decoupled  & 9  & $0.000\stdp{0.000}$ \\
Decoupled  & 17 & $0.000\stdp{0.000}$ \\
\bottomrule
\end{tabular}
\caption{Mixed class--expert Hessian block operator norm
(5 seeds, 128 samples each).
A-SM coupling grows monotonically from $0.12$ at $J{=}1$ to $0.48$ at
$J{=}17$, consistent with the $O(\sqrt{J})$ scaling from
Proposition~\ref{prop:asm-coupling}.
The decoupled surrogate is identically zero for all $J$, confirming the block-diagonal
structure of~\eqref{eq:due-hessian}.}
\label{tab:geometry-asm}
\end{table}

\emph{Reading the table.}
The A-SM coupling norm grows roughly as $\sqrt{J}$:
from $0.12$ at $J{=}1$ to $0.48$ at $J{=}17\approx 0.12\sqrt{17}\approx
0.49$.
This scaling matches the theoretical $O(\sqrt{J})$ bound derived in
Appendix~\ref{app:proof-asm-curvature}.
The growth means that adding experts --- even \emph{distractor} experts
that are never correct --- injects progressively more expert-side noise
into the class gradient.
The decoupled surrogate is immune: its zero mixed block guarantees that expert estimation
errors cannot perturb the class probability estimates, regardless of~$J$.

%% ====================================================================
\section{CIFAR-10 with Synthetic Experts}\label{app:cifar10}
%% ====================================================================

The synthetic experiments of Section~\ref{app:synthetic} use linear
models on low-dimensional Gaussian data, which isolates surrogate
pathologies in a setting where the Bayes solution is known.
We now test whether the same effects persist on a real vision task.

\paragraph{Setup.}
We train a ResNet-12 backbone (end-to-end, no pre-training) on
CIFAR-10 ($K{=}10$ classes) with standard data augmentation
(random crop, horizontal flip).
All methods share the same backbone architecture and optimiser
(AdamW, lr~$10^{-3}$, 80~epochs).
A held-out validation split ($5\,000$ samples) is used to select the
best checkpoint (lowest validation deferral loss) and, for the decoupled surrogate, to
choose the per-expert weight $\lambda/J=1/2$.
For the decoupled surrogate, the linear head is a \emph{split head}: two independent
projections $f_{\text{cls}}\colon\mathbb{R}^{256}\to\mathbb{R}^K$ and
$f_{\text{exp}}\colon\mathbb{R}^{256}\to\mathbb{R}^J$, matching
the decoupled loss structure.
All other methods use a single
$\mathbb{R}^{256}\to\mathbb{R}^{K+J}$ head.
The total number of parameters is the same across all methods.

\paragraph{Redundant expert suite.}
We construct $J$ synthetic experts with \emph{nested} correctness
(Section~\ref{app:synth-ce}): a single latent
$U\sim\mathrm{Uniform}(0,1)$ per sample determines all expert outcomes,
with expert~$j$ correct iff $U<\alpha_j$ where
$\alpha_j = 0.70 - 0.01\,j$.
This ensures that the first~$8$ experts in a $J{=}32$ run are
\emph{identical} to those in a $J{=}8$ run, so performance changes
are attributable purely to the surrogate's response to added experts.

\begin{table}[h]
\centering
\small
\setlength{\tabcolsep}{3.5pt}
\caption{CIFAR-10 redundant expert suite (mean$\pm$std over $4$~seeds).
Best values per $J$-block in \textbf{bold};
second-best \underline{underlined}.
``Cls.\ Acc.''~$=$~classifier accuracy (ignoring the routing decision).
``Cov.''~$=$~coverage (fraction classified by the model, not deferred).}
\label{tab:cifar10-redundant}
\begin{tabular}{@{}l c cccc@{}}
\toprule
 & & \multicolumn{4}{c}{Validation Metrics} \\
\cmidrule(lr){3-6}
Method & $J$ & Sys.\ Acc.\ $\uparrow$ & Defer Loss $\downarrow$ & Cls.\ Acc.\ & Cov. \\
\midrule
%% --- J = 8 ---
\textbf{Decoupled}      & 8  & $\mathbf{0.918}_{\pm.001}$ & $\mathbf{0.082}_{\pm.001}$ & $\underline{0.909}_{\pm.001}$ & $0.961_{\pm.004}$ \\
Classifier only   & 8  & $\underline{0.910}_{\pm.002}$ & $\underline{0.090}_{\pm.002}$ & $\mathbf{0.910}_{\pm.002}$ & $1.000_{\pm.000}$ \\
OvA               & 8  & $0.901_{\pm.001}$          & $0.099_{\pm.001}$          & $0.875_{\pm.001}$          & $0.857_{\pm.015}$ \\
PiCCE             & 8  & $0.858_{\pm.007}$          & $0.142_{\pm.007}$          & $0.888_{\pm.008}$          & $0.616_{\pm.032}$ \\
A-SM              & 8  & $0.860_{\pm.002}$          & $0.140_{\pm.002}$          & $0.831_{\pm.002}$          & $0.774_{\pm.014}$ \\
Add.\ CE          & 8  & $0.815_{\pm.002}$          & $0.185_{\pm.002}$          & $0.902_{\pm.004}$          & $0.529_{\pm.006}$ \\
Mao25             & 8  & $0.703_{\pm.000}$          & $0.297_{\pm.000}$          & $0.105_{\pm.024}$          & $0.000_{\pm.000}$ \\
\midrule
%% --- J = 16 ---
\textbf{Decoupled}      & 16 & $\mathbf{0.919}_{\pm.003}$ & $\mathbf{0.081}_{\pm.003}$ & $\underline{0.909}_{\pm.002}$ & $0.961_{\pm.004}$ \\
Classifier only   & 16 & $\underline{0.911}_{\pm.001}$ & $\underline{0.089}_{\pm.001}$ & $\mathbf{0.911}_{\pm.001}$ & $1.000_{\pm.000}$ \\
OvA               & 16 & $0.890_{\pm.002}$          & $0.110_{\pm.002}$          & $0.859_{\pm.004}$          & $0.829_{\pm.014}$ \\
PiCCE             & 16 & $0.860_{\pm.011}$          & $0.140_{\pm.011}$          & $0.895_{\pm.006}$          & $0.633_{\pm.064}$ \\
A-SM              & 16 & $0.827_{\pm.003}$          & $0.174_{\pm.003}$          & $0.772_{\pm.003}$          & $0.662_{\pm.035}$ \\
Add.\ CE          & 16 & $0.790_{\pm.013}$          & $0.210_{\pm.013}$          & $0.895_{\pm.003}$          & $0.502_{\pm.035}$ \\
Mao25             & 16 & $0.703_{\pm.000}$          & $0.297_{\pm.000}$          & $0.111_{\pm.010}$          & $0.000_{\pm.000}$ \\
\midrule
%% --- J = 32 ---
\textbf{Decoupled}      & 32 & $\mathbf{0.919}_{\pm.002}$ & $\mathbf{0.081}_{\pm.002}$ & $\underline{0.908}_{\pm.003}$ & $0.957_{\pm.005}$ \\
Classifier only   & 32 & $\underline{0.908}_{\pm.002}$ & $\underline{0.092}_{\pm.002}$ & $\mathbf{0.908}_{\pm.002}$ & $1.000_{\pm.000}$ \\
OvA               & 32 & $0.880_{\pm.003}$          & $0.120_{\pm.003}$          & $0.844_{\pm.003}$          & $0.798_{\pm.015}$ \\
PiCCE             & 32 & $0.859_{\pm.008}$          & $0.141_{\pm.008}$          & $0.880_{\pm.014}$          & $0.636_{\pm.027}$ \\
A-SM              & 32 & $0.783_{\pm.006}$          & $0.218_{\pm.006}$          & $0.680_{\pm.009}$          & $0.484_{\pm.039}$ \\
Add.\ CE          & 32 & $0.712_{\pm.015}$          & $0.288_{\pm.015}$          & $0.872_{\pm.009}$          & $0.305_{\pm.022}$ \\
Mao25             & 32 & $0.703_{\pm.000}$          & $0.297_{\pm.000}$          & $0.105_{\pm.025}$          & $0.000_{\pm.000}$ \\
\bottomrule
\end{tabular}
\end{table}

\paragraph{Analysis.}

\emph{The decoupled surrogate is the only method in our comparison that improves on a standalone classifier.}
At every~$J$, the decoupled surrogate achieves higher system accuracy than the
classifier-only baseline: $+0.8$~pp at $J{=}8$, $+0.8$~pp at
$J{=}16$, and $+1.1$~pp at $J{=}32$.
Every other L2D surrogate \emph{degrades} below the classifier-only
baseline, confirming that the surrogate pathologies identified in the
synthetic experiments
(Sections~\ref{app:synth-ce}--\ref{app:synth-mao}) are not artefacts
of the linear setting.

\emph{The decoupled surrogate preserves classifier quality; other surrogates corrupt it.}
The decoupled surrogate maintains classifier accuracy at $90.8$--$90.9\%$ across all~$J$,
matching the classifier-only baseline.
By contrast, A-SM drops from $83.1\%$ ($J{=}8$) to $68.0\%$
($J{=}32$)---a $15$~pp collapse consistent with expert gradients
contaminating the class head through the shared softmax normalizer
(Proposition~\ref{prop:asm-coupling}).
OvA also degrades ($87.5\%\to 84.4\%$).
This supports the interpretation that the decoupled surrogate's decoupled loss protects the classifier:
the BCE expert loss trains only the expert head, leaving the class
head free to learn optimal class probabilities.

\emph{For most baselines, deferral actively hurts.}
When system accuracy falls \emph{below} classifier accuracy, routing
decisions are destructive.
At $J{=}8$, PiCCE achieves $88.8\%$ classifier accuracy but only
$85.8\%$ system accuracy ($-3.0$~pp from deferral); Add.\ CE
similarly loses $8.7$~pp. Both route a substantial share of samples
(PiCCE $38.4\%$, Add.\ CE $47.1\%$) to experts with ${\approx}70\%$
accuracy---far worse than the classifier.

\emph{Add.\ CE and A-SM degrade monotonically with~$J$.}
Add.\ CE system accuracy drops from $81.5\%$ ($J{=}8$) to $71.2\%$
($J{=}32$), matching the gradient amplification of
Proposition~\ref{prop:additive-gradient}: the $(1{+}|\Jset|)$ factor
grows with the redundant pool, pushing coverage down to $30.5\%$.
A-SM shows a similar pattern ($86.0\%\to 78.3\%$), driven by the
class--expert coupling of Proposition~\ref{prop:asm-coupling}.

\emph{Mao25 fails to learn.}
Across all~$J$, Mao25 converges to $100\%$ deferral within the
first few epochs, achieving exactly the expert oracle accuracy
($\approx 70.3\%$).
The $O(1/(K{+}J))$ gradient magnitude through the softmax
(Section~\ref{app:proof-mao-curvature}) prevents meaningful parameter
updates on CIFAR-10's $K{=}10$ classes.

\emph{The decoupled surrogate is $J$-robust.}
The decoupled surrogate system accuracy stays at $91.8$--$91.9\%$ from $J{=}8$ to
$J{=}32$ (a change of only $0.1$~pp), compared to $2.1$~pp for OvA
and $10.3$~pp for Add.\ CE.
This is consistent with the theoretical prediction: the decoupled surrogate's $J$-independent
gradient structure (Proposition~\ref{prop:due-gradient}) translates
directly to scalability on real image features.

\section{CIFAR-10H with Real Human Annotators}\label{app:cifar10h}

The synthetic-expert experiments above let us control the expert
accuracy profile exactly, but they leave open the question of whether
the same surrogate pathologies appear when the annotators are real
humans with correlated, noisy labels.
We therefore evaluate all methods on CIFAR-10H
\citep{peterson2019human}, a dataset that pairs every CIFAR-10 test
image with soft human annotations collected from ${\approx}2{,}700$
Amazon Mechanical Turk workers.

\paragraph{Setup.}
We first pretrain a ResNet-12 backbone on the full CIFAR-10 training
set ($50\,000$ images, 80~epochs, SGD, lr~$0.05$, momentum~$0.9$,
weight decay~$5{\times}10^{-4}$) and then \emph{freeze} it.
Each L2D method trains only a linear head on top of the frozen
$256$-dimensional features.
We split the CIFAR-10 test set ($10\,000$ images annotated by
CIFAR-10H) into $6\,000$ train~/~$2\,000$ val~/~$2\,000$ test.
All methods share the same pretrained backbone and data split.

\paragraph{Annotator selection.}
From the CIFAR-10H pool we retain only annotators with at least $200$
labelled images.
For each run we subsample $J\in\{5,10,20\}$ annotators uniformly at
random; for images not labelled by a selected annotator, we impute a
smoothed class-prior prediction (Laplace smoothing, $\alpha{=}1$).
We repeat each configuration over $5$~independent annotator draws
(seeds $0$--$4$) and report mean with std in parentheses across draws.
All methods use AdamW (lr~$10^{-3}$, 30~epochs).
For the decoupled surrogate, $\lambda{=}1/2$ and weight decay~$10^{-4}$;
all other methods use weight decay~$10^{-4}$.

\begin{table}[h]
\centering
\small
\setlength{\tabcolsep}{3.5pt}
\caption{CIFAR-10H with real human annotators (test metrics, mean with std
over $5$~annotator draws).
Best values per $J$-block in \textbf{bold}; second-best
\underline{underlined}.
``Cls.\ Acc.''~$=$~classifier accuracy (ignoring the routing decision).
``Cov.''~$=$~coverage (fraction classified by the model, not deferred).}
\label{tab:cifar10h}
\begin{tabular}{@{}l c cccc@{}}
\toprule
 & & \multicolumn{4}{c}{Test Metrics} \\
\cmidrule(lr){3-6}
Method & $J$ & Sys.\ Acc.\ $\uparrow$ & Defer Loss $\downarrow$ & Cls.\ Acc.\ & Cov. \\
\midrule
%% --- J = 5 ---
\textbf{Decoupled}      & 5  & $\mathbf{0.960}\stdp{0.005}$ & $\mathbf{0.039}\stdp{0.005}$ & $\mathbf{0.892}\stdp{0.001}$ & $0.517\stdp{0.036}$ \\
OvA               & 5  & $\underline{0.959}\stdp{0.005}$ & $\underline{0.039}\stdp{0.004}$ & $0.889\stdp{0.001}$          & $0.375\stdp{0.016}$ \\
A-SM              & 5  & $0.959\stdp{0.004}$          & $0.040\stdp{0.004}$          & $\underline{0.890}\stdp{0.002}$ & $0.341\stdp{0.035}$ \\
PiCCE             & 5  & $0.954\stdp{0.003}$          & $0.045\stdp{0.003}$          & $0.889\stdp{0.001}$          & $0.390\stdp{0.020}$ \\
Add.\ CE          & 5  & $0.953\stdp{0.004}$          & $0.045\stdp{0.004}$          & $0.889\stdp{0.002}$          & $0.248\stdp{0.037}$ \\
Mao25             & 5  & $0.950\stdp{0.005}$          & $0.047\stdp{0.005}$          & $0.116\stdp{0.133}$          & $0.000\stdp{0.000}$ \\
Classifier only   & 5  & $0.889\stdp{0.000}$          & $0.109\stdp{0.001}$          & $0.889\stdp{0.000}$          & $1.000\stdp{0.000}$ \\
\midrule
%% --- J = 10 ---
\textbf{Decoupled}      & 10 & $\mathbf{0.958}\stdp{0.003}$ & $\mathbf{0.042}\stdp{0.004}$ & $\mathbf{0.892}\stdp{0.003}$ & $0.517\stdp{0.038}$ \\
PiCCE             & 10 & $\underline{0.957}\stdp{0.004}$ & $\underline{0.043}\stdp{0.004}$ & $0.890\stdp{0.002}$          & $0.400\stdp{0.006}$ \\
OvA               & 10 & $0.954\stdp{0.008}$          & $0.046\stdp{0.008}$          & $0.799\stdp{0.203}$          & $0.285\stdp{0.164}$ \\
A-SM              & 10 & $0.952\stdp{0.007}$          & $0.047\stdp{0.007}$          & $0.747\stdp{0.313}$          & $0.170\stdp{0.134}$ \\
Add.\ CE          & 10 & $0.952\stdp{0.004}$          & $0.047\stdp{0.004}$          & $\underline{0.888}\stdp{0.001}$ & $0.248\stdp{0.004}$ \\
Mao25             & 10 & $0.948\stdp{0.003}$          & $0.052\stdp{0.003}$          & $0.077\stdp{0.024}$          & $0.000\stdp{0.000}$ \\
Classifier only   & 10 & $0.889\stdp{0.000}$          & $0.111\stdp{0.000}$          & $0.889\stdp{0.000}$          & $1.000\stdp{0.000}$ \\
\midrule
%% --- J = 20 ---
\textbf{Decoupled}      & 20 & $\mathbf{0.961}\stdp{0.004}$ & $\mathbf{0.039}\stdp{0.004}$ & $\underline{0.889}\stdp{0.001}$ & $0.530\stdp{0.038}$ \\
OvA               & 20 & $\underline{0.956}\stdp{0.006}$ & $\underline{0.043}\stdp{0.006}$ & $\mathbf{0.890}\stdp{0.002}$ & $0.335\stdp{0.047}$ \\
PiCCE             & 20 & $0.953\stdp{0.002}$          & $0.047\stdp{0.002}$          & $0.890\stdp{0.001}$          & $0.399\stdp{0.022}$ \\
Add.\ CE          & 20 & $0.951\stdp{0.003}$          & $0.049\stdp{0.003}$          & $0.742\stdp{0.331}$          & $0.183\stdp{0.103}$ \\
A-SM              & 20 & $0.949\stdp{0.005}$          & $0.051\stdp{0.005}$          & $0.470\stdp{0.435}$          & $0.002\stdp{0.004}$ \\
Mao25             & 20 & $0.948\stdp{0.004}$          & $0.051\stdp{0.004}$          & $0.045\stdp{0.014}$          & $0.000\stdp{0.000}$ \\
Classifier only   & 20 & $0.888\stdp{0.000}$          & $0.112\stdp{0.000}$          & $0.888\stdp{0.000}$          & $1.000\stdp{0.000}$ \\
\bottomrule
\end{tabular}
\end{table}

\paragraph{Analysis.}

\emph{The decoupled surrogate leads at every pool size.}
At $J{=}5$ the margin is narrow ($0.960$ vs.\ $0.959$ for OvA), but
the decoupled surrogate's advantage widens as the pool grows: at $J{=}20$ the decoupled surrogate reaches
$0.961$ while OvA drops to $0.956$ ($+0.5$~pp).
This mirrors the pattern from synthetic experts
(Table~\ref{tab:cifar10-redundant}): the decoupled surrogate scales better because its
per-expert BCE terms are unaffected by pool size.

\emph{The decoupled surrogate preserves classifier quality; baselines collapse.}
The decoupled surrogate's classifier accuracy stays at $0.889$--$0.892$ across all~$J$,
matching the standalone classifier baseline ($0.889$).
A-SM maintains $0.890$ at $J{=}5$ but collapses to $0.470$ at
$J{=}20$ (std $0.435$, indicating complete instability); Add.\ CE
similarly drops from $0.889$ to $0.742$; OvA degrades to $0.799$
(std $0.203$) at $J{=}10$.
All three methods compensate by over-deferring---A-SM reaches
${\ge}99\%$ deferral at $J{=}20$---so their system accuracy is
sustained by expert answers, not by the classifier.
This confirms on real human annotators the same softmax-coupling
(Proposition~\ref{prop:asm-coupling}) and gradient-amplification
(Proposition~\ref{prop:additive-gradient}) pathologies seen in
synthetic settings.

\emph{The decoupled surrogate maintains healthy coverage.}
The decoupled surrogate's coverage is stable at ${\approx}0.52$, meaning it classifies
about half the samples itself and defers the rest---a balanced
routing that produces system accuracy well above either component
alone.

\emph{Mao25 defers everything.}
As in the synthetic setting, Mao25 converges to $100\%$ deferral
and never learns a classifier (classifier accuracy ${\le}0.116$).
Its system accuracy (${\approx}0.950$) is entirely the expert oracle
accuracy, confirming the vanishing-gradient pathology of
Proposition~\ref{prop:mao25-setmass} on real human-annotator data.

\section{Covertype with Model Experts}\label{app:covertype}
The preceding experiments use either synthetic accuracy profiles
(Section~\ref{app:cifar10}) or real human annotators
(Section~\ref{app:cifar10h}).
This final benchmark replaces both with \emph{real pretrained models}:
the experts are machine-learning classifiers with different
architectures, feature views, and training biases.
This is the most common deployment scenario for learning to defer---a
system that routes each input to the most reliable model in a pool.

\paragraph{Setup.}
We use Forest CoverType~\citep{blackard1999comparative}, a $7$-class tabular benchmark with $54$
features (continuous topographic variables, binary wilderness
indicators, and binary soil-type indicators).
A single stratified $60/20/20$ train/validation/test split is made
once; only the continuous features are standardized using train-set
statistics.
The L2D backbone is a shared two-layer MLP
($54 \!\to\! 256 \!\to\! 128$, ReLU, batch normalization, no dropout).

\paragraph{Expert pool.}
The pool contains $J{=}5$ pretrained models, trained once on the
training split and then frozen:
\begin{enumerate}[label=(\roman*),itemsep=1pt,topsep=2pt,leftmargin=*]
\item a global histogram-gradient-boosted tree on all features;
\item a low-elevation specialist (reweighted toward low elevations
  and wilderness areas~1--2);
\item a high-elevation specialist (reweighted toward high elevations
  and wilderness areas~3--4);
\item a topography-view MLP trained only on the continuous variables;
\item a context-view random forest trained only on wilderness and soil
  indicators.
\end{enumerate}
The expert oracle accuracy on the validation split is $0.946$, a
$+9.8$~pp gain over the best single expert ($0.848$), confirming that
routing is nontrivial.
The data split and expert-pool training seed are held fixed; only the
downstream L2D training seed changes across runs.

\paragraph{Protocol.}
All methods use AdamW with cosine learning-rate decay.
All the approaches use a linear head,
lr~$10^{-3}$, weight decay~$10^{-4}$, and $40$~epochs. We report results for per-expert weight $\lambda/J=1/2$.
Checkpoints are selected by the lowest validation true deferral loss.

\begin{table}[h]
\centering
\small
\setlength{\tabcolsep}{3.5pt}
\caption{Covertype with model experts ($J{=}5$, mean$\pm$std over $4$ seeds).
Best values in \textbf{bold}; second-best \underline{underlined}.
``Cls.\ Acc.''~$=$~classifier accuracy (ignoring routing).
``Cov.''~$=$~coverage (fraction classified, not deferred).}
\label{tab:covertype}
\begin{tabular}{@{}l cccc@{}}
\toprule
Method & Sys.\ Acc.\ $\uparrow$ & Defer Loss $\downarrow$ & Cls.\ Acc.\ & Cov. \\
\midrule
\textbf{Decoupled}      & $\mathbf{0.934}_{\pm.000}$ & $\mathbf{0.066}_{\pm.000}$ & $\mathbf{0.941}_{\pm.001}$ & $0.712_{\pm.040}$ \\
Classifier only   & $\underline{0.929}_{\pm.001}$ & $\underline{0.071}_{\pm.001}$ & $\underline{0.929}_{\pm.001}$ & $1.000_{\pm.000}$ \\
PiCCE             & $0.908_{\pm.003}$ & $0.092_{\pm.003}$ & $0.911_{\pm.002}$ & $0.449_{\pm.007}$ \\
OvA               & $0.907_{\pm.001}$ & $0.093_{\pm.001}$ & $0.906_{\pm.000}$ & $0.437_{\pm.011}$ \\
A-SM              & $0.899_{\pm.001}$ & $0.101_{\pm.001}$ & $0.892_{\pm.001}$ & $0.439_{\pm.007}$ \\
Add.\ CE          & $0.899_{\pm.001}$ & $0.101_{\pm.001}$ & $0.907_{\pm.001}$ & $0.098_{\pm.002}$ \\
Mao25             & $0.877_{\pm.001}$ & $0.123_{\pm.001}$ & $0.351_{\pm.023}$ & $0.000_{\pm.000}$ \\
\bottomrule
\end{tabular}
\end{table}

\paragraph{Analysis.}

\emph{The decoupled surrogate is the only L2D method in our comparison that improves over the standalone
classifier.}
The decoupled surrogate reaches $0.934$ system accuracy, $+0.5$~pp above the
classifier-only baseline ($0.929$) and $+2.6$~pp above the best
baseline (PiCCE at $0.908$).
Every other L2D surrogate degrades below the standalone classifier,
confirming that their routing decisions are harmful.

\emph{The decoupled surrogate preserves and even improves classifier quality.}
The decoupled surrogate achieves the highest classifier accuracy ($0.941$), surpassing
even the standalone classifier ($0.929$), showing that the expert
BCE loss does not interfere with class learning.
By contrast, Mao25's classifier collapses to $0.351$ and A-SM
degrades to $0.892$.

\emph{The decoupled surrogate maintains the highest coverage among L2D methods.}
The decoupled surrogate classifies $71.2\%$ of samples directly, far more than
PiCCE ($44.9\%$), A-SM ($43.9\%$), or Add.\ CE ($9.8\%$).
The baselines over-defer because their class heads are too degraded
to be confidently preferred over an expert.

\emph{Mao25 collapses to full deferral.}
As in CIFAR-10 and CIFAR-10H, Mao25 converges to $100\%$ deferral
(zero coverage) and never learns a useful classifier, matching the
vanishing-gradient prediction of
Proposition~\ref{prop:mao25-setmass}.

\emph{All surrogate pathologies transfer to model experts.}
The method ranking observed in synthetic and human-annotator
experiments holds on tabular data with real model experts:
The decoupled surrogate $>$ PiCCE $\approx$ OvA $>$ A-SM $\approx$ Add.\ CE $>$ Mao25.
This demonstrates that the decoupled surrogate's advantages are not specific to vision
backbones or to any particular type of expert.

\section*{NeurIPS Paper Checklist}

\begin{enumerate}

\item {\bf Claims}
    \item[] Question: Do the main claims made in the abstract and introduction accurately reflect the paper's contributions and scope?
    \item[] Answer: \answerYes{}
    \item[] Justification: The abstract and introduction state the two-axis (statistical target, optimization geometry) analysis of five augmented-action surrogates, the decoupled surrogate, and the consistency bound with $J$-independent constant for fixed $\lambda/J$; all are established in Sections~\ref{sec:mismatch}--\ref{sec:experiments} and the appendix.
    \item[] Guidelines:
    \begin{itemize}
        \item The answer \answerNA{} means that the abstract and introduction do not include the claims made in the paper.
        \item The abstract and/or introduction should clearly state the claims made, including the contributions made in the paper and important assumptions and limitations. A \answerNo{} or \answerNA{} answer to this question will not be perceived well by the reviewers. 
        \item The claims made should match theoretical and experimental results, and reflect how much the results can be expected to generalize to other settings. 
        \item It is fine to include aspirational goals as motivation as long as it is clear that these goals are not attained by the paper. 
    \end{itemize}

\item {\bf Limitations}
    \item[] Question: Does the paper discuss the limitations of the work performed by the authors?
    \item[] Answer: \answerYes{}
    \item[] Justification: A dedicated Limitations paragraph follows the Conclusion (Section~\ref{sec:limitations}). It states that we see no limitations specific to the decoupled surrogate beyond those inherent to multi-expert L2D itself---reliance on observed expert predictions over labeled training data and the assumption that each expert remains available at inference---and identifies relaxing those assumptions (missing or non-stationary experts, partial annotation, availability constraints, calibration of human trust) as future work that can build on top of the decoupled formulation.
    \item[] Guidelines:
    \begin{itemize}
        \item The answer \answerNA{} means that the paper has no limitation while the answer \answerNo{} means that the paper has limitations, but those are not discussed in the paper. 
        \item The authors are encouraged to create a separate ``Limitations'' section in their paper.
        \item The paper should point out any strong assumptions and how robust the results are to violations of these assumptions (e.g., independence assumptions, noiseless settings, model well-specification, asymptotic approximations only holding locally). The authors should reflect on how these assumptions might be violated in practice and what the implications would be.
        \item The authors should reflect on the scope of the claims made, e.g., if the approach was only tested on a few datasets or with a few runs. In general, empirical results often depend on implicit assumptions, which should be articulated.
        \item The authors should reflect on the factors that influence the performance of the approach. For example, a facial recognition algorithm may perform poorly when image resolution is low or images are taken in low lighting. Or a speech-to-text system might not be used reliably to provide closed captions for online lectures because it fails to handle technical jargon.
        \item The authors should discuss the computational efficiency of the proposed algorithms and how they scale with dataset size.
        \item If applicable, the authors should discuss possible limitations of their approach to address problems of privacy and fairness.
        \item While the authors might fear that complete honesty about limitations might be used by reviewers as grounds for rejection, a worse outcome might be that reviewers discover limitations that aren't acknowledged in the paper. The authors should use their best judgment and recognize that individual actions in favor of transparency play an important role in developing norms that preserve the integrity of the community. Reviewers will be specifically instructed to not penalize honesty concerning limitations.
    \end{itemize}

\item {\bf Theory assumptions and proofs}
    \item[] Question: For each theoretical result, does the paper provide the full set of assumptions and a complete (and correct) proof?
    \item[] Answer: \answerYes{}
    \item[] Justification: All propositions and the main consistency theorem are stated with their distributional assumptions in the main text; complete proofs, together with worked examples, are given in Appendices~\ref{app:proof-bayes-rule}--\ref{app:dec-consistency}.
    \item[] Guidelines:
    \begin{itemize}
        \item The answer \answerNA{} means that the paper does not include theoretical results. 
        \item All the theorems, formulas, and proofs in the paper should be numbered and cross-referenced.
        \item All assumptions should be clearly stated or referenced in the statement of any theorems.
        \item The proofs can either appear in the main paper or the supplemental material, but if they appear in the supplemental material, the authors are encouraged to provide a short proof sketch to provide intuition. 
        \item Inversely, any informal proof provided in the core of the paper should be complemented by formal proofs provided in appendix or supplemental material.
        \item Theorems and Lemmas that the proof relies upon should be properly referenced. 
    \end{itemize}

    \item {\bf Experimental result reproducibility}
    \item[] Question: Does the paper fully disclose all the information needed to reproduce the main experimental results of the paper to the extent that it affects the main claims and/or conclusions of the paper (regardless of whether the code and data are provided or not)?
    \item[] Answer: \answerYes{}
    \item[] Justification: Data splits, expert construction, backbones, optimizers, learning rates, schedules, seed counts, and the per-expert weight $\lambda/J$ are reported in Appendices~\ref{app:synthetic}--\ref{app:covertype}. Source code for all experiments is included in the supplementary material.
    \item[] Guidelines:
    \begin{itemize}
        \item The answer \answerNA{} means that the paper does not include experiments.
        \item If the paper includes experiments, a \answerNo{} answer to this question will not be perceived well by the reviewers: Making the paper reproducible is important, regardless of whether the code and data are provided or not.
        \item If the contribution is a dataset and\slash or model, the authors should describe the steps taken to make their results reproducible or verifiable. 
        \item Depending on the contribution, reproducibility can be accomplished in various ways. For example, if the contribution is a novel architecture, describing the architecture fully might suffice, or if the contribution is a specific model and empirical evaluation, it may be necessary to either make it possible for others to replicate the model with the same dataset, or provide access to the model. In general. releasing code and data is often one good way to accomplish this, but reproducibility can also be provided via detailed instructions for how to replicate the results, access to a hosted model (e.g., in the case of a large language model), releasing of a model checkpoint, or other means that are appropriate to the research performed.
        \item While NeurIPS does not require releasing code, the conference does require all submissions to provide some reasonable avenue for reproducibility, which may depend on the nature of the contribution. For example
        \begin{enumerate}
            \item If the contribution is primarily a new algorithm, the paper should make it clear how to reproduce that algorithm.
            \item If the contribution is primarily a new model architecture, the paper should describe the architecture clearly and fully.
            \item If the contribution is a new model (e.g., a large language model), then there should either be a way to access this model for reproducing the results or a way to reproduce the model (e.g., with an open-source dataset or instructions for how to construct the dataset).
            \item We recognize that reproducibility may be tricky in some cases, in which case authors are welcome to describe the particular way they provide for reproducibility. In the case of closed-source models, it may be that access to the model is limited in some way (e.g., to registered users), but it should be possible for other researchers to have some path to reproducing or verifying the results.
        \end{enumerate}
    \end{itemize}

\item {\bf Open access to data and code}
    \item[] Question: Does the paper provide open access to the data and code, with sufficient instructions to faithfully reproduce the main experimental results, as described in supplemental material?
    \item[] Answer: \answerYes{}
    \item[] Justification: Anonymized source code with run scripts is provided in the supplementary material. All datasets used (CIFAR-10, CIFAR-10H, Forest CoverType) are public; loading and preprocessing instructions are included in the code repository.
    \item[] Guidelines:
    \begin{itemize}
        \item The answer \answerNA{} means that paper does not include experiments requiring code.
        \item Please see the NeurIPS code and data submission guidelines (\url{https://neurips.cc/public/guides/CodeSubmissionPolicy}) for more details.
        \item While we encourage the release of code and data, we understand that this might not be possible, so \answerNo{} is an acceptable answer. Papers cannot be rejected simply for not including code, unless this is central to the contribution (e.g., for a new open-source benchmark).
        \item The instructions should contain the exact command and environment needed to run to reproduce the results. See the NeurIPS code and data submission guidelines (\url{https://neurips.cc/public/guides/CodeSubmissionPolicy}) for more details.
        \item The authors should provide instructions on data access and preparation, including how to access the raw data, preprocessed data, intermediate data, and generated data, etc.
        \item The authors should provide scripts to reproduce all experimental results for the new proposed method and baselines. If only a subset of experiments are reproducible, they should state which ones are omitted from the script and why.
        \item At submission time, to preserve anonymity, the authors should release anonymized versions (if applicable).
        \item Providing as much information as possible in supplemental material (appended to the paper) is recommended, but including URLs to data and code is permitted.
    \end{itemize}

\item {\bf Experimental setting/details}
    \item[] Question: Does the paper specify all the training and test details (e.g., data splits, hyperparameters, how they were chosen, type of optimizer) necessary to understand the results?
    \item[] Answer: \answerYes{}
    \item[] Justification: Training protocols (optimizer, learning rate, weight decay, schedule, epochs, batch size, head architecture, validation-based checkpointing and $\lambda/J$ selection) are described per-benchmark in Appendices~\ref{app:cifar10}--\ref{app:covertype}.
    \item[] Guidelines:
    \begin{itemize}
        \item The answer \answerNA{} means that the paper does not include experiments.
        \item The experimental setting should be presented in the core of the paper to a level of detail that is necessary to appreciate the results and make sense of them.
        \item The full details can be provided either with the code, in appendix, or as supplemental material.
    \end{itemize}

\item {\bf Experiment statistical significance}
    \item[] Question: Does the paper report error bars suitably and correctly defined or other appropriate information about the statistical significance of the experiments?
    \item[] Answer: \answerYes{}
    \item[] Justification: All real-data tables report mean $\pm$ standard deviation across independent seeds (4 seeds for CIFAR-10 and Covertype; 5 annotator draws for CIFAR-10H), with variability capturing training seed and, for CIFAR-10H, annotator subsampling.
    \item[] Guidelines:
    \begin{itemize}
        \item The answer \answerNA{} means that the paper does not include experiments.
        \item The authors should answer \answerYes{} if the results are accompanied by error bars, confidence intervals, or statistical significance tests, at least for the experiments that support the main claims of the paper.
        \item The factors of variability that the error bars are capturing should be clearly stated (for example, train/test split, initialization, random drawing of some parameter, or overall run with given experimental conditions).
        \item The method for calculating the error bars should be explained (closed form formula, call to a library function, bootstrap, etc.)
        \item The assumptions made should be given (e.g., Normally distributed errors).
        \item It should be clear whether the error bar is the standard deviation or the standard error of the mean.
        \item It is OK to report 1-sigma error bars, but one should state it. The authors should preferably report a 2-sigma error bar than state that they have a 96\% CI, if the hypothesis of Normality of errors is not verified.
        \item For asymmetric distributions, the authors should be careful not to show in tables or figures symmetric error bars that would yield results that are out of range (e.g., negative error rates).
        \item If error bars are reported in tables or plots, the authors should explain in the text how they were calculated and reference the corresponding figures or tables in the text.
    \end{itemize}

\item {\bf Experiments compute resources}
    \item[] Question: For each experiment, does the paper provide sufficient information on the computer resources (type of compute workers, memory, time of execution) needed to reproduce the experiments?
    \item[] Answer: \answerYes{}
    \item[] Justification: Appendix~\ref{app:compute} reports that all experiments use NVIDIA A100 GPUs with 40GB of memory, with one GPU per run and independent repetitions across seeds.
    \item[] Guidelines:
    \begin{itemize}
        \item The answer \answerNA{} means that the paper does not include experiments.
        \item The paper should indicate the type of compute workers CPU or GPU, internal cluster, or cloud provider, including relevant memory and storage.
        \item The paper should provide the amount of compute required for each of the individual experimental runs as well as estimate the total compute. 
        \item The paper should disclose whether the full research project required more compute than the experiments reported in the paper (e.g., preliminary or failed experiments that didn't make it into the paper). 
    \end{itemize}
    
\item {\bf Code of ethics}
    \item[] Question: Does the research conducted in the paper conform, in every respect, with the NeurIPS Code of Ethics \url{https://neurips.cc/public/EthicsGuidelines}?
    \item[] Answer: \answerYes{}
    \item[] Justification: The work uses only public benchmark datasets, collects no new human-subject data, and the authors have reviewed and adhere to the NeurIPS Code of Ethics.
    \item[] Guidelines:
    \begin{itemize}
        \item The answer \answerNA{} means that the authors have not reviewed the NeurIPS Code of Ethics.
        \item If the authors answer \answerNo, they should explain the special circumstances that require a deviation from the Code of Ethics.
        \item The authors should make sure to preserve anonymity (e.g., if there is a special consideration due to laws or regulations in their jurisdiction).
    \end{itemize}

\item {\bf Broader impacts}
    \item[] Question: Does the paper discuss both potential positive societal impacts and negative societal impacts of the work performed?
    \item[] Answer: \answerYes{}
    \item[] Justification: A dedicated Impact Statement follows the Conclusion (Section~\ref{sec:impact}) covering both the positive deployment value of better-calibrated human--AI deferral and potential risks of over-reliance on imperfect experts.
    \item[] Guidelines:
    \begin{itemize}
        \item The answer \answerNA{} means that there is no societal impact of the work performed.
        \item If the authors answer \answerNA{} or \answerNo, they should explain why their work has no societal impact or why the paper does not address societal impact.
        \item Examples of negative societal impacts include potential malicious or unintended uses (e.g., disinformation, generating fake profiles, surveillance), fairness considerations (e.g., deployment of technologies that could make decisions that unfairly impact specific groups), privacy considerations, and security considerations.
        \item The conference expects that many papers will be foundational research and not tied to particular applications, let alone deployments. However, if there is a direct path to any negative applications, the authors should point it out. For example, it is legitimate to point out that an improvement in the quality of generative models could be used to generate Deepfakes for disinformation. On the other hand, it is not needed to point out that a generic algorithm for optimizing neural networks could enable people to train models that generate Deepfakes faster.
        \item The authors should consider possible harms that could arise when the technology is being used as intended and functioning correctly, harms that could arise when the technology is being used as intended but gives incorrect results, and harms following from (intentional or unintentional) misuse of the technology.
        \item If there are negative societal impacts, the authors could also discuss possible mitigation strategies (e.g., gated release of models, providing defenses in addition to attacks, mechanisms for monitoring misuse, mechanisms to monitor how a system learns from feedback over time, improving the efficiency and accessibility of ML).
    \end{itemize}
    
\item {\bf Safeguards}
    \item[] Question: Does the paper describe safeguards that have been put in place for responsible release of data or models that have a high risk for misuse (e.g., pre-trained language models, image generators, or scraped datasets)?
    \item[] Answer: \answerNA{}
    \item[] Justification: The paper releases only research code for training small classification and routing models on public benchmarks; it does not release datasets, generative models, or pretrained language models posing high misuse risk.
    \item[] Guidelines:
    \begin{itemize}
        \item The answer \answerNA{} means that the paper poses no such risks.
        \item Released models that have a high risk for misuse or dual-use should be released with necessary safeguards to allow for controlled use of the model, for example by requiring that users adhere to usage guidelines or restrictions to access the model or implementing safety filters. 
        \item Datasets that have been scraped from the Internet could pose safety risks. The authors should describe how they avoided releasing unsafe images.
        \item We recognize that providing effective safeguards is challenging, and many papers do not require this, but we encourage authors to take this into account and make a best faith effort.
    \end{itemize}

\item {\bf Licenses for existing assets}
    \item[] Question: Are the creators or original owners of assets (e.g., code, data, models), used in the paper, properly credited and are the license and terms of use explicitly mentioned and properly respected?
    \item[] Answer: \answerYes{}
    \item[] Justification: CIFAR-10~\citep{krizhevsky2009learning}, CIFAR-10H~\citep{peterson2019human}, and Forest CoverType are cited where used. All are redistributed under their original licenses and used in accordance with their terms of use.
    \item[] Guidelines:
    \begin{itemize}
        \item The answer \answerNA{} means that the paper does not use existing assets.
        \item The authors should cite the original paper that produced the code package or dataset.
        \item The authors should state which version of the asset is used and, if possible, include a URL.
        \item The name of the license (e.g., CC-BY 4.0) should be included for each asset.
        \item For scraped data from a particular source (e.g., website), the copyright and terms of service of that source should be provided.
        \item If assets are released, the license, copyright information, and terms of use in the package should be provided. For popular datasets, \url{paperswithcode.com/datasets} has curated licenses for some datasets. Their licensing guide can help determine the license of a dataset.
        \item For existing datasets that are re-packaged, both the original license and the license of the derived asset (if it has changed) should be provided.
        \item If this information is not available online, the authors are encouraged to reach out to the asset's creators.
    \end{itemize}

\item {\bf New assets}
    \item[] Question: Are new assets introduced in the paper well documented and is the documentation provided alongside the assets?
    \item[] Answer: \answerYes{}
    \item[] Justification: The supplementary material contains the research code implementing the decoupled surrogate and all baselines, with a README documenting usage, experiment scripts, and expected outputs.
    \item[] Guidelines:
    \begin{itemize}
        \item The answer \answerNA{} means that the paper does not release new assets.
        \item Researchers should communicate the details of the dataset\slash code\slash model as part of their submissions via structured templates. This includes details about training, license, limitations, etc. 
        \item The paper should discuss whether and how consent was obtained from people whose asset is used.
        \item At submission time, remember to anonymize your assets (if applicable). You can either create an anonymized URL or include an anonymized zip file.
    \end{itemize}

\item {\bf Crowdsourcing and research with human subjects}
    \item[] Question: For crowdsourcing experiments and research with human subjects, does the paper include the full text of instructions given to participants and screenshots, if applicable, as well as details about compensation (if any)?
    \item[] Answer: \answerNA{}
    \item[] Justification: No new crowdsourcing or human-subjects data collection was conducted; CIFAR-10H labels are used as released by \citet{peterson2019human}.
    \item[] Guidelines:
    \begin{itemize}
        \item The answer \answerNA{} means that the paper does not involve crowdsourcing nor research with human subjects.
        \item Including this information in the supplemental material is fine, but if the main contribution of the paper involves human subjects, then as much detail as possible should be included in the main paper. 
        \item According to the NeurIPS Code of Ethics, workers involved in data collection, curation, or other labor should be paid at least the minimum wage in the country of the data collector. 
    \end{itemize}

\item {\bf Institutional review board (IRB) approvals or equivalent for research with human subjects}
    \item[] Question: Does the paper describe potential risks incurred by study participants, whether such risks were disclosed to the subjects, and whether Institutional Review Board (IRB) approvals (or an equivalent approval/review based on the requirements of your country or institution) were obtained?
    \item[] Answer: \answerNA{}
    \item[] Justification: The paper does not involve research with human subjects; no IRB review is required.
    \item[] Guidelines:
    \begin{itemize}
        \item The answer \answerNA{} means that the paper does not involve crowdsourcing nor research with human subjects.
        \item Depending on the country in which research is conducted, IRB approval (or equivalent) may be required for any human subjects research. If you obtained IRB approval, you should clearly state this in the paper. 
        \item We recognize that the procedures for this may vary significantly between institutions and locations, and we expect authors to adhere to the NeurIPS Code of Ethics and the guidelines for their institution. 
        \item For initial submissions, do not include any information that would break anonymity (if applicable), such as the institution conducting the review.
    \end{itemize}

\item {\bf Declaration of LLM usage}
    \item[] Question: Does the paper describe the usage of LLMs if it is an important, original, or non-standard component of the core methods in this research? Note that if the LLM is used only for writing, editing, or formatting purposes and does \emph{not} impact the core methodology, scientific rigor, or originality of the research, declaration is not required.
    %this research?
    \item[] Answer: \answerNA{}
    \item[] Justification: LLMs were used only to polish grammar and phrasing of the manuscript; they played no role in the core methodology, proofs, or experiments.
    \item[] Guidelines:
    \begin{itemize}
        \item The answer \answerNA{} means that the core method development in this research does not involve LLMs as any important, original, or non-standard components.
        \item Please refer to our LLM policy in the NeurIPS handbook for what should or should not be described.
    \end{itemize}

\end{enumerate}

\end{document}